\def\Asym{\tilde{\boldsymbol{A}}_{\mathrm{sym}}}
\def\Ars{\tilde{\boldsymbol{A}}_{\mathrm{rs}}}
\def\Acs{\tilde{\boldsymbol{A}}_{\mathrm{cs}}}
\def\A{\tilde{\boldsymbol{A}}}
\def\mA{\mathbb{A}}
\newtheorem{proposition}{Proposition}[section]
\newtheorem*{proposition*}{Proposition}
\newtheorem*{theorem*}{Theorem}
  \providecommand\BibTeX{{%
    \normalfont B\kern-0.5em{\scshape i\kern-0.25em b}\kern-0.8em\TeX}}}
\begin{document} 
\setlist[itemize]{leftmargin=0.6cm}

\title{Heuristic Learning with Graph Neural Networks: A Unified Framework for Link Prediction}

\author{Juzheng Zhang}
\affiliation{
  \institution{Department of Electronic Engineering, \\Tsinghua University}
  \city{Beijing}
  \country{China}
}

\author{Lanning Wei}
\affiliation{
  \institution{Department of Electronic Engineering, \\Tsinghua University}
  \city{Beijing}
  \country{China}
}

\author{Zhen Xu}
\affiliation{
  \institution{Department of Electronic Engineering, \\Tsinghua University}
  \city{Beijing}
  \country{China}
}

\author{Quanming Yao}
\authornote{Correspondence is to Q. Yao at \href{mailto:qyaoaa@tsinghua.edu.cn}{qyaoaa@tsinghua.edu.cn}.}
\affiliation{
  \institution{Department of Electronic Engineering, \\Tsinghua University}
  \city{Beijing}
  \country{China}
}


\begin{abstract}
Link prediction is a fundamental task in graph learning, inherently shaped by the topology of the graph. While traditional heuristics are grounded in graph topology, they encounter challenges in generalizing across diverse graphs. Recent research efforts have aimed to leverage the potential of heuristics, yet a unified formulation accommodating both local and global heuristics remains undiscovered. Drawing insights from the fact that both local and global heuristics can be represented by adjacency matrix multiplications, we propose a unified matrix formulation to accommodate and generalize various heuristics.
We further propose the Heuristic Learning Graph Neural Network (HL-GNN) to efficiently implement the formulation. HL-GNN adopts intra-layer propagation and inter-layer connections, allowing it to reach a depth of around 20 layers with lower time complexity than GCN. 
Extensive experiments on the Planetoid, Amazon, and OGB datasets underscore the effectiveness and efficiency of HL-GNN. It outperforms existing methods by a large margin in prediction performance. Additionally, HL-GNN is several orders of magnitude faster than heuristic-inspired methods while requiring only a few trainable parameters. The case study further demonstrates that the generalized heuristics and learned weights are highly interpretable. The code is available at \url{https://github.com/LARS-research/HL-GNN}.

\end{abstract}

\begin{CCSXML}
<ccs2012>
   <concept>
       <concept_id>10003752.10003809.10003635</concept_id>
       <concept_desc>Theory of computation~Graph algorithms analysis</concept_desc>
       <concept_significance>500</concept_significance>
       </concept>
   <concept>
       <concept_id>10010147.10010257.10010293.10010294</concept_id>
       <concept_desc>Computing methodologies~Neural networks</concept_desc>
       <concept_significance>500</concept_significance>
       </concept>
 </ccs2012>
\end{CCSXML}

\ccsdesc[500]{Theory of computation~Graph algorithms analysis}
\ccsdesc[500]{Computing methodologies~Neural networks}

\keywords{Graph learning; Link prediction; Graph neural networks; Heuristic methods}



\maketitle

\section{Introduction}
Link prediction stands as a cornerstone in the domain of graph machine learning, facilitating diverse applications from knowledge graph reasoning~\cite{redgnn} to drug interaction prediction~\cite{wang2024accurate,zhang2023emerging} and recommender systems~\cite{matrix_factorization}. While its significance is unquestionable, research in this area has not reached the same depth as that for node or graph classification~\cite{dropedge,nodeformer,li2019semi,pooling}.

In graph machine learning, two fundamental sources of information play a pivotal role: node features and graph topology~\cite{GPRGNN,GCN}. 
Link prediction task is inherently driven by graph topology~\cite{survey1,SEAL,Neo-GNN}. Heuristics, which derive exclusively from graph topology, naturally align with link prediction. The appeal of heuristics lies in their simplicity and independence of learning. 
While heuristics are crafted from human intuition and insights, they can be broadly categorized into two types: local heuristics, which focus on neighboring nodes, and global heuristics, which focus on global paths~\cite{survey1}.

Effective link prediction benefits from both local and global topological information~\cite{survey2}. For instance, in a triangular network, each pair of nodes shares two common neighbors, making local heuristics effective. Conversely, in a hexagonal network, where only length-5 paths connect each node pair, global heuristics may yield better results. Hence, the adaptive integration of multi-range topological information from both local and global heuristics is essential for accurate predictions.

While heuristics prove effective in link prediction tasks, they inherently capture specific topology patterns, posing challenges in their generalization to diverse graphs~\cite{NBFNet,Neo-GNN}. Moreover, heuristics are unable to leverage node features, limiting their efficacy on attributed graphs~\cite{SEAL}.
To make heuristics more universal and general, recent research efforts have been directed toward establishing formulations for heuristics and learning heuristics from these formulations. Notable examples include SEAL~\cite{SEAL}, NBFNet~\cite{NBFNet}, and Neo-GNN~\cite{Neo-GNN}. SEAL's $\gamma$-decaying framework and NBFNet's path formulation are tailored for global heuristics, while Neo-GNN's MLP framework is tailored for local ones. 
To obtain multi-range topological information, a unified formulation that accommodates both local and global heuristics is necessary, yet it remains undiscovered.

Our motivation for constructing the unified formulation stems from the observation that both local and global heuristics can be expressed through adjacency matrix multiplications. Therefore, we unify local and global heuristics into a matrix formulation, enabling the accommodation and generalization of various local and global heuristics. 
In contrast to previous works that construct formulations based on abstract functions such as SEAL~\cite{SEAL}, NBFNet~\cite{NBFNet}, and Neo-GNN~\cite{Neo-GNN}, our unified formulation is developed through direct matrix operations. This unified formulation ensures rigorous equivalence to numerous local and global heuristics under specific configurations.

To learn generalized heuristics and acquire multi-range information, we propose the \textbf{H}euristic \textbf{L}earning \textbf{G}raph \textbf{N}eural \textbf{N}etwork (HL-GNN) to efficiently implement the formulation.
HL-GNN incorporates intra-layer propagation and inter-layer connections while excluding transformation and activation functions. This enables HL-GNN to effectively reach a depth of around 20 layers, while only requiring the training of a global GNN with a time complexity even lower than GCN.
The adaptive weights in HL-GNN facilitate the integration of multi-range topological information, and govern the trade-off between node features and topological information.

Our comprehensive experiments, conducted on the Planetoid, Amazon, and OGB datasets, confirm the effectiveness and efficiency of our proposed HL-GNN. It consistently achieves state-of-the-art performance across numerous benchmarks, maintains excellent scalability, and stands out as the most parameter-efficient method among existing GNN methods. Furthermore, it demonstrates superior speed, surpassing existing heuristic-inspired methods by several orders of magnitude. 
HL-GNN is highly interpretable, as evidenced by the generalized heuristics and learned weights on real-world datasets as well as synthetic datasets.

Our contributions can be summarized as follows:

\begin{itemize}
\item We unify local and global heuristics into a matrix formulation, facilitating the accommodation and generalization of heuristics. We demonstrate that numerous traditional heuristics align with our formulation under specific configurations.

\item We propose HL-GNN to efficiently implement the formulation, capable of reaching a depth of around 20 layers with lower time complexity than GCN. HL-GNN can adaptively balance the trade-off between node features and topological information.

\item Comprehensive experiments demonstrate that HL-GNN outperforms existing methods in terms of performance and efficiency. The interpretability of HL-GNN is highlighted through the analysis of generalized heuristics and learned weights.

\end{itemize}

\section{Related Works}
\subsection{Graph Neural Networks}
Graph Neural Networks (GNNs) have emerged as a powerful paradigm for learning node representations by exploiting neural networks to manipulate both node features and graph topology. These networks employ a message-passing mechanism, with notable examples including Graph Convolutional Network (GCN)~\cite{GCN}, GraphSAGE~\cite{graphsage}, and Graph Attention Network (GAT)~\cite{GAT}. 
Through iterative message propagation, GNNs enable each node representation to accumulate information from its neighboring nodes, thereby facilitating downstream tasks.

While GNNs have emerged as potent solutions for node and graph classification~\cite{dropedge,nodeformer,li2019semi,pooling}, they sometimes fall short in link prediction scenarios compared to traditional heuristics like Common Neighbors (CN)~\cite{CN} or the Resource Allocation Index (RA)~\cite{RA}. The primary issue lies in the inherent intertwining of node features and graph topology during the message-passing process in conventional GNNs. This entanglement causes node features to interfere with graph topology, impeding the effective extraction of topological information for link prediction tasks.

Although in principle an arbitrary number of GNN layers can be stacked, practical GNNs are usually shallow, typically consisting of 2-3 layers, as conventional GNNs often experience a sharp performance drop after just 2 or 3 layers. A widely accepted explanation for this performance degradation with increasing depth is the over-smoothing issue~\cite{over-smoothing,JKNet}, which refers to node representations becoming non-discriminative when going deep. While the adaptive integration of both local and global topological information is essential for link prediction, conventional GNNs usually cannot penetrate beyond 3 layers, restricting the extraction of global topological information.

\subsection{Link Prediction}
Link prediction predicts the likelihood of a link forming between two nodes in a graph.
The problem of link prediction has traditionally been addressed by heuristic methods. These methods are primarily concerned with quantifying the similarity between two nodes based on the graph topology. Heuristic methods can be broadly categorized into two groups: local and global~\cite{survey1,survey2}.

Local heuristics can be further divided into entirety-based heuristics and individual-based heuristics. Entirety-based heuristics, like Common Neighbors (CN)~\cite{CN} and the Local Leicht-Holme-Newman Index (LLHN)~\cite{LHN}, consider the cumulative count of common neighbors. In contrast, individual-based heuristics, exemplified by the Resource Allocation Index (RA)~\cite{RA}, focus on nodes within the common neighborhood and incorporate detailed topological information such as the degree of each node.

Global heuristics, on the other hand, leverage the entire graph topology. Methods such as the Katz Index (KI)~\cite{KI} and the Global Leicht-Holme-Newman Index (GLHN)~\cite{LHN} consider all possible paths between node pairs. The Random Walk with Restart (RWR)~\cite{PPR} assesses the similarity between two nodes based on random walk probabilities. Some global heuristics are tailored to specific path lengths, like the Local Path Index (LPI)~\cite{LP} and the Local Random Walks (LRW)~\cite{LRW}.

Traditional heuristic methods are manually designed and show limitations on complex real-world graphs, prompting a shift toward learning-based approaches. Embedding methods, including Matrix Factorization~\cite{matrix_factorization}, DeepWalk~\cite{deepwalk}, LINE~\cite{line}, and Node2vec~\cite{node2vec}, factorize network representations into low-dimensional node embeddings. However, embedding methods face limitations due to their inability to leverage node features on attributed graphs. 

Recent advancements have focused on enhancing GNNs with valuable topological information. Subgraph GNNs like SEAL~\cite{SEAL}, GraIL~\cite{grail}, and SUREL~\cite{SUREL} explicitly encode subgraphs around node pairs. However, they require the running of a subgraph GNN with the labeling trick for each link during training and inference.
Taking a different perspective, models like NBFNet~\cite{NBFNet} and RED-GNN~\cite{redgnn} adopt source-specific message passing, drawing inspiration from global heuristics. However, they require training a global GNN for each source node.
Some methods opt for a single global GNN to improve scalability and efficiency. Neo-GNN~\cite{Neo-GNN} uses two MLPs, while SEG~\cite{SEG} uses a GCN layer and an MLP to approximate a heuristic function. BUDDY~\cite{BUDDY} develops a novel GNN that passes subgraph sketches as messages. However, these methods primarily focus on local topological information and struggle to capture global topological information.
In contrast, the proposed HL-GNN can capture long-range information up to 20 hops while only requiring the training of a global GNN. Further details about the comparison of HL-GNN with existing methods are provided in Section~\ref{para:comparison}.

\section{Unified Heuristic Formulation}

Let $\mathcal{G} = (\mathcal{V}, \mathcal{E})$ denote a graph, with nodes $\mathcal{V}$ and edges $\mathcal{E}$. In this work, we consider undirected and unweighted graphs. We define $|\mathcal{V}|=N$ as the number of nodes and $|\mathcal{E}|=M$ as the number of edges. 
Node features are characterized by the node feature matrix $\boldsymbol{X} \in \mathbb{R}^{N \times F}$, where $F$ indicates the number of features. 
The graph topology is encapsulated by the adjacency matrix $\boldsymbol{A} \in {\{0,1\}}^{N \times N}$. The matrix $\tilde{\boldsymbol{A}}=\boldsymbol{A}+\boldsymbol{I}_N$ represents the adjacency matrix with self-loops, where $\tilde{a}_{ij}=1$ signifies an edge between nodes $i$ and $j$. The node degree $i$ with self-loops is given by $\tilde{d}_i=\sum_j \tilde{a}_{ij}$, with the diagonal degree matrix with self-loops denoted as $\tilde{\boldsymbol{D}}=\operatorname{diag}(\tilde{d}_1, \cdots, \tilde{d}_N)$. The set $\Gamma_x$ represents the 1-hop neighbors of node $x$, encompassing node $x$ itself.

We introduce a set of normalized adjacency matrices, detailed in Table~\ref{tab:adjacency_matrices}. This set comprises the symmetrically normalized matrix $\Asym$, the row-stochastic normalized matrix $\Ars$, and the column-stochastic normalized matrix $\Acs$, which encompass diverse normalization techniques (left multiplication, right multiplication, or both) applied to the adjacency matrix.
Next, we define the propagation operator $\mathbb{A}$ to offer a choice among different types of adjacency matrices:
\begin{definition}
    (\textit{Propagation operator}). The \textit{propagation operator} $\mathbb{A} \in \mathbb{R}^{N \times N}$ is defined as $\mathbb{A} \in \{\A, \Asym, \Ars, \Acs\}$. The expressions for the adjacency matrices $\A, \Asym, \Ars, \Acs$ are detailed in Table~\ref{tab:adjacency_matrices}.
\end{definition}

The propagation operator encapsulates the prevalent propagation mechanisms commonly employed in GNNs.
By substituting the adjacency matrix $\A$ with the propagation operator $\mA$, we can opt for various propagation mechanisms that deliver diverse information.

\begin{table}[tbp]
    \small
    \caption{Notations and expressions of adjacency matrices.}
    \label{tab:adjacency_matrices}
    \centering
    \vspace{-5px}
    \setlength{\tabcolsep}{8pt}
    \begin{tabular}{lcc} 
         \toprule
         \textbf{Adjacency Matrix} & \textbf{Notation} & \textbf{Expression} \\
         \midrule
         Matrix with Self-Loops & $\tilde{\boldsymbol{A}}$ & $\boldsymbol{A} + \boldsymbol{I}_N$ \\
         Symmetrical Matrix & $\Asym$ & $\tilde{\boldsymbol{D}}^{-1/2} \tilde{\boldsymbol{A}} \tilde{\boldsymbol{D}}^{-1/2}$ \\
         Row-Stochastic Matrix & $\Ars$ & $\tilde{\boldsymbol{D}}^{-1} \tilde{\boldsymbol{A}}$ \\
         Column-Stochastic Matrix & $\Acs$ & $\tilde{\boldsymbol{A}} \tilde{\boldsymbol{D}}^{-1}$ \\
         Propagation Operator & $\mathbb{A}$ & $\{\A, \Asym, \Ars, \Acs\}$ \\
         \bottomrule
    \end{tabular}
\end{table}



\begin{table*}[tbp]
\caption{A selection of traditional local and global heuristics, their mathematical expressions, their matrix forms, and specific configurations within the unified heuristic formulation for alignment.}
\label{tab:heursitics}
\centering
\small
\begin{threeparttable}
\begin{tabular}{llcccc}
    \toprule
    \textbf{Type} & \textbf{Method} & \textbf{Expression}  &\textbf{Matrix Form}& \textbf{Propagation Operators $\mA^{(m)}$} & \textbf{Weight Parameters} $\beta^{(l)}$ \\
    \midrule
    Local & CN & $|\Gamma_i \cap \Gamma_j|$  &$(\A^2)_{i, j}$& $\mA^{(1)}=\mA^{(2)} = \A$ & $\beta^{(0)}= \beta^{(1)} = 0$, $\beta^{(2)} = 1$ \\
    Local & LLHN & $\frac{|\Gamma_i \cap \Gamma_j|}{\tilde{d}_i \tilde{d}_j}$  &$(\Ars \Acs)_{i, j}$& \(\mA^{(1)} = \Ars\), \(\mA^{(2)} = \Acs\) & \(\beta^{(0)}=\beta^{(1)} = 0\), \(\beta^{(2)} = 1\) \\
    Local & RA & $\sum_{k \in \Gamma_i \cap \Gamma_j} \frac{1}{\tilde{d}_k}$  &$(\Acs \A)_{i, j}$& \(\mA^{(1)} = \Acs\), \(\mA^{(2)} = \A\) \tnote{*}  & \(\beta^{(0)}=\beta^{(1)} = 0\), \(\beta^{(2)} = 1\) \\
    Global& KI & $\sum_{l=1}^{\infty} \gamma^l |{\textnormal{paths}}_{i,j}^l|$  &$\left(\sum_{l=1}^{\infty} \gamma^l\A^l\right)_{i,j}$ & $\mA^{(m)} = \A$ for $m \geq 1$ & $\beta^{(0)} = 0$, $ \beta^{(l)}=\gamma^l$ for $l \geq 1$ \\
    Global& GLHN & $\sum_{l=0}^{\infty} \phi^l |{\textnormal{paths}}_{i,j}^l|$  &$\left(\boldsymbol{I}_N+\sum_{l=1}^{\infty} \phi^l \A^l \right)_{i,j}$& $\mA^{(m)} = \A$ for $m \geq 1$ & $\beta^{(0)} = 1$, $ \beta^{(l)}=\phi^l$ for $l \geq 1$ \\
    Global& RWR & $[\boldsymbol{\pi}_i(\infty)]_j$  &$\left(\sum_{l=0}^{\infty}(1-\alpha)\alpha^l \Ars^l\right)_{i,j}$& $\mA^{(m)} = \Ars$ for $m \geq 1$ & $\beta^{(l)}=(1-\alpha)\alpha^l$ for $l \geq 0$ \\
    Global & LPI & $\sum_{l=2}^L \gamma^{l-2} |{\textnormal{paths}}_{i,j}^l|$  &$\left(\sum_{l=2}^L \gamma^{l-2} \A^l\right)_{i,j}$& $\mA^{(m)} = \A$ for $1 \leq m \leq L$ & $\beta^{(0)} = \beta^{(1)} =0$, $ \beta^{(l)}=\gamma^{l-2}$ for $2 \leq l \leq L$ \\
    Global & LRW & $\frac{\tilde{d}_i}{2M}[\boldsymbol{\pi}_i(L)]_j$  &$\left(\sum_{l=0}^{L-1}\frac{\tilde{d}_i}{2M}(1-\alpha)\alpha^l \Ars^l\right)_{i,j}$& $\mA^{(m)} = \Ars$ for $1 \leq m \leq L-1$ & $\beta^{(l)}=\frac{\tilde{d}_i}{2M} (1-\alpha)\alpha^l$ for $0 \leq l \leq L-1$ \\
    \bottomrule
\end{tabular}
\begin{tablenotes}
    \footnotesize
    \item[*] When setting \(\mA^{(1)} = \A\), \(\mA^{(2)} = \Ars\) in the formulation, it also aligns with the RA Index.
  \end{tablenotes}
\end{threeparttable}
\end{table*}

Given that heuristics are fundamentally influenced by graph topology, it is possible to express various heuristics using adjacency matrices. 
The $(i,j)$ entry of the 2-order adjacency matrix multiplication denotes the count of common neighbors for nodes $i$ and $j$. The $(i,j)$ entry of the $l$-order adjacency matrix multiplications denotes the number of length-$l$ paths between nodes $i$ and $j$. Hence, by employing distinct orders of adjacency matrix multiplications, we can extract varying insights from neighbors or paths.
Following this intuition, we can express diverse heuristics in matrix form.
We provide a concise summary of heuristics, their mathematical expressions, and their corresponding matrix forms in Table~\ref{tab:heursitics}.
Detailed derivations of matrix forms of heuristics can be found in Appendix~\ref{para:heuristic_proofs}.  
Next, we introduce the definition of the heuristic formulation:
\begin{definition}
  (\textit{Heuristic formulation}). A \textit{heuristic formulation} is denoted by a matrix $\boldsymbol{H} \in \mathbb{R}^{N \times N}$. Each entry $(i, j)$ in this matrix corresponds to the heuristic score for the link $(i, j)$, denoted as $\boldsymbol{H}_{i, j}=h(i, j)$.
\end{definition}

We can unify both local and global heuristics in a formulation based on matrix forms of heuristics. Our proposed heuristic formulation parameterizes a combination of matrix multiplications:
\begin{equation}
\label{eq:formulation}
\boldsymbol{H} 
= \sum_{l=0}^{L} \left(\beta^{(l)} \prod_{m=0}^l \mathbb{A}^{(m)} \right),
\end{equation}
where $\mathbb{A}^{(m)} \in \{\A, \Asym, \Ars, \Acs\}$ for $1 \leq m \leq L$ represent the propagation operators, and $\mathbb{A}^{(0)} = \boldsymbol{I}_N$. The coefficients $\beta^{(l)}$ for $0 \leq l \leq L$ modulate the weights of different orders of matrix multiplications, and $L$ is the maximum order. 
Numerous traditional heuristics align with our formulation under specific configurations.
Table~\ref{tab:heursitics} showcases a selection of traditional heuristics and illustrates their alignment with our formulation through propagation operators $\mathbb{A}^{(m)}$ and weights $\beta^{(l)}$. 
We assert the formulation's ability to accommodate heuristics in Proposition~\ref{prop:formulation-heuristic}. The proof for Proposition~\ref{prop:formulation-heuristic} can be found in Appendix~\ref{para:heuristic_proofs}.

\begin{proposition}
\label{prop:formulation-heuristic}
    Our formulation can accommodate a broad spectrum of local and global heuristics with propagation operators $\mathbb{A}^{(m)}$ for $1 \leq m \leq L$, weight parameters $\beta^{(l)}$ for $0 \leq l \leq L$, and maximum order $L$. 
\end{proposition}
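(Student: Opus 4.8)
The plan is to treat Proposition~\ref{prop:formulation-heuristic} as a case-by-case verification: for each heuristic listed in Table~\ref{tab:heursitics} I would substitute the prescribed propagation operators $\mA^{(m)}$ and weight parameters $\beta^{(l)}$ into the formulation~\eqref{eq:formulation} and check that the resulting matrix coincides entry-wise with the heuristic's matrix form, whose equivalence to the scalar expression is the statement actually requiring proof. The single foundational fact underlying every case is the walk-counting identity: the $(i,j)$ entry of $\A^{l}$ equals the number of length-$l$ walks from $i$ to $j$ (denoted $|\mathrm{paths}_{i,j}^{l}|$ in the table), which specializes at $l=2$ to $(\A^{2})_{i,j}=|\Gamma_i\cap\Gamma_j|$ since a length-2 walk $i\to k\to j$ exists exactly when $k$ is a common neighbor. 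I would prove this identity once by induction on $l$ and reuse it throughout.

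For the local heuristics the verification is a direct entry-wise computation. For CN the chosen weights collapse~\eqref{eq:formulation} to $\A^{2}$, and the walk-counting identity finishes it. For LLHN and RA I would expand the matrix product using the explicit entries $(\Ars)_{ik}=\tilde{a}_{ik}/\tilde{d}_i$ and $(\Acs)_{kj}=\tilde{a}_{kj}/\tilde{d}_j$; for instance
\begin{equation*}
(\Ars\Acs)_{i,j}=\sum_{k}\frac{\tilde{a}_{ik}}{\tilde{d}_i}\cdot\frac{\tilde{a}_{kj}}{\tilde{d}_j}=\frac{1}{\tilde{d}_i\tilde{d}_j}\sum_{k}\tilde{a}_{ik}\tilde{a}_{kj}=\frac{|\Gamma_i\cap\Gamma_j|}{\tilde{d}_i\tilde{d}_j},
\end{equation*}
and analogously $(\Acs\A)_{i,j}=\sum_{k}(\tilde{a}_{ik}/\tilde{d}_k)\,\tilde{a}_{kj}=\sum_{k\in\Gamma_i\cap\Gamma_j}1/\tilde{d}_k$, matching RA; the footnoted variant follows from the same expansion with the normalizations swapped.

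For the path-based global heuristics (KI, GLHN, LPI), choosing $\mA^{(m)}=\A$ for all $m$ makes the product $\prod_{m=0}^{l}\mA^{(m)}$ equal to $\A^{l}$ (with the $l=0$ term giving $\beta^{(0)}\boldsymbol{I}_N$), so~\eqref{eq:formulation} becomes $\sum_{l}\beta^{(l)}\A^{l}$; substituting the stated weights ($\gamma^{l}$, $\phi^{l}$ together with the $\boldsymbol{I}_N$ term, or the truncated $\gamma^{l-2}$) reproduces each series, and the walk-counting identity turns $(\A^{l})_{i,j}$ into $|\mathrm{paths}_{i,j}^{l}|$. The main obstacle lies in the random-walk heuristics RWR and LRW, whose scalar definitions are stated through a probability vector $\boldsymbol{\pi}_i$ rather than a matrix. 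Here I would first set up the restart recurrence $\boldsymbol{\pi}_i(t{+}1)=(1-\alpha)\boldsymbol{e}_i^{\top}+\alpha\,\boldsymbol{\pi}_i(t)\Ars$, with $\boldsymbol{\pi}_i$ a row distribution and $\boldsymbol{e}_i$ the source indicator, unroll it to $\boldsymbol{\pi}_i(t)=(1-\alpha)\sum_{l=0}^{t}\alpha^{l}\boldsymbol{e}_i^{\top}\Ars^{l}$, and read off the $j$-th coordinate as the $(i,j)$ entry of $\sum_{l}(1-\alpha)\alpha^{l}\Ars^{l}$. For RWR this requires justifying convergence as $t\to\infty$, which follows from $\alpha\in(0,1)$ and the row-stochasticity of $\Ars$ (spectral radius $1$), so the geometric series converges; for LRW the sum is finite, so no convergence argument is needed, but one must carry the degree prefactor $\tilde{d}_i/2M$ through the truncation to length $L-1$.

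Finally, since every heuristic in Table~\ref{tab:heursitics} is recovered by an explicit choice of $\mA^{(m)}$, $\beta^{(l)}$, and $L$, the formulation accommodates the full listed spectrum of local and global heuristics, which establishes the proposition. The only genuinely delicate points are the indexing conventions for the random walks and the convergence of the infinite KI, GLHN, and RWR series (the former two also requiring the decay factor to lie below the reciprocal spectral radius of $\A$); everything else is bookkeeping driven by the walk-counting identity.
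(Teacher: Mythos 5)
Your proposal matches the paper's own proof in both structure and substance: the paper likewise proves Proposition~\ref{prop:formulation-heuristic} by a case-by-case sequence of lemmas, one per heuristic in Table~\ref{tab:heursitics}, each verified by the same entry-wise expansions (e.g.\ $(\Ars\Acs)_{i,j}=\sum_k \frac{\tilde a_{ik}}{\tilde d_i}\frac{\tilde a_{kj}}{\tilde d_j}$ for LLHN) and the same unrolling of the restart recurrence for RWR and LRW (the paper works with $\Acs$ as a column-stochastic transition matrix and transposes at the end, which is equivalent to your row-vector convention). Your added remarks on convergence of the infinite KI, GLHN, and RWR series go slightly beyond what the paper writes down, but otherwise the two arguments are the same.
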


Unlike previous methods that exclusively cater to either local or global heuristics, our formulation seamlessly integrates both aspects, presenting a unified solution. In contrast to prior works relying on abstract functions for heuristic approximation~\cite{SEAL, NBFNet, Neo-GNN, SEG}, our formulation is developed through direct matrix operations. This formulation facilitates rigorous equivalence to numerous  local and global heuristics under specific configurations.
It is crucial to note that our heuristic formulation does not aim to accommodate all possible heuristics. 
Instead, it aims to distill the critical characteristics of heuristics, 
with a specific focus on extracting common neighbors from local heuristics and global paths from global heuristics.

Existing heuristics are primarily handcrafted and may not be optimal for real-world graphs.
Leveraging the propagation operators, weight parameters and maximum order offers the potential to learn generalized, possibly more effective heuristics, which we will discuss in Section~\ref{para:interpret} and Appendix~\ref{para:heuristic_advance}.

\section{Heuristic Learning Graph Neural Network (HL-GNN)}

\subsection{Heuristic Learning Graph Neural Network}
\subsubsection{Motivation}
Direct matrix multiplication serves as a straightforward method to implement the heuristic formulation in Equation~\eqref{eq:formulation}.
However, it comes with high computational and memory costs. 
The time complexity of direct matrix multiplication is $\mathcal{O}(L^2 N^3)$ and the space complexity is $\mathcal{O}(LN^2)$, where $N$ denotes the number of nodes. This is attributed to executing up to $L$-order matrix multiplications for $L$ times.
The significant time and space complexities present two major challenges of ensuring scalability and maintaining depth:
\begin{itemize}[leftmargin=*]
\item \textit{Scalability.} To be scalable, the model must effectively handle large graphs.
Datasets like OGB are substantially larger than those like Planetoid, making the value of $N$ a considerable strain on the time and space complexities.
\item  \textit{Depth.} To effectively integrate global heuristics into the heuristic formulation, the value of $L$ must be sufficiently large to encapsulate messages from distant nodes. However, increasing $L$ further strains the time and space complexities.
\end{itemize}
Consequently, there is a pressing need to mitigate the burdens of both time and space complexities.

\subsubsection{Architecture}

The construction and computation of $N \times N$ matrices impose significant computational and memory demands.
One potential technique is initializing $\mathbb{A}^{(0)} = \boldsymbol{X}$ instead of $\mathbb{A}^{(0)} = \boldsymbol{I}_N$.
This approach effectively reduces the feature dimensionality from $N$ to $F$, resulting in substantial time and space savings. Moreover, since heuristics cannot leverage node features on attributed graphs, initializing with $\mathbb{A}^{(0)} = \boldsymbol{X}$ allows the heuristic formulation to utilize node features.
Even if node features are of low quality or completely absent, we can still train embeddings for each node. Therefore, $\boldsymbol{X}$ can represent either raw node features or learnable node embeddings.

Further, we exploit the sparsity of the graph and employ a Graph Neural Network to compute the heuristic formulation. We propose the efficient and scalable \textbf{H}euristic \textbf{L}earning \textbf{G}raph \textbf{N}eural \textbf{N}etwork (HL-GNN), expressed as:
\begin{equation}
\label{eq:HLGNN}
\boldsymbol{Z}^{(0)}=\boldsymbol{X},
\quad 
\boldsymbol{Z}^{(l)}=\mA^{(l)} \boldsymbol{Z}^{(l-1)},  
\quad 
\boldsymbol{Z} = \sum_{l=0}^L \beta^{(l)} \boldsymbol{Z}^{(l)},
\end{equation}
with $\beta^{(l)}$ representing the learnable weight of the $l$-th layer, and $L$ representing the model's depth.
An illustration of our proposed HL-GNN is provided in Figure~\ref{fig:HL-GNN}.
We do not impose constraints on $\beta^{(l)}$, allowing them to take positive or negative values.
Adaptive weights $\beta^{(l)}$ facilitate the integration of multi-range topological information and govern the trade-off between node features and topological information.
Given the discrete nature of the propagation operators $\mA^{(l)} \in \{\A, \Asym, \Ars, \Acs\}$, 
they obstruct the back-propagation process, necessitating their relaxation to a continuous form:
\begin{equation}
\label{eq:relaxation}
\mA^{(l)}= \alpha_1^{(l)} \Ars+\alpha_2^{(l)} \Acs+\alpha_3^{(l)} \Asym, \quad \text{for} \quad 1 \leq l \leq L,
\end{equation}
where $\alpha_1^{(l)}, \alpha_2^{(l)}, \alpha_3^{(l)}$ are layer-specific learnable weights harmonizing three propagation mechanisms. 
The continuous relaxation of the propagation operators enables gradient back-propagation, thereby allowing the model to be trained end-to-end.
We exclude the adjacency matrix $\A$ in Equation~\eqref{eq:relaxation} to ensure that the eigenvalues of $\mA^{(l)}$ fall within the range $[0, 1]$. Moreover, we apply a softmax function to $\boldsymbol{\alpha}^{(l)}$, where $\operatorname{softmax}(\alpha_i^{(l)})=\exp(\alpha_i^{(l)}) /\sum_{j=1}^3 \exp(\alpha_j^{(l)})$ for $i=1, 2, 3$.
Controlling the eigenvalues of $\mA^{(l)}$ helps prevent numerical instabilities as well as issues related to exploding gradients or vanishing gradients.

HL-GNN employs intra-layer propagation and inter-layer connections as described in Equation~\eqref{eq:HLGNN}. 
The salient trait is its elimination of representation transformation and non-linear activation at each layer, requiring only a few trainable parameters.
We assert the relationship between the learned representations $\boldsymbol{Z}$ and the heuristic formulation $\boldsymbol{H}$ in Proposition~\ref{prop:relationship}. The proof for Proposition~\ref{prop:relationship} can
be found in Appendix~\ref{para:relationship_proof}.

\begin{proposition}
\label{prop:relationship}
    The relationship between the learned representations $\boldsymbol{Z}$ in Equation~\eqref{eq:HLGNN} and the heuristic formulation $\boldsymbol{H}$ in Equation~\eqref{eq:formulation} is given by $\boldsymbol{Z} = \boldsymbol{H}\boldsymbol{X}$, where $\boldsymbol{X}$ is the node feature matrix.
\end{proposition}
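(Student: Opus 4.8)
The plan is to prove $\boldsymbol{Z} = \boldsymbol{H}\boldsymbol{X}$ by unrolling the layerwise recursion of Equation~\eqref{eq:HLGNN} into an explicit closed form and then matching it term by term against the definition of $\boldsymbol{H}$ in Equation~\eqref{eq:formulation}. The crucial structural observation is that the per-layer update $\boldsymbol{Z}^{(l)} = \mA^{(l)}\boldsymbol{Z}^{(l-1)}$ is a pure left-multiplication by a propagation operator carrying no feature transformation or nonlinear activation; consequently each representation $\boldsymbol{Z}^{(l)}$ is simply an ordered product of propagation operators applied to the input features $\boldsymbol{X}$, and the final aggregation is a weighted sum of these products.

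First I would establish, by induction on $l$, the closed form $\boldsymbol{Z}^{(l)} = \bigl(\prod_{m=0}^{l}\mA^{(m)}\bigr)\boldsymbol{X}$ with the product ordered so that the highest-index operator stands leftmost and $\mA^{(0)} = \boldsymbol{I}_N$. The base case is immediate, since $\boldsymbol{Z}^{(0)} = \boldsymbol{X} = \boldsymbol{I}_N\boldsymbol{X} = \mA^{(0)}\boldsymbol{X}$. For the inductive step I would substitute the hypothesis for $\boldsymbol{Z}^{(l-1)}$ into the update rule, obtaining $\boldsymbol{Z}^{(l)} = \mA^{(l)}\bigl(\prod_{m=0}^{l-1}\mA^{(m)}\bigr)\boldsymbol{X} = \bigl(\prod_{m=0}^{l}\mA^{(m)}\bigr)\boldsymbol{X}$, which closes the induction.

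With this closed form in hand, I would insert it into the aggregation step $\boldsymbol{Z} = \sum_{l=0}^{L}\beta^{(l)}\boldsymbol{Z}^{(l)}$ and factor the common right factor $\boldsymbol{X}$ out of the sum, giving $\boldsymbol{Z} = \bigl(\sum_{l=0}^{L}\beta^{(l)}\prod_{m=0}^{l}\mA^{(m)}\bigr)\boldsymbol{X}$. The parenthesized matrix is exactly the heuristic formulation $\boldsymbol{H}$ of Equation~\eqref{eq:formulation}, so $\boldsymbol{Z} = \boldsymbol{H}\boldsymbol{X}$, completing the argument. Because both the relaxation of $\mA^{(l)}$ in Equation~\eqref{eq:relaxation} and the weights $\beta^{(l)}$ are absorbed identically into $\boldsymbol{H}$, the identity holds for the trained model as well as for any fixed configuration.

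The one point that genuinely demands care — and the main obstacle, modest as it is — is the non-commutativity of the propagation operators. Since each layer left-multiplies by $\mA^{(l)}$, the accumulated product is ordered as $\mA^{(l)}\mA^{(l-1)}\cdots\mA^{(1)}$ with the highest index leftmost, so I must ensure this orientation is precisely the one intended by $\prod_{m=0}^{l}\mA^{(m)}$ in Equation~\eqref{eq:formulation}; when distinct operators are mixed, as in the local heuristics of Table~\ref{tab:heursitics}, reversing the order of the factors would change the matrix. The only freedom is that $\mA^{(0)} = \boldsymbol{I}_N$ commutes with everything, so its placement in the product is immaterial and introduces no further subtlety.
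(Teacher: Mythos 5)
Your proof is correct and follows essentially the same route as the paper's: unroll the recursion $\boldsymbol{Z}^{(l)} = \mA^{(l)}\boldsymbol{Z}^{(l-1)}$ into $\boldsymbol{Z}^{(l)} = \bigl(\prod_{m=0}^{l}\mA^{(m)}\bigr)\boldsymbol{X}$, substitute into the weighted sum, and factor out $\boldsymbol{X}$ to recover $\boldsymbol{H}$. Your explicit attention to the ordering of the non-commuting propagation operators is a welcome extra precision that the paper's proof leaves implicit, but it does not change the argument.
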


According to Proposition~\ref{prop:relationship}, the learned representations $\boldsymbol{Z}$ utilize heuristics as weights to combine features from all nodes. 
The heuristic formulation $\boldsymbol{H}$ can be effectively distilled through the message-passing process in HL-GNN. Consequently, HL-GNN has the ability to accommodate and generalize both local and global heuristics.
Our method can be viewed as topological augmentation, employing the topological information embedded in $\boldsymbol{H}$ to enhance raw node features $\boldsymbol{X}$.

For sparse graphs, the time complexity of HL-GNN is $\mathcal{O}(LMF)$, where $M$ is the number of edges. The space complexity of HL-GNN is $\mathcal{O}(NF)$. On a large graph, typically containing millions of nodes, HL-GNN leads to remarkable time and space savings -- ten and five orders of magnitude, respectively -- compared to direct matrix multiplication.

\begin{figure}[tbp]
    \centering
    \includegraphics[width=0.47\textwidth]{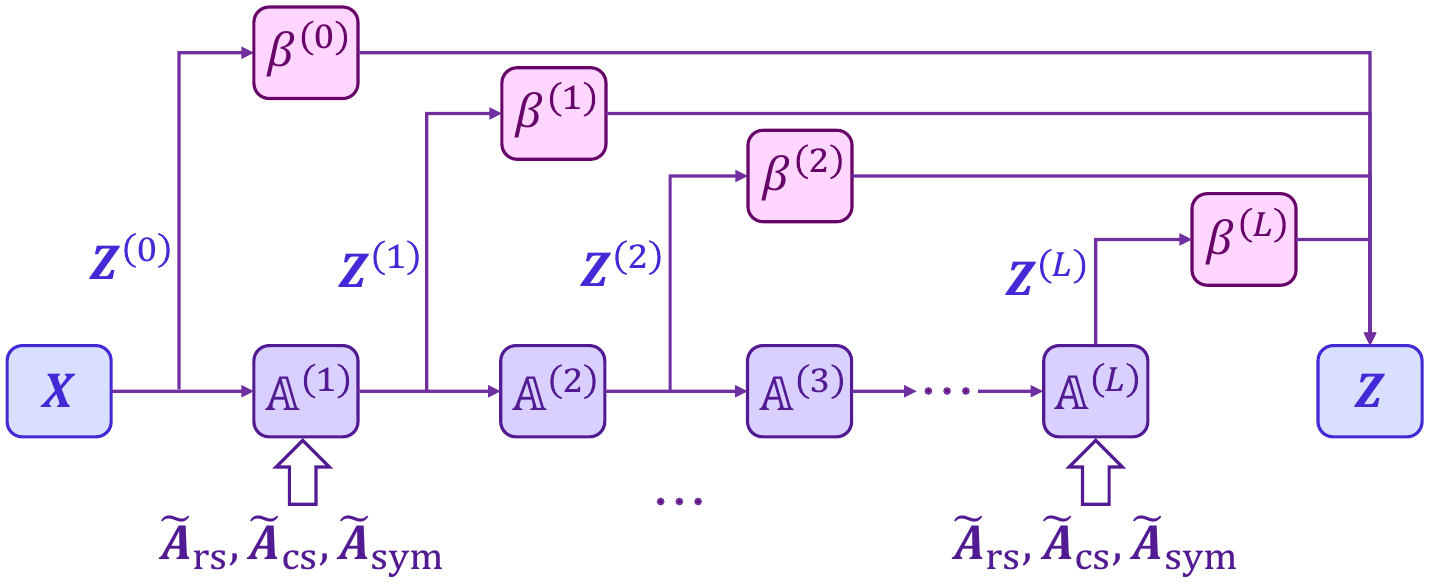}
    \caption{Illustration of the proposed Heuristic Learning Graph Neural Network (HL-GNN). Every rounded rectangle symbolizes a left multiplication operation.}
    \label{fig:HL-GNN}
\end{figure}

\subsubsection{Training}
After acquiring the node representations, 
we employ a predictor to compute the likelihood for each link by 
$s_{i j} = f_\theta(\boldsymbol{z}_i \odot \boldsymbol{z}_j)$,
where $f_\theta$ is a feed-forward neural network, $\boldsymbol{z}_i$ and $\boldsymbol{z}_j$ represent the representations of node $i$ and $j$ respectively, 
and the symbol \(\odot\) denotes the element-wise product.

Many methods categorize link prediction as a binary classification problem and conventionally employ the cross-entropy loss function. However, this might not always be the suitable strategy. Standard evaluation procedures in link prediction do not label positive pairs as 1 and negative pairs as 0. The primary objective is to rank positive pairs higher than negative pairs, aligning with the maximization of the Area Under the Curve (AUC). 
In light of this, we adopt the AUC loss as described in~\cite{AUC_loss}, ensuring it aligns conceptually with the evaluation procedure:
\begin{equation}
\mathcal{L} = \min_{\alpha, \beta, \theta} 
\sum_{ (i, j) \in \mathcal{E} } \sum_{ (i, k) \in \mathcal{E}^{-} } 
\gamma_{i j} \left(\max(0, \gamma_{i j} - s_{ij} + s_{ik})\right)^2.
\end{equation}
Here, \(\mathcal{E}^{-}\) signifies the negative links uniformly sampling from the set $\mathcal{V} \times \mathcal{V}-\mathcal{E}$, 
and \(\gamma_{ij}\) is an adaptive margin between positive link $(i, j)$ and negative link $(i, k)$. 
The model is trained end-to-end, jointly optimizing the GNN parameters $\alpha$ and $\beta$, along with the predictor parameters $\theta$.

\subsection{Comparison with Existing Methods}
\label{para:comparison} 
We evaluate the heuristic-learning ability, information range, and time complexity of HL-GNN by comparing it with conventional GNNs and heuristic-inspired GNN methods. A summary of these comparisons is provided in Table~\ref{tab:comparison}.
HL-GNN excels at accommodating and generalizing a wide range of both local and global heuristics.
In contrast, SEAL~\cite{SEAL} focuses on subgraphs to learn local heuristics, while NBFNet~\cite{NBFNet} concentrates on paths to learn global heuristics. 
Neo-GNN~\cite{Neo-GNN} leverages two MLPs for local heuristic learning, and BUDDY~\cite{BUDDY} uses subgraph sketches to represent local heuristics. Notably, most of these methods are limited to topological information within a 3-hop range. In contrast, HL-GNN can reach a depth of approximately 20 layers, providing a broader information range. Adaptive weights in HL-GNN enable the integration of both local and global topological information.

HL-GNN has a time complexity of $\mathcal{O}(LMF)$, which is the lowest among the compared methods. Unlike conventional GNNs, HL-GNN solely utilizes propagation mechanisms and omits transformation and activation functions.
SEAL requires running a subgraph GNN with the labeling trick for each link, and NBFNet requires running a global GNN for each source node during training and inference. In contrast, HL-GNN only requires running a single global GNN during training and inference. Furthermore, HL-GNN avoids the need to extract topological information from common neighbors and subgraph sketches, as required by Neo-GNN and BUDDY, respectively. A detailed time complexity analysis of each method is included in Appendix~\ref{para:time_comp_analysis}.

\begin{table}[tbp]
    \caption{Comparison of heuristic-learning ability, information range, and time complexity of HL-GNN with conventional GNNs and heuristic-inspired GNN methods.}
    \label{tab:comparison}
    \centering
    \small
    \begin{tabular}{lccc}
    \toprule
         & \textbf{Learned Heuristics}& \textbf{Range}&\textbf{Time Complexity}\\
    \midrule
         \textbf{GCN}&N.A.&3 hops&$\mathcal{O}(LF(M+NF))$\\
         \textbf{GAT}&N.A.&3 hops&$\mathcal{O}(LKND^2F^2)$\\
         \textbf{SEAL}&Local&3 hops&$\mathcal{O}(M(V^2 + LEF))$\\
         \textbf{NBFNet}&Global&6 hops&$\mathcal{O}(LNF(M+NF))$\\
         \textbf{Neo-GNN}&Local&2 hops&$\mathcal{O}(LMF+NDF^2)$\\
         \textbf{BUDDY}& Local&3 hops&$\mathcal{O}(LM(LH+F^2))$\\
    \midrule
         \textbf{HL-GNN}&Local / Global&20 hops&$\mathcal{O}(LMF)$\\
    \bottomrule
    \end{tabular}
\end{table}

\section{Experiments}
\begin{table*}[tbp]
    \caption{Results on link prediction benchmarks including the Planetoid, Amazon, and OGB datasets. Results are presented as average $\pm$ standard deviation. The best and second-best performances are marked with \textbf{bold} and \underline{underline}, respectively. OOM denotes out of GPU memory.}
    \label{tab:main_result}
    \centering
    \begin{tabular}{lccccccccc}
    \toprule
    & \texttt{Cora} & \texttt{Citeseer} & \texttt{Pubmed} & \texttt{Photo} & \texttt{Computers} & \texttt{collab} & \texttt{ddi} & \texttt{ppa} & \texttt{citation2} \\
    & Hits@100 & Hits@100 & Hits@100 & AUC & AUC & Hits@50 & Hits@20 & Hits@100 & MRR \\
    \midrule
    \textbf{CN} & 33.92\footnotesize{$\pm$0.46} & 29.79\footnotesize{$\pm$0.90} & 23.13\footnotesize{$\pm$0.15} & 96.73\footnotesize{$\pm$0.00} & 96.15\footnotesize{$\pm$0.00} & 56.44\footnotesize{$\pm$0.00} & 17.73\footnotesize{$\pm$0.00} & 27.65\footnotesize{$\pm$0.00} & 51.47\footnotesize{$\pm$0.00} \\
    \textbf{RA} & 41.07\footnotesize{$\pm$0.48} & 33.56\footnotesize{$\pm$0.17} & 27.03\footnotesize{$\pm$0.35} & 97.20\footnotesize{$\pm$0.00} & 96.82\footnotesize{$\pm$0.00} & 64.00\footnotesize{$\pm$0.00} & 27.60\footnotesize{$\pm$0.00} & 49.33\footnotesize{$\pm$0.00} & 51.98\footnotesize{$\pm$0.00} \\
    \textbf{KI} & 42.34\footnotesize{$\pm$0.39} & 35.62\footnotesize{$\pm$0.33} & 30.91\footnotesize{$\pm$0.69} & 97.45\footnotesize{$\pm$0.00} & 97.05\footnotesize{$\pm$0.00} & 59.79\footnotesize{$\pm$0.00} & 21.23\footnotesize{$\pm$0.00} & 24.31\footnotesize{$\pm$0.00} & 47.83\footnotesize{$\pm$0.00} \\
    \textbf{RWR} & 42.57\footnotesize{$\pm$0.56} & 36.78\footnotesize{$\pm$0.58} & 29.77\footnotesize{$\pm$0.45} & 97.51\footnotesize{$\pm$0.00} & 96.98\footnotesize{$\pm$0.00} & 60.06\footnotesize{$\pm$0.00} & 22.01\footnotesize{$\pm$0.00} & 22.16\footnotesize{$\pm$0.00} & 45.76\footnotesize{$\pm$0.00} \\
    \midrule
    \textbf{MF} & 64.67\footnotesize{$\pm$1.43} & 65.19\footnotesize{$\pm$1.47} & 46.94\footnotesize{$\pm$1.27} & 97.92\footnotesize{$\pm$0.37} & 97.56\footnotesize{$\pm$0.66} & 38.86\footnotesize{$\pm$0.29} & 13.68\footnotesize{$\pm$4.75} & 32.29\footnotesize{$\pm$0.94} & 51.86\footnotesize{$\pm$4.43} \\
    \textbf{Node2vec} & 68.43\footnotesize{$\pm$2.65} & 69.34\footnotesize{$\pm$3.04} & 51.88\footnotesize{$\pm$1.55} & 98.37\footnotesize{$\pm$0.33} & 98.21\footnotesize{$\pm$0.39} & 48.88\footnotesize{$\pm$0.54} & 23.26\footnotesize{$\pm$2.09} & 22.26\footnotesize{$\pm$0.88} & 61.41\footnotesize{$\pm$0.11} \\
    \textbf{DeepWalk} & 70.34\footnotesize{$\pm$2.96} & 72.05\footnotesize{$\pm$2.56} & 54.91\footnotesize{$\pm$1.25} & 98.83\footnotesize{$\pm$0.23} & 98.45\footnotesize{$\pm$0.45} & 50.37\footnotesize{$\pm$0.34} & 26.42\footnotesize{$\pm$6.10} & 35.12\footnotesize{$\pm$0.79} & 55.58\footnotesize{$\pm$1.75} \\
    \midrule
    \textbf{GCN} & 66.79\footnotesize{$\pm$1.65} & 67.08\footnotesize{$\pm$2.94} & 53.02\footnotesize{$\pm$1.39} & 98.61\footnotesize{$\pm$0.15} & 98.55\footnotesize{$\pm$0.27} & 47.14\footnotesize{$\pm$1.45} & 37.07\footnotesize{$\pm$5.07} & 18.67\footnotesize{$\pm$1.32} & 84.74\footnotesize{$\pm$0.21} \\
    \textbf{GAT} & 60.78\footnotesize{$\pm$3.17} & 62.94\footnotesize{$\pm$2.45} & 46.29\footnotesize{$\pm$1.73} & 98.42\footnotesize{$\pm$0.19} & 98.47\footnotesize{$\pm$0.32} & 55.78\footnotesize{$\pm$1.39} & 54.12\footnotesize{$\pm$5.43} & 19.94\footnotesize{$\pm$1.69} & 86.33\footnotesize{$\pm$0.54} \\
    \textbf{SEAL} & 81.71\footnotesize{$\pm$1.30}& 83.89\footnotesize{$\pm$2.15}& \underline{75.54\footnotesize{$\pm$1.32}} & 98.85\footnotesize{$\pm$0.04}& \underline{98.70\footnotesize{$\pm$0.18}} & 64.74\footnotesize{$\pm$0.43}& 30.56\footnotesize{$\pm$3.86} & 48.80\footnotesize{$\pm$3.16} & \underline{87.67\footnotesize{$\pm$0.32}}\\
    \textbf{NBFNet} & 71.65\footnotesize{$\pm$2.27} & 74.07\footnotesize{$\pm$1.75} & 58.73\footnotesize{$\pm$1.99} & 98.29\footnotesize{$\pm$0.35} & 98.03\footnotesize{$\pm$0.54} & OOM & 4.00\footnotesize{$\pm$0.58} & OOM & OOM \\
    \textbf{Neo-GNN} & 80.42\footnotesize{$\pm$1.31} & 84.67\footnotesize{$\pm$2.16}& 73.93\footnotesize{$\pm$1.19} & 98.74\footnotesize{$\pm$0.55} & 98.27\footnotesize{$\pm$0.79} & 62.13\footnotesize{$\pm$0.58} & 63.57\footnotesize{$\pm$3.52}& 49.13\footnotesize{$\pm$0.60} & 87.26\footnotesize{$\pm$0.84} \\
    \textbf{BUDDY} & \underline{88.00\footnotesize{$\pm$0.44}} & \underline{92.93\footnotesize{$\pm$0.27}} & 74.10\footnotesize{$\pm$0.78} & \underline{99.05\footnotesize{$\pm$0.21}} & 98.69\footnotesize{$\pm$0.34} & \underline{65.94\footnotesize{$\pm$0.58}} & \underline{78.51\footnotesize{$\pm$1.36}} & \underline{49.85\footnotesize{$\pm$0.20}}& 87.56\footnotesize{$\pm$0.11} \\
    \textbf{HL-GNN} & \textbf{94.22\footnotesize{$\pm$1.64}} & \textbf{94.31\footnotesize{$\pm$1.51}} & \textbf{88.15\footnotesize{$\pm$0.38}} & \textbf{99.11\footnotesize{$\pm$0.07}} & \textbf{98.82\footnotesize{$\pm$0.21}} & \textbf{68.11\footnotesize{$\pm$0.54}} & \textbf{80.27\footnotesize{$\pm$3.98}} & \textbf{56.77\footnotesize{$\pm$0.84}}& \textbf{89.43\footnotesize{$\pm$0.83}} \\
    \bottomrule
    \end{tabular}

\end{table*}

\subsection{Experiment Setup}
\label{para:expsetup}

\subsubsection{Datasets}
We utilize nine datasets from three sources: Planetoid~\cite{planetoid}, Amazon~\cite{amazon}, and OGB~\cite{OGB}. 
The Planetoid datasets include \texttt{Cora}, \texttt{Citeseer}, and \texttt{Pubmed}. The Amazon datasets include \texttt{Photo} and \texttt{Computers}. 
The OGB datasets include \texttt{ogbl-collab}, \texttt{ogbl-ddi}, \texttt{ogbl-ppa}, and \texttt{ogbl-citation2}. 
Dataset statistics can be found in Appendix~\ref{para:dataset_statistics}.

\subsubsection{Baselines}
We compare our model against a diverse set of baseline methods, including heuristics like CN~\cite{CN}, RA~\cite{RA}, KI~\cite{KI}, and RWR~\cite{PPR}, traditional embedding-based methods such as MF~\cite{matrix_factorization}, Node2vec~\cite{node2vec}, and DeepWalk~\cite{deepwalk}, as well as conventional GNNs like GCN~\cite{GCN} and GAT~\cite{GAT}. Additionally, we benchmark HL-GNN against heuristic-inspired GNN methods like SEAL~\cite{SEAL}, NBFNet~\cite{NBFNet}, Neo-GNN~\cite{Neo-GNN}, and BUDDY~\cite{BUDDY}. This comprehensive comparison enable us to assess the performance and effectiveness of the proposed HL-GNN.

\subsubsection{Experimental settings}
In accordance with previous works~\cite{NBFNet,BUDDY}, we randomly sample 5\% and 10\% of the links for validation and test sets on non-OGB datasets. We sample the same number of non-edge node pairs as negative links. For the OGB datasets, we follow their official train/validation/test splits.
Following the convention in previous works~\cite{BUDDY,ILP-GNN}, we use Hits@100 as the evaluation metric for the Planetoid datasets, and we use AUC for the Amazon datasets. For the OGB datasets, we use their official evaluation metrics, such as Hits@50 for \texttt{ogbl-collab}, Hits@20 for \texttt{ogbl-ddi}, Hits@100 for \texttt{ogbl-ppa}, and Mean Reciprocal Rank (MRR) for \texttt{ogbl-citation2}~\cite{OGB}. 

We include a linear layer as preprocessing before HL-GNN to align the dimension of node features with the hidden channels of HL-GNN. We also leverage node embeddings on the OGB datasets to enhance the node representations. For the \texttt{ogbl-collab} dataset, we follow OGB's guidelines and use the validation set for training. We evaluate HL-GNN over 10 runs without fixing the random seed. More details about the experiments are provided in Appendix~\ref{para:exp_detail}.

\subsection{Main Results}

As shown in Table~\ref{tab:main_result}, HL-GNN consistently outperforms all the baselines on all of the datasets, highlighting its effectiveness and robustness for link prediction tasks. Table~\ref{tab:main_result} reports the averaged results with standard deviations. Notably, HL-GNN achieves a remarkable gain of 7.0\% and 16.7\% in Hits@100 compared to the second-best method on the Planetoid datasets \texttt{Cora} and \texttt{Pubmed}, respectively.
Moreover, our HL-GNN demonstrates its ability to handle large-scale graphs effectively, as evidenced by its superior performance on the OGB datasets. Specifically, HL-GNN achieves a gain of 13.9\% in Hits@100 on \texttt{ogbl-ppa}, and achieves 68.11\% Hits@50 on \texttt{ogbl-collab} and 89.43\% MRR on \texttt{ogbl-citation2}. Even when node features are absent or of low quality, HL-GNN maintains consistent performance by learning embeddings for each node, as demonstrated on datasets like \texttt{ogbl-ddi}, which lack node features.

HL-GNN outperforms all listed heuristics, indicating its capacity to generalize heuristics and integrate them with node features. According to Table~\ref{tab:main_result}, local heuristics like CN and RA perform better than global heuristics on the OGB datasets, while global heuristics like KI and RWR perform better on the Planetoid and Amazon datasets. This underscores the importance of establishing a unified formulation that accommodates both local and global heuristics. Notably, we can use the configuration in Table~\ref{tab:heursitics} to recover the heuristic RA from HL-GNN without training, achieving a performance of 49.33\% Hits@100 on \texttt{ogbl-ppa}. This result serves as a compelling lower bound for HL-GNN's performance on \texttt{ogbl-ppa}.
HL-GNN significantly outperforms conventional GNNs like GCN and GAT across all datasets. Additionally, HL-GNN also surpasses existing heuristic-inspired GNN methods, including SEAL, NBFNet, Neo-GNN, and BUDDY, suggesting that integrating information from multiple ranges is beneficial for link prediction tasks.

\subsection{Ablation Studies}
\subsubsection{Different information ranges}
The adaptive weights $\beta^{(l)}$ in HL-GNN facilitate the integration of multi-range information, encompassing both local and global topological information. To investigate the impact of information ranges, we conduct experiments isolating either local or global information.
We train a GNN variant using skip-connections of the first 3 layers as the output, exclusively considering local topological information. Similarly, we train another GNN variant using the final-layer output with GNN depth $L \geq 5$ to exclusively consider global topological information.
Figure~\ref{fig:output} demonstrates that HL-GNN consistently outperforms GNN variants focusing solely on local or global topological information. This underscores HL-GNN's efficacy in adaptively combining both types of information.

\begin{figure}[tbp]
    \centering
    \begin{subfigure}{0.49\linewidth}
        \includegraphics[width=\textwidth]{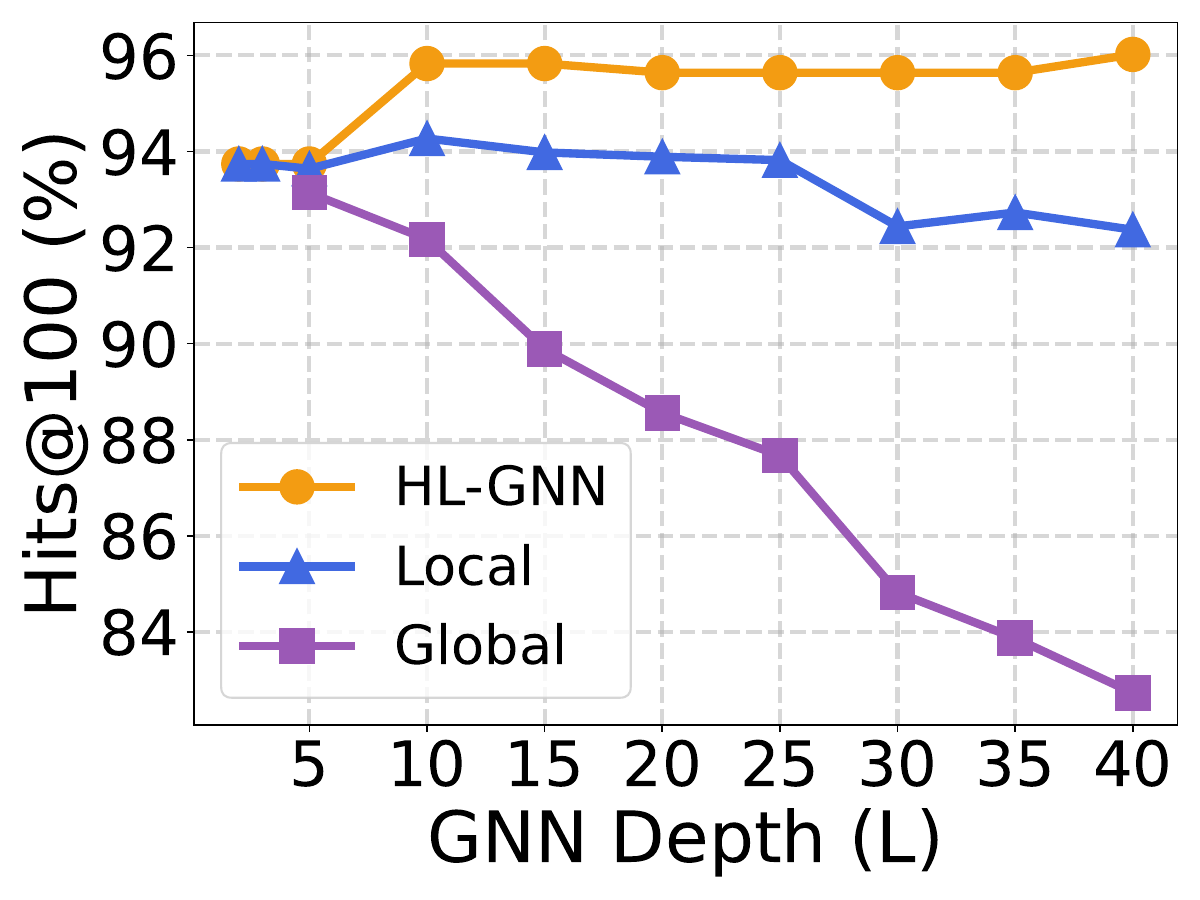}
        \caption{\texttt{Cora}.}
    \end{subfigure}
    \begin{subfigure}{0.49\linewidth}
        \includegraphics[width=\textwidth]{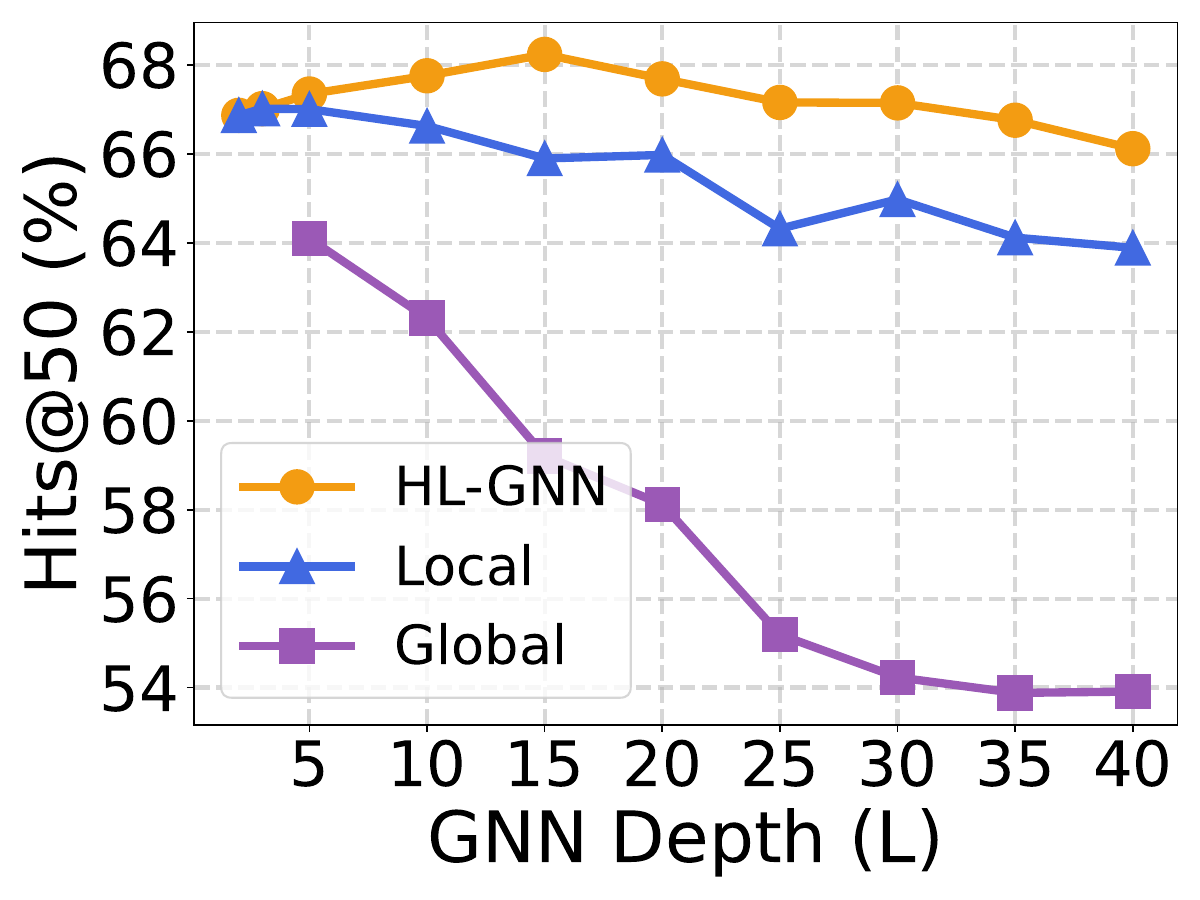}
        \caption{\texttt{ogbl-collab}.}
    \end{subfigure}
    \caption{Ablation study on information ranges. We compare HL-GNN with two GNN variants, focusing on either local or global topological information, with different GNN depths.}
    \label{fig:output}
\end{figure}

\subsubsection{Sufficient model depths}
In HL-GNN, achieving sufficient model depth is crucial for learning global heuristics and capturing long-range dependencies. Our model can effectively reach a depth of around 20 layers without performance deterioration, as shown in Figure~\ref{fig:deep}. In contrast, conventional GNNs often experience a sharp performance drop after just 2 or 3 layers.
For the Planetoid datasets \texttt{Cora} and \texttt{Pubmed}, shallow models yield poor performance, likely due to the absence of global topological information. Conversely, for the OGB datasets \texttt{ogbl-collab} and \texttt{ogbl-ddi}, deeper models (exceeding 15 layers) result in decreased performance, possibly due to the introduction of non-essential global information, which dilutes the crucial local information needed for accurate predictions.

\begin{figure}[tbp]
    \centering
    \begin{subfigure}{0.49\linewidth}
        \includegraphics[width=\textwidth]{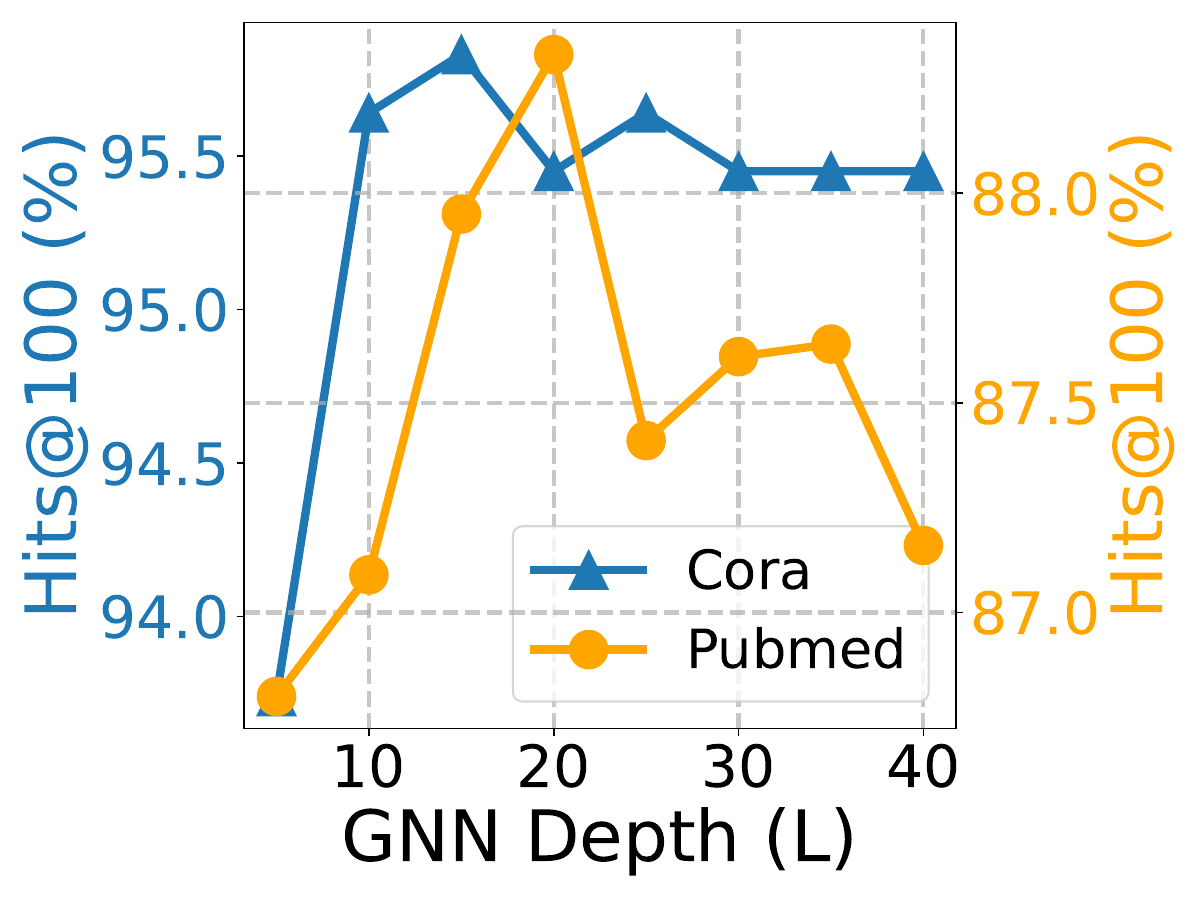}
        \caption{Planetoid datasets.}
    \end{subfigure}
    \begin{subfigure}{0.49\linewidth}
        \includegraphics[width=\textwidth]{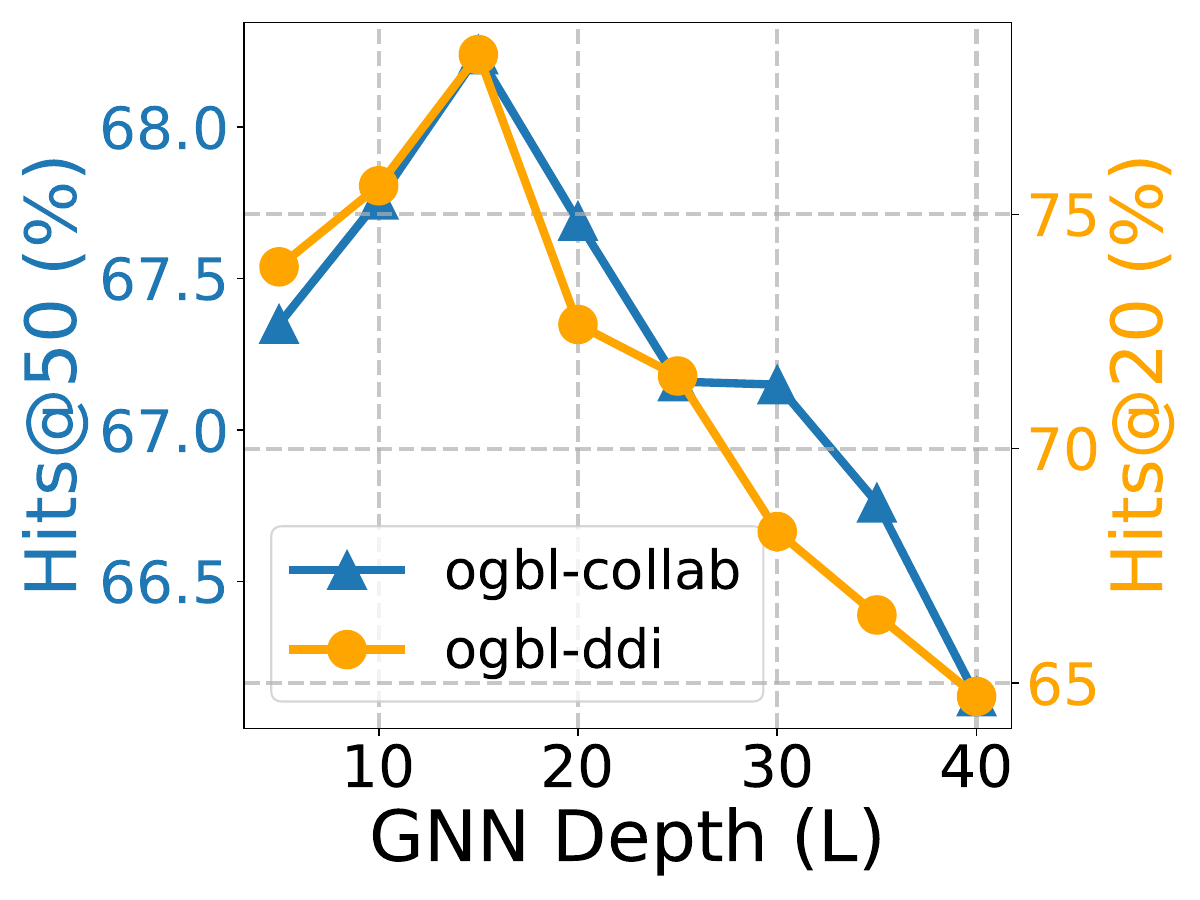}
        \caption{OGB datasets.}
    \end{subfigure}
    \caption{Test performance on the Planetoid and OGB datasets with different GNN depths.}
    \label{fig:deep}
\end{figure}

\subsection{Efficiency Analysis}
\subsubsection{Time efficiency}
Our HL-GNN demonstrates exceptional time efficiency with the lowest time complexity, as indicated in Table~\ref{tab:comparison}. 
The wall time for a single training epoch is provided in Table~\ref{tab:wall_time}. Although HL-GNN generally has a larger depth $L$ compared to conventional GNNs, its experimental wall time per training epoch is comparable to models like GCN and GAT. In practice, HL-GNN requires slightly more time than GCN or GAT due to its increased depth. However, HL-GNN is several orders of magnitude faster than heuristic-inspired GNN methods such as SEAL, NBFNet, and Neo-GNN, thanks to its avoidance of running multiple GNNs and time-consuming manipulations like applying the labeling trick.

\begin{table}[tbp]
    \caption{Wall time per epoch (in seconds) for training HL-GNN compared to other GNN methods. The shortest and second shortest times are marked with \textbf{bold} and \underline{underline}, respectively.}
    \label{tab:wall_time}
    \centering
    \small
    \setlength{\tabcolsep}{7pt}
    \begin{tabular}{lccccc}
    \toprule
       & \texttt{Cora} & \texttt{Citeseer} & \texttt{Pubmed}& \texttt{collab}  &\texttt{ddi}\\
    \midrule
         \textbf{GCN}&  \textbf{0.02}&  \textbf{0.03}&  \textbf{0.4}&  \textbf{5.3}& \textbf{9.2}\\
         \textbf{GAT}&  \underline{0.05}&  0.06&  \underline{0.5}&  \underline{5.8}& \underline{10.4}\\
         \textbf{SEAL}&  28.7&  27.3&  310&   5,130&  15,000\\
         \textbf{NBFNet}&  129&  115&  1,050&  /&  52,000\\
         \textbf{Neo-GNN}&  2.6&  1.4&  19.5&  101& 172\\
         \textbf{BUDDY}& 0.1& 0.1& 0.8& 10.5&17.6\\
    \midrule
         \textbf{HL-GNN}& 0.06&  \underline{0.05}& \underline{0.5}& 6.7& 16.2\\
    \bottomrule
    \end{tabular}
\end{table}

\subsubsection{Parameter efficiency}
HL-GNN only demands a few parameters per layer, with the primary parameter cost incurred by the preprocessing step and the MLP predictor. Table~\ref{tab:parameters} compares the number of parameters in HL-GNN with other GNN methods, clearly highlighting HL-GNN's superior parameter efficiency. Our model stands out as the most parameter-efficient among the listed conventional GNNs and heuristic-inspired GNN methods.

While conventional GNNs excel in efficiency but may lack in performance, and heuristic-inspired GNN methods are effective but time and parameter-intensive, HL-GNN strikes a balance. It consistently achieves top-tier prediction performance on numerous link prediction benchmarks, maintains excellent scalability and time efficiency, and stands out as the most parameter-efficient method. 

\begin{table}[tbp]
    \caption{Number of parameters for HL-GNN compared to other GNN methods. The least and second least number of parameters are marked with \textbf{bold} and \underline{underline}, respectively.}
    \label{tab:parameters}
    \centering
    \small
    \setlength{\tabcolsep}{7pt}
    \begin{tabular}{lccccc}
    \toprule
         & \texttt{Cora} & \texttt{Citeseer} & \texttt{Pubmed}& \texttt{collab}  &\texttt{ddi}\\
    \midrule
         \textbf{GCN}&  \underline{565k}&  \underline{1.15M}&  \underline{326k}&  \underline{231k}& \underline{1.36M}\\
         \textbf{GAT}&  566k&  \underline{1.15M}&  327k&  394k& 1.55M\\
         \textbf{SEAL}&  2.30M&  3.46M&  1.82M&  1.63M& 6.19M\\
         \textbf{NBFNet}&  3.71M&  5.02M&  3.03M&  OOM& 11.04M\\
         \textbf{Neo-GNN}&  631k&  1.21M&  392k&  297k& \underline{1.36M}\\
         \textbf{BUDDY}& 2.52M& 4.85M& 1.57M& 1.19M&2.71M\\
    \midrule
         \textbf{HL-GNN}& \textbf{433k}&  \textbf{1.01M}&  \textbf{194k}& \textbf{99k}& \textbf{1.22M}\\
    \bottomrule
    \end{tabular}
\end{table}

\subsection{Interpretability Analysis}
\subsubsection{Generalized heuristics and learned weights}
\label{para:interpret}
Leveraging the capabilities of the unified formulation, we can derive generalized heuristics by analyzing the learned parameters of HL-GNN. 
The generalized heuristics and learned weights $\beta^{(l)}$ provide insights into the graph-structured data. 
The learned weights $\beta^{(l)}$ are visually depicted in Figure~\ref{fig:weights}.
Due to space constraints, the formulas for the generalized heuristics are provided in Appendix~\ref{para:learned_heuristics}.
 
For the \texttt{Cora} and \texttt{Citeseer} datasets, the learned weights monotonically decrease, indicating that the graph filter serves as a low-pass filter. The weight $\beta^{(0)}$ has the largest magnitude, suggesting that crucial information is primarily contained in node features. Local topological information from nearby neighbors plays a major role, while global topological information from distant nodes serves as a complementary factor.
Conversely, for the \texttt{ogbl-collab} and \texttt{ogbl-ddi} datasets, the weights do not monotonically increase or decrease. Instead, they experience a significant change, especially in the first 5 layers, indicating that the graph filter serves as a high-pass filter. 
The weight $\beta^{(2)}$ has the largest magnitude, suggesting that crucial information lies in local topology rather than node features. Moreover, for large values of $l$ on the \texttt{ogbl-collab} and \texttt{ogbl-ddi} datasets, the weights $\beta^{(l)}$ become negative, suggesting that global topological information from distant nodes compensates for excessive information from nearby neighbors.
The learnable weights $\beta^{(l)}$ govern the trade-off between node features and topological information, enabling the adaptive integration of multi-range topological information.

\begin{figure}[tbp]
    \centering
    \vspace{-5px}
    \begin{subfigure}{0.49\linewidth}
        \centering
        \includegraphics[width=\textwidth]{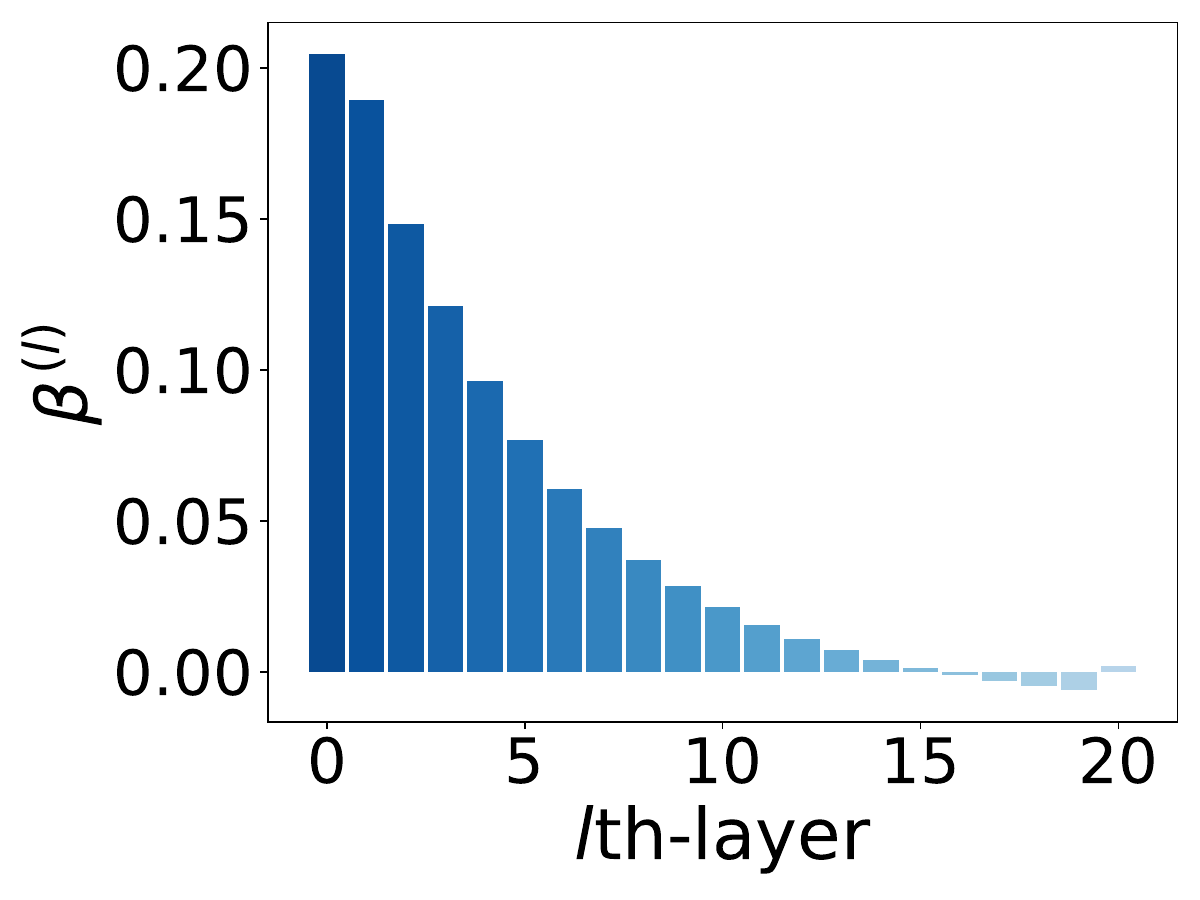}
        \caption{\texttt{Cora}.}
    \end{subfigure}
    \begin{subfigure}{0.49\linewidth}
        \centering
        \includegraphics[width=\textwidth]{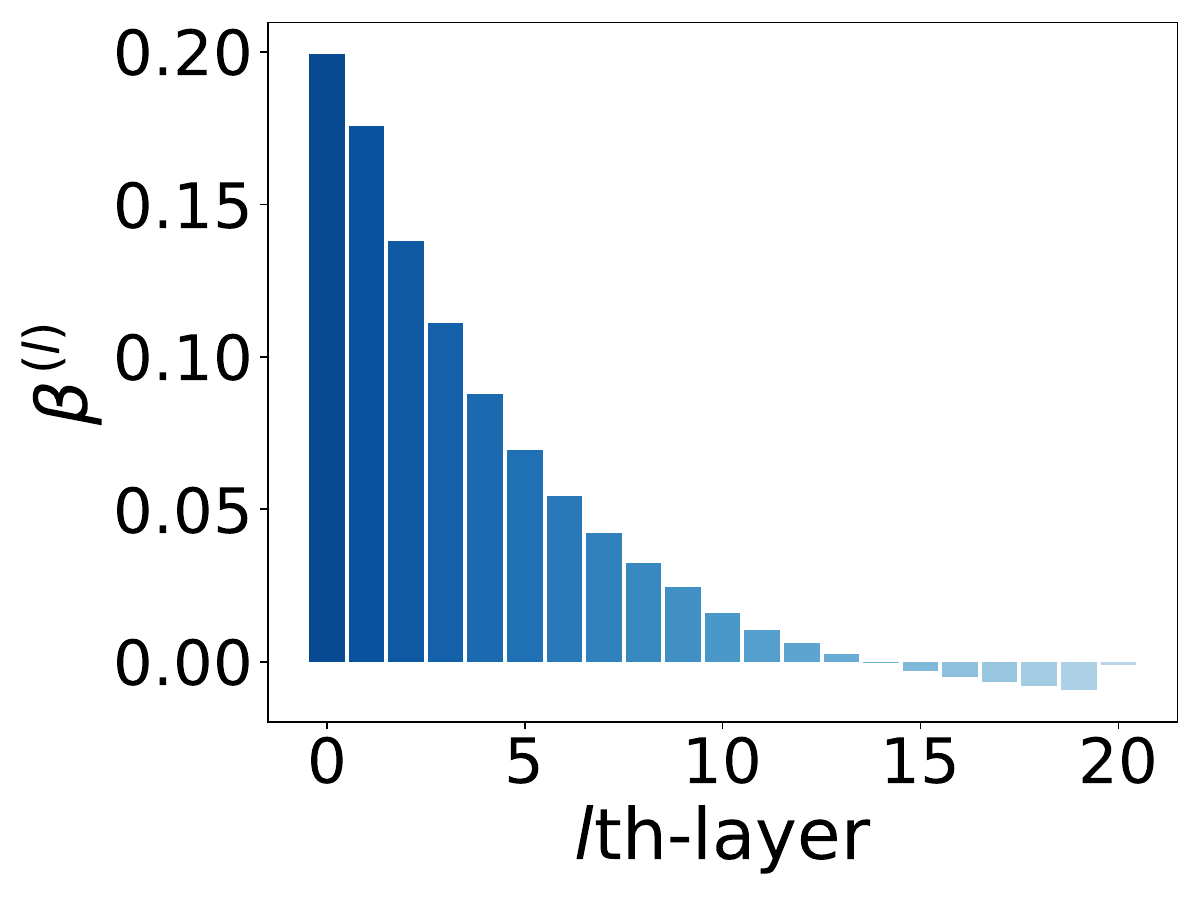}
        \caption{\texttt{Citeseer}.}
    \end{subfigure}
    \begin{subfigure}{0.49\linewidth}
        \centering
        \includegraphics[width=\textwidth]{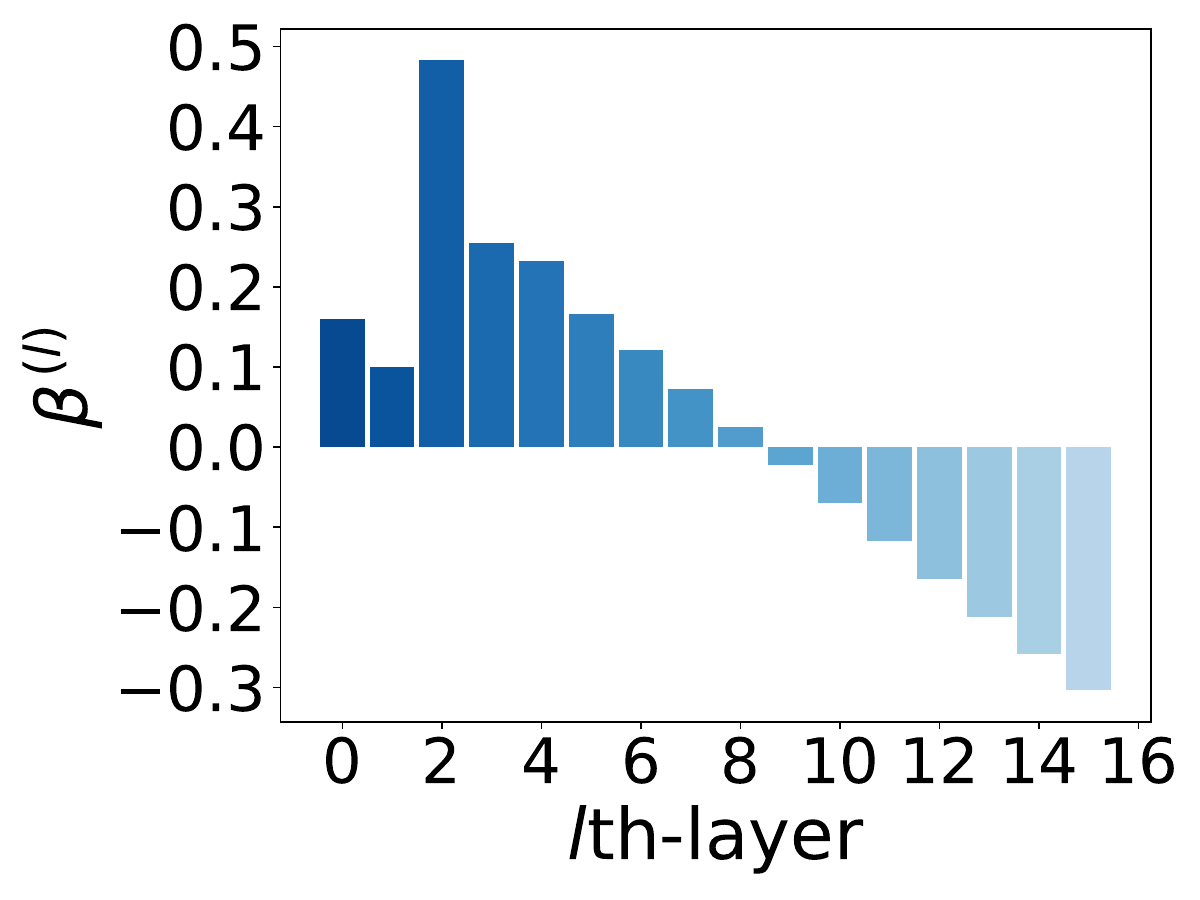}
        \caption{\texttt{ogbl-collab}.}
    \end{subfigure}
    \begin{subfigure}{0.49\linewidth}
        \centering
        \includegraphics[width=\textwidth]{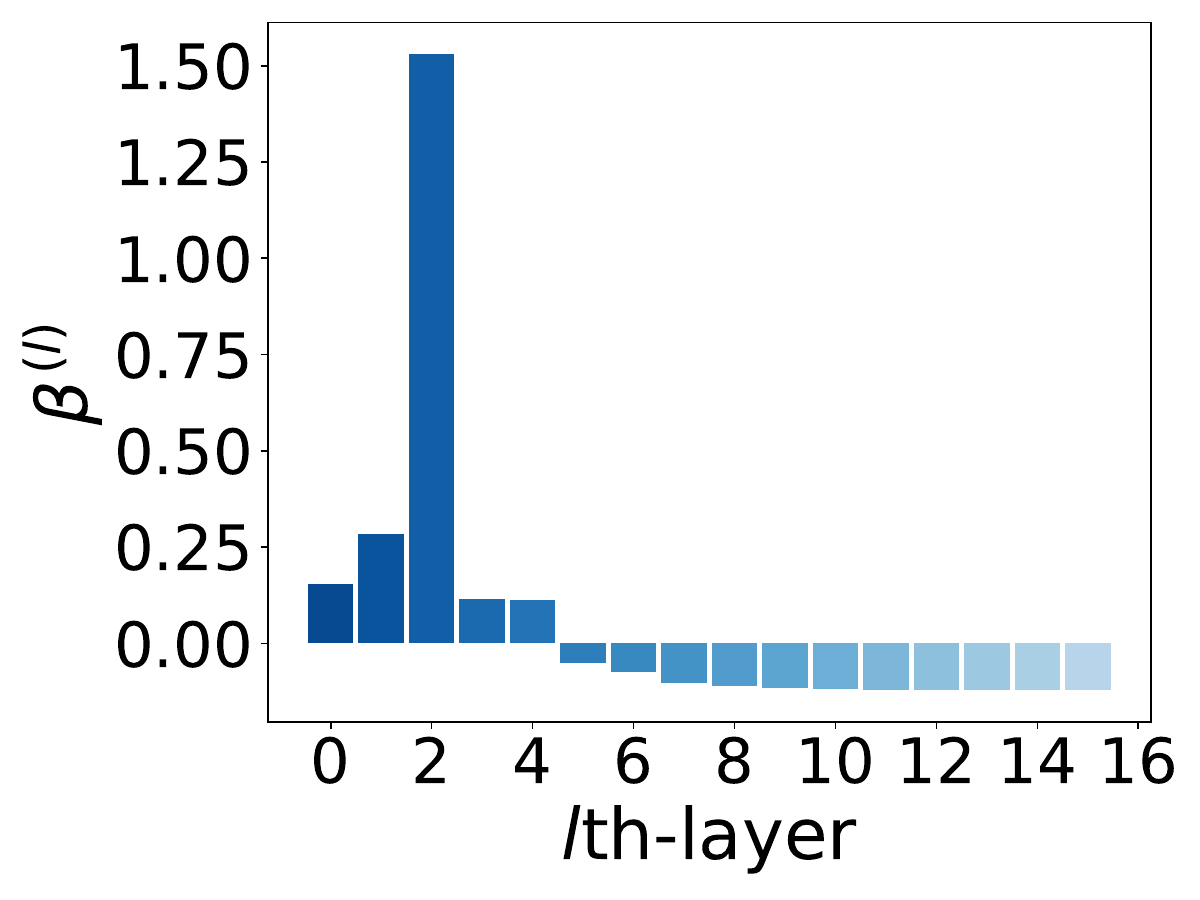}
        \caption{\texttt{ogbl-ddi}.}
    \end{subfigure}
    \caption{Learned weights $\beta^{(l)}$ with $L=20$ for the \texttt{Cora} and \texttt{Citeseer} datasets, and $L=15$ for the \texttt{ogbl-collab} and \texttt{ogbl-ddi} datasets.}
    \label{fig:weights}
\end{figure}

\subsubsection{Leveraging generalized heuristics}

With the generalized heuristic for each dataset, there is no need to train a GNN and an MLP predictor from scratch. Instead, we can simply follow the generalized heuristic and train an MLP predictor only, which is significantly more efficient than training from scratch. The performance of training the MLP alone is comparable to training from scratch, but it converges more quickly. 
The training time for training from scratch versus training only the MLP is shown in Table~\ref{tab:time_mlp}, while the performance details are provided in Appendix~\ref{para:mlp_only}. 
The slight decrease in performance can likely be attributed to the fact that, when training the GNN and MLP together, the gradients flow through both blocks, allowing them to adapt to each other. In contrast, training the MLP alone limits its ability to capture complex interactions between the two blocks.

\begin{table}[tbp]
    \caption{Total training time (in seconds) for training from scratch compared to training only the MLP predictor using the generalized heuristics.}
    \label{tab:time_mlp}
    \centering
    \small
    \setlength{\tabcolsep}{6pt}
    \begin{tabular}{lccccc}
    \toprule
          & \texttt{Cora} & \texttt{Citeseer} & \texttt{Pubmed}& \texttt{collab}  &\texttt{ddi}\\
    \midrule
         \textbf{From Scratch}&  6.3&  5.6&  150&  5,360& 8,100\\
         \textbf{Predictor Only}&  2.8&  2.1&  0.8&  823& 572\\
    \bottomrule
    \end{tabular}
\end{table}

\subsection{Case Study}
We construct two synthetic datasets, a triangular network, and a hexagonal network, to assess HL-GNN's ability to learn the most effective heuristic and obtain the desired range of information.
The triangular network consists of 1000 nodes, with every three nodes forming a triangle. As each pair of nodes shares two common neighbors, we anticipate that the learned heuristic would resemble a local heuristic focusing on 2-hop information. The learned weights are presented in Figure~\ref{fig:case_study}, with $\beta^{(2)}$ having the largest magnitude, corresponding to a local heuristic.

The hexagonal network also comprises 1000 nodes, with every six nodes forming a hexagon. Here, we expect the learned heuristic to resemble a global heuristic focusing on 5-hop information. As shown in Figure~\ref{fig:case_study}, the weight $\beta^{(5)}$ has the largest magnitude, corresponding to a global heuristic. In both cases, HL-GNN demonstrates its ability to adaptively learn the most effective heuristic based on the specific topology.
This also emphasizes the importance of developing a formulation that can effectively accommodate both local and global heuristics.

\begin{figure}[tbp]
    \centering
    \begin{subfigure}{0.49\linewidth}
        \centering
        \includegraphics[width=\textwidth]{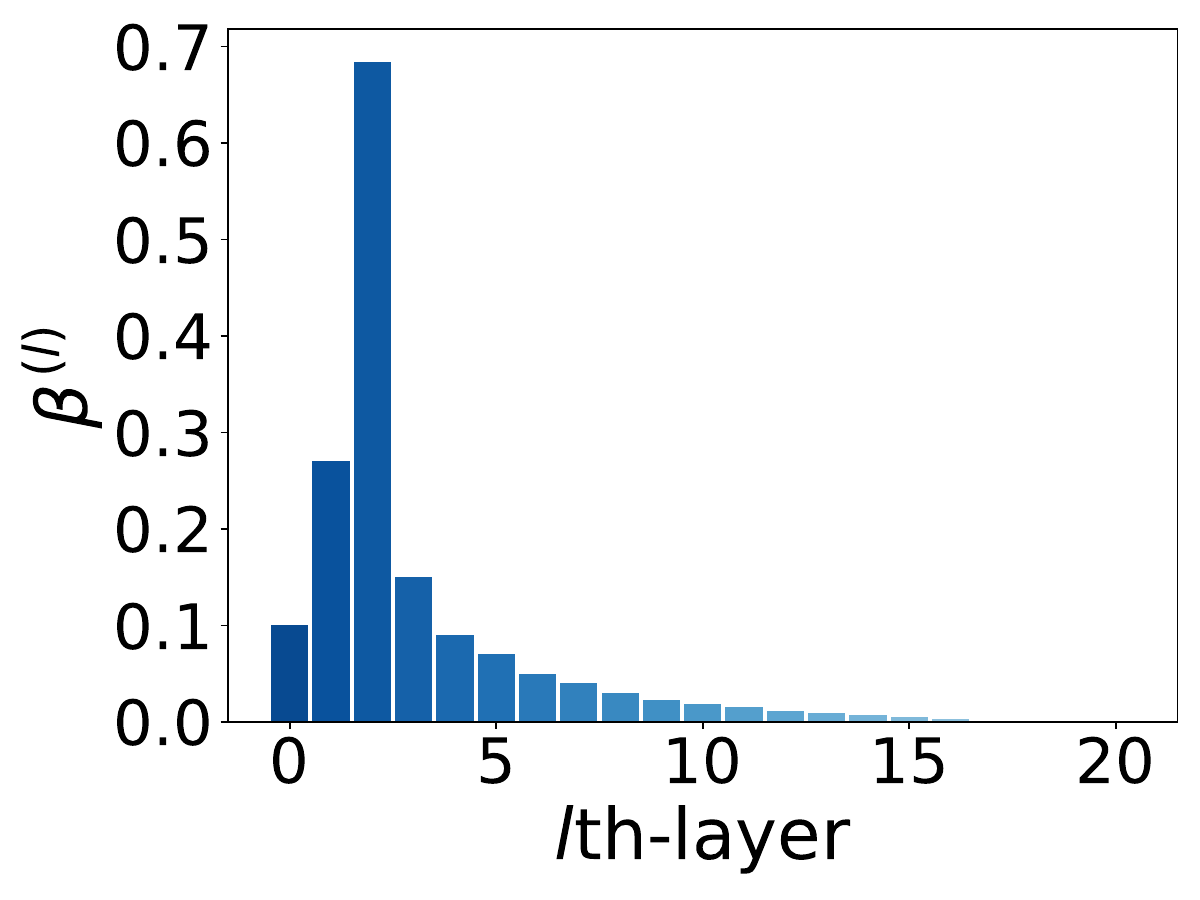}
        \caption{Triangular network.}
    \end{subfigure}
    \begin{subfigure}{0.49\linewidth}
        \centering
        \includegraphics[width=\textwidth]{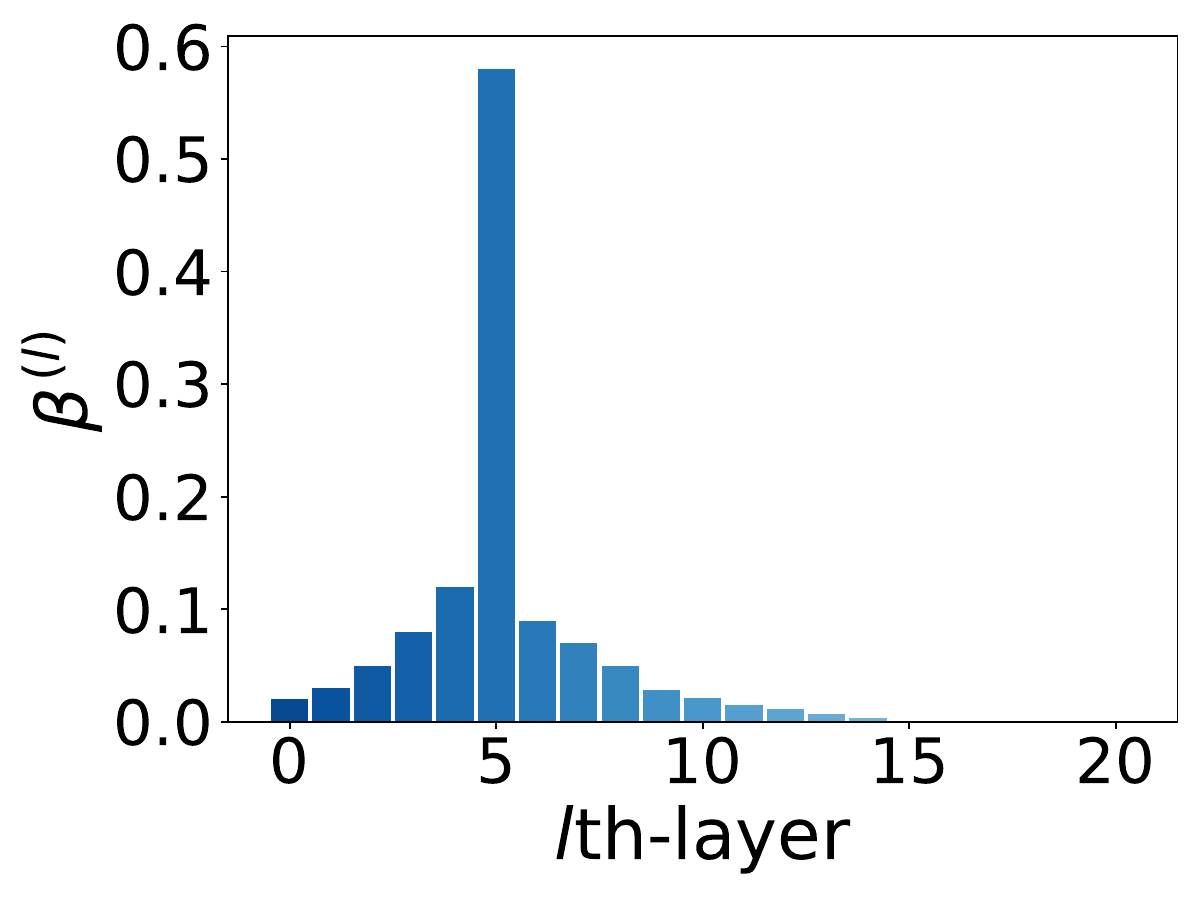}
        \caption{Hexagonal network.}
    \end{subfigure}
    \caption{Learned weights $\beta^{(l)}$ with $L=20$ for the synthetic triangular and hexagonal networks.}
    \label{fig:case_study}
\end{figure}

\section{Conclusion}
We introduce a unified formulation that accommodates and generalizes both local and global heuristics using propagation operators and weight parameters. Additionally, we propose HL-GNN, which efficiently implements this formulation. HL-GNN combines intra-layer propagation and inter-layer connections, allowing the integration of multi-range topological information. Experiments demonstrate that HL-GNN achieves state-of-the-art performance and efficiency.
This study is confined to undirected graphs; for directed graphs, we preprocess them by converting them into undirected graphs. Extensions to multi-relational graphs, such as knowledge graphs, are left for future work.

\section*{Acknowledgment}

This work is supported by National Key Research and Development Program of China (under Grant No.2023YFB2903904),
National Natural Science Foundation of China (under Grant No.92270106)
and Beijing Natural Science Foundation (under Grant No.4242039).


\clearpage
\bibliographystyle{ACM-Reference-Format}
\balance
\bibliography{references}

\newpage
\appendix
\clearpage

\section{Proofs}
\subsection{Proofs of Matrix Forms of Heuristics and Proposition~\ref{prop:formulation-heuristic}}
\label{para:heuristic_proofs}
We provide detailed derivations of the matrix forms of heuristics and demonstrate their alignment with the unified heuristic formulation under specific configurations.

\begin{lemma}
    Common Neighbors (CN), denoted as 
    \[s_\mathrm{CN}(i, j) = |\Gamma_i \cap \Gamma_j| =(\A^2)_{i, j},\]
    conforms to the formulation when \(\mA^{(1)}=\mA^{(2)} = \A\), \(\beta^{(0)}= \beta^{(1)} = 0\), \(\beta^{(2)} = 1\), \(L = 2\).
\end{lemma}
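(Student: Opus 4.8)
The plan is to establish the lemma in two independent pieces: first the purely combinatorial identity $|\Gamma_i \cap \Gamma_j| = (\A^2)_{i,j}$, and then the observation that this matrix form is exactly what the unified formulation in Equation~\eqref{eq:formulation} produces under the stated configuration. The two pieces are logically separate, so I would treat them as such.

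For the first piece, I would expand the matrix product entrywise as $(\A^2)_{i,j} = \sum_{k=1}^{N} \tilde{a}_{ik}\,\tilde{a}_{kj}$. Since $\A = \boldsymbol{A}+\boldsymbol{I}_N$ has entries in $\{0,1\}$, each summand $\tilde{a}_{ik}\tilde{a}_{kj}$ equals $1$ precisely when $\tilde{a}_{ik}=\tilde{a}_{kj}=1$, i.e. when $k$ is adjacent (with self-loops) to both $i$ and $j$. The set of such indices $k$ is exactly $\Gamma_i \cap \Gamma_j$, recalling that $\Gamma_x$ is defined in the excerpt to include $x$ itself, which is consistent with the self-loops built into $\A$. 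Hence the sum counts the elements of $\Gamma_i \cap \Gamma_j$, giving $(\A^2)_{i,j} = |\Gamma_i \cap \Gamma_j|$.

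For the second piece, I would substitute $L=2$, $\beta^{(0)}=\beta^{(1)}=0$, $\beta^{(2)}=1$, and $\mA^{(1)}=\mA^{(2)}=\A$ (with $\mA^{(0)}=\boldsymbol{I}_N$ by definition) directly into Equation~\eqref{eq:formulation}. The outer sum collapses to its single nonzero term $l=2$, and the inner product becomes $\prod_{m=0}^{2}\mA^{(m)} = \boldsymbol{I}_N \cdot \A \cdot \A = \A^2$, so $\boldsymbol{H} = \A^2$. Taking the $(i,j)$ entry and combining with the first piece yields $\boldsymbol{H}_{i,j} = s_{\mathrm{CN}}(i,j)$, which is the claim.

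This lemma is elementary, so there is no genuine obstacle; the only point demanding care is the self-loop convention. Because $\A$ carries self-loops, the counted common neighbors include $i$ and $j$ themselves whenever they are mutually adjacent, so the identity holds for the \emph{self-loop-augmented} neighborhoods $\Gamma_i,\Gamma_j$ as defined, not for the bare neighbor sets; I would make this alignment explicit to avoid an off-by-self-loop discrepancy, and otherwise the proof is a direct substitution.
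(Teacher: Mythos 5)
Your proposal is correct and follows essentially the same route as the paper: expand $(\A^2)_{i,j}$ as $\sum_{k}\tilde{a}_{ik}\tilde{a}_{kj}$, identify the nonzero summands with the self-loop-augmented common neighborhood $\Gamma_i\cap\Gamma_j$, and then observe that the stated configuration collapses Equation~\eqref{eq:formulation} to $\boldsymbol{H}=\A^2$. Your explicit remark about the self-loop convention is a careful touch the paper leaves implicit, but it does not change the argument.
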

\begin{proof}
    Using the adjacency matrices, we can derive the matrix form of the heuristic CN as follows:
    \begin{equation}
    s_\mathrm{C N}(i, j) = |\Gamma_i \cap \Gamma_j| = \sum_{k \in \mathcal{V}} \tilde{a}_{i k} \tilde{a}_{k j} = (\A^2)_{i, j}.
    \end{equation}
    Using the settings \(\mA^{(1)}=\mA^{(2)} = \A\), \(\beta^{(0)}= \beta^{(1)} = 0\), \(\beta^{(2)} = 1\), \(L = 2\) in Equation~\eqref{eq:formulation} ensures that \(\boldsymbol{H}_{i, j}=h(i,j)=s_\mathrm{C N}(i, j)\).
\end{proof}

\begin{lemma}
    The Local Leicht-Holme-Newman Index (LLHN), denoted as 
    \[s_{\mathrm{LLHN}}(i, j) = \frac{|\Gamma_i \cap \Gamma_j|}{\tilde{d}_i \tilde{d}_j}=(\Ars \Acs)_{i, j},\] 
    conforms to the formulation when \(\mA^{(1)} = \Ars\), \(\mA^{(2)} = \Acs\), \(\beta^{(0)}=\beta^{(1)} = 0\), \(\beta^{(2)} = 1\), \(L = 2\).
\end{lemma}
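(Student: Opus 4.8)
The plan is to mirror the structure of the preceding CN lemma, proceeding in two stages: first establishing the matrix identity for $s_{\mathrm{LLHN}}$, then verifying conformance to Equation~\eqref{eq:formulation} under the prescribed configuration.

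First I would expand the $(i,j)$ entry of the product $\Ars\Acs$ directly. Substituting the expressions from Table~\ref{tab:adjacency_matrices}, namely $\Ars = \tilde{\boldsymbol{D}}^{-1}\tilde{\boldsymbol{A}}$ and $\Acs = \tilde{\boldsymbol{A}}\tilde{\boldsymbol{D}}^{-1}$, I would write
\[
(\Ars\Acs)_{i,j} = \sum_{k\in\mathcal{V}} (\Ars)_{ik}(\Acs)_{kj} = \sum_{k\in\mathcal{V}} \frac{\tilde{a}_{ik}}{\tilde{d}_i}\cdot\frac{\tilde{a}_{kj}}{\tilde{d}_j}.
\]
The key observation is that the degree factors $\tilde{d}_i$ and $\tilde{d}_j$ depend only on the fixed endpoints and hence pull out of the summation over $k$, leaving $\frac{1}{\tilde{d}_i\tilde{d}_j}\sum_k \tilde{a}_{ik}\tilde{a}_{kj}$. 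The remaining sum is exactly the common-neighbor count $(\A^2)_{i,j}=|\Gamma_i\cap\Gamma_j|$ already identified in the CN lemma, which yields the claimed matrix form $s_{\mathrm{LLHN}}(i,j)=|\Gamma_i\cap\Gamma_j|/(\tilde{d}_i\tilde{d}_j)=(\Ars\Acs)_{i,j}$.

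For conformance I would then substitute the configuration into Equation~\eqref{eq:formulation}: with $L=2$, $\mA^{(0)}=\boldsymbol{I}_N$, $\mA^{(1)}=\Ars$, $\mA^{(2)}=\Acs$, and weights $\beta^{(0)}=\beta^{(1)}=0$, $\beta^{(2)}=1$, the sum collapses to its single surviving term $\boldsymbol{H}=\mA^{(0)}\mA^{(1)}\mA^{(2)}=\Ars\Acs$, so that $\boldsymbol{H}_{i,j}=h(i,j)=s_{\mathrm{LLHN}}(i,j)$ entrywise, as required.

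The computation presents no real obstacle; the only point demanding care is bookkeeping of which normalization attaches to which endpoint — the left factor $\tilde{\boldsymbol{D}}^{-1}$ in $\Ars$ contributes $1/\tilde{d}_i$ (the source degree) while the right factor $\tilde{\boldsymbol{D}}^{-1}$ in $\Acs$ contributes $1/\tilde{d}_j$ (the target degree). It is precisely this pairing, together with the fact that neither degree depends on the summation index $k$, that lets both factors emerge cleanly outside the sum and reproduces the symmetric denominator $\tilde{d}_i\tilde{d}_j$ characteristic of the LLHN index.
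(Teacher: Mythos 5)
Your proposal is correct and follows essentially the same route as the paper: expand the $(i,j)$ entry of $\Ars\Acs$ as $\sum_{k}\frac{\tilde{a}_{ik}}{\tilde{d}_i}\frac{\tilde{a}_{kj}}{\tilde{d}_j}$, factor out the degrees to recover $|\Gamma_i\cap\Gamma_j|/(\tilde{d}_i\tilde{d}_j)$, and then check that the given configuration collapses Equation~\eqref{eq:formulation} to the single term $\Ars\Acs$. Your added remark about which normalization attaches to which endpoint is a nice clarification but does not change the argument.
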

\begin{proof}
    Using the adjacency matrices, we can derive the matrix form of the heuristic LLHN as follows:
    \begin{equation}
    s_{\mathrm{LLHN}}(i, j) = \frac{|\Gamma_i \cap \Gamma_j|}{\tilde{d}_i \tilde{d}_j} = \sum_{k \in \mathcal{V}} \frac{\tilde{a}_{i k}}{\tilde{d}_i} \frac{\tilde{a}_{k j}}{\tilde{d}_j} = (\Ars \Acs)_{i, j}.
    \end{equation}
     Using the settings \(\mA^{(1)} = \Ars\), \(\mA^{(2)} = \Acs\), \(\beta^{(0)}=\beta^{(1)} = 0\), \(\beta^{(2)} = 1\), \(L = 2\) in Equation~\eqref{eq:formulation} ensures that \(\boldsymbol{H}_{i, j}=h(i,j)=s_\mathrm{LLHN}(i, j)\).
\end{proof}

\begin{lemma}
    The Resource Allocation Index (RA), denoted as 
    \[s_{\mathrm{RA}}(i,j)\\=\sum_{k \in \Gamma_i \cap \Gamma_j} \frac{1}{\tilde{d}_k}=(\Acs \A)_{i, j}=(\A \Ars)_{i, j},\]
    conforms to the formulation under two configurations: (i) \(\mA^{(1)} = \Acs\), \(\mA^{(2)} = \A\), \(\beta^{(0)}=\beta^{(1)} = 0\), \(\beta^{(2)} = 1\), \(L = 2\); or (ii) \(\mA^{(1)} = \A\), \(\mA^{(2)} = \Ars\), \(\beta^{(0)}=\beta^{(1)} = 0\), \(\beta^{(2)} = 1\), \(L = 2\).
\end{lemma}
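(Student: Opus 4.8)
The plan is to verify the two matrix identities $s_{\mathrm{RA}}(i,j) = (\Acs \A)_{i,j} = (\A \Ars)_{i,j}$ by direct entrywise computation, and then to read off the two configurations from Equation~\eqref{eq:formulation}, exactly as in the preceding two lemmas. First I would expand the defining sum of the RA Index over the common neighborhood into a sum over all of $\mathcal{V}$ by using the fact that $\tilde{a}_{ik}\tilde{a}_{kj}$ is nonzero precisely when $k \in \Gamma_i \cap \Gamma_j$, which lets me write
\begin{equation}
s_{\mathrm{RA}}(i,j) = \sum_{k \in \Gamma_i \cap \Gamma_j} \frac{1}{\tilde{d}_k} = \sum_{k \in \mathcal{V}} \tilde{a}_{ik} \frac{1}{\tilde{d}_k} \tilde{a}_{kj}.
\end{equation}

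Next I would recognize this common expression as an entry of two different matrix products, depending on how the factor $1/\tilde{d}_k$ is grouped. Grouping it with the first factor gives $\tilde{a}_{ik}/\tilde{d}_k = (\Acs)_{ik}$ (column normalization divides column $k$ by $\tilde{d}_k$), so that $\sum_k (\Acs)_{ik} \tilde{a}_{kj} = (\Acs \A)_{i,j}$. Grouping it instead with the second factor gives $\tilde{a}_{kj}/\tilde{d}_k = (\Ars)_{kj}$ (row normalization divides row $k$ by $\tilde{d}_k$), so that $\sum_k \tilde{a}_{ik} (\Ars)_{kj} = (\A \Ars)_{i,j}$. This establishes both claimed matrix forms simultaneously from the single intermediate sum.

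Finally I would substitute each matrix form into Equation~\eqref{eq:formulation} and check that the stated parameter choices reproduce it. For configuration (i), setting $\mA^{(1)} = \Acs$, $\mA^{(2)} = \A$, $L = 2$, and $\beta^{(0)} = \beta^{(1)} = 0$, $\beta^{(2)} = 1$ collapses the sum to the single term $\beta^{(2)} \mA^{(0)} \mA^{(1)} \mA^{(2)} = \Acs \A$ (recalling $\mA^{(0)} = \boldsymbol{I}_N$), so $\boldsymbol{H}_{i,j} = (\Acs \A)_{i,j} = s_{\mathrm{RA}}(i,j)$; configuration (ii) is identical with $\Acs \A$ replaced by $\A \Ars$.

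The calculation is entirely routine, so there is no genuine obstacle; the only point requiring minor care is correctly matching the direction of normalization to the grouping of the degree factor --- that is, keeping straight that dividing column $k$ of $\A$ by $\tilde d_k$ yields $\Acs$ while dividing row $k$ yields $\Ars$ --- and confirming that the two groupings do not require the degrees $\tilde d_i, \tilde d_j$ of the endpoints, which is what distinguishes RA from LLHN in the preceding lemma.
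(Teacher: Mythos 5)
Your proposal is correct and follows essentially the same route as the paper's proof: expand the sum over $\Gamma_i \cap \Gamma_j$ into a sum over all of $\mathcal{V}$ via $\tilde{a}_{ik}\tilde{a}_{kj}$, attach the factor $1/\tilde{d}_k$ to either the left factor (giving $(\Acs)_{ik}$ and hence $(\Acs\A)_{i,j}$) or the right factor (giving $(\Ars)_{kj}$ and hence $(\A\Ars)_{i,j}$), and then substitute each form into Equation~\eqref{eq:formulation} with the stated parameters. Your grouping of the degree factor and the matching of column versus row normalization are exactly as in the paper, so there is nothing to add.
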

\begin{proof}
    Using the adjacency matrices, we can derive the matrix form of the heuristic RA as follows:
    \begin{equation}
    s_{\mathrm{RA}}(i, j)=\sum_{k \in \Gamma_i \cap \Gamma_j} \frac{1}{\tilde{d}_k}=\sum_{k \in \mathcal{V}} \frac{\tilde{a}_{i k}}{\tilde{d}_k} \tilde{a}_{k j}=(\Acs \A)_{i, j}.
    \end{equation}    
    Alternatively, the matrix form of the heuristic RA can also be expressed as:
    \begin{equation}
    s_{\mathrm{RA}}(i, j)=\sum_{k \in \Gamma_i \cap \Gamma_j} \frac{1}{\tilde{d}_k}=\sum_{k \in \mathcal{V}} {\tilde{a}_{i k}} \frac{\tilde{a}_{k j}}{\tilde{d}_k}=(\A \Ars)_{i, j}.
    \end{equation}
    Using the two settings: (i) \(\mA^{(1)} = \Acs\), \(\mA^{(2)} = \A\), \(\beta^{(0)}=\beta^{(1)} = 0\), \(\beta^{(2)} = 1\), \(L = 2\); or (ii) \(\mA^{(1)} = \A\), \(\mA^{(2)} = \Ars\), \(\beta^{(0)}=\beta^{(1)} = 0\), \(\beta^{(2)} = 1\), \(L = 2\) in Equation~\eqref{eq:formulation} ensures that \(\boldsymbol{H}_{i, j}=h(i,j)=s_\mathrm{RA}(i, j)\).
\end{proof}

\begin{lemma}
\label{lemma:KI}
    The Katz Index (KI), denoted as 
    $$s_{\mathrm{KI}}(i, j)=\sum_{l=1}^{\infty} \gamma^l |{\textnormal{paths}}_{i,j}^l|=\left(\sum_{l=1}^{\infty} \gamma^l \A^l\right)_{i,j},$$
    where ${\textnormal{paths}}_{i,j}^l$ is the set of length-$l$ paths between nodes $i$ and $j$, $\gamma$ is a damping factor, conforms to the formulation when $\mA^{(m)} = \A$ for $m \geq 1$, $\beta^{(0)} = 0$, $ \beta^{(l)}=\gamma^l$ for $l \geq 1$, $L = \infty$.
\end{lemma}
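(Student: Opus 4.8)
The plan is to establish the matrix form of the Katz Index first, and then verify the alignment with the unified formulation by direct substitution of the stated configuration. The core identity to prove is $|{\textnormal{paths}}_{i,j}^l| = (\A^l)_{i,j}$, that is, the $(i,j)$ entry of the $l$-th power of the adjacency matrix counts the length-$l$ paths (walks) between nodes $i$ and $j$. First I would recall the base case: for $l=1$, $(\A)_{i,j} = \tilde{a}_{ij}$ is exactly the number of length-$1$ paths (a single edge, including self-loops) between $i$ and $j$. Then I would proceed by induction on $l$, writing $(\A^l)_{i,j} = \sum_{k \in \mathcal{V}} (\A^{l-1})_{i,k}\, \tilde{a}_{kj}$, and interpreting this sum combinatorially: every length-$l$ path from $i$ to $j$ decomposes uniquely into a length-$(l-1)$ path from $i$ to some intermediate node $k$ followed by a single edge from $k$ to $j$. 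By the inductive hypothesis $(\A^{l-1})_{i,k} = |{\textnormal{paths}}_{i,k}^{l-1}|$, so the sum counts exactly $|{\textnormal{paths}}_{i,j}^l|$.

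Given this identity, I would substitute it into the defining expression for the Katz Index to obtain
\begin{equation}
s_{\mathrm{KI}}(i, j)=\sum_{l=1}^{\infty} \gamma^l |{\textnormal{paths}}_{i,j}^l| = \sum_{l=1}^{\infty} \gamma^l (\A^l)_{i,j} = \left(\sum_{l=1}^{\infty} \gamma^l \A^l\right)_{i,j},
\end{equation}
which establishes the claimed matrix form. The final step is to check that this coincides with the heuristic formulation of Equation~\eqref{eq:formulation} under the stated configuration. Setting $\mA^{(m)} = \A$ for all $m \geq 1$, the product $\prod_{m=0}^l \mA^{(m)}$ reduces to $\boldsymbol{I}_N \cdot \A^l = \A^l$ since $\mA^{(0)} = \boldsymbol{I}_N$. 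Taking $\beta^{(0)} = 0$ and $\beta^{(l)} = \gamma^l$ for $l \geq 1$, with $L = \infty$, Equation~\eqref{eq:formulation} becomes $\boldsymbol{H} = \sum_{l=1}^{\infty} \gamma^l \A^l$, so that $\boldsymbol{H}_{i,j} = s_{\mathrm{KI}}(i,j)$ as required.

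The main obstacle is the convergence of the infinite series, which the induction and substitution steps quietly assume. To handle this rigorously I would invoke the spectral properties of $\A$: the series $\sum_{l=1}^{\infty} \gamma^l \A^l$ converges to $(\boldsymbol{I}_N - \gamma \A)^{-1} - \boldsymbol{I}_N$ precisely when $|\gamma| < 1/\rho(\A)$, where $\rho(\A)$ denotes the spectral radius of $\A$; this is the standard condition on the damping factor $\gamma$ in the Katz Index. Under this assumption the rearrangement of the series and the interchange of summation with the entry-extraction operation are both justified, and the entrywise equality holds. I would note that this convergence condition is implicit in the definition of the heuristic and does not affect the structural alignment with the formulation, which is the substance of the lemma.
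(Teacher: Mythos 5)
Your proposal is correct and follows essentially the same route as the paper: identify $|{\textnormal{paths}}_{i,j}^l|$ with $(\A^l)_{i,j}$, substitute into the Katz sum, and verify the configuration in Equation~\eqref{eq:formulation}. You additionally supply the inductive justification of the path-counting identity and the spectral-radius condition $|\gamma| < 1/\rho(\A)$ for convergence, both of which the paper's proof leaves implicit; these are welcome refinements of the same argument rather than a different approach.
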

\begin{proof}
    Using the adjacency matrices, we can derive the matrix form of the heuristic KI as follows:
    \begin{equation}
    s_{\mathrm{KI}}(i, j)=\sum_{l=1}^{\infty} \gamma^l |{\textnormal{paths}}_{i,j}^l|=\sum_{l=1}^{\infty} \gamma^l(\A^l)_{i,j}=\left(\sum_{l=1}^{\infty} \gamma^l \A^l\right)_{i,j}.
    \end{equation}
    Using the settings $\mA^{(m)} = \A$ for $m \geq 1$, $\beta^{(0)} = 0$, $ \beta^{(l)}=\gamma^l$ for $l \geq 1$, $L = \infty$ in Equation~\eqref{eq:formulation} ensures that \(\boldsymbol{H}_{i, j}=h(i,j)=s_\mathrm{KI}(i, j)\).
\end{proof}

\begin{lemma}
    The Global Leicht-Holme-Newman Index (GLHN), denoted as 
    $$s_{\mathrm{GLHN}}(i, j)=\sum_{l=0}^{\infty} \phi^l |{\textnormal{paths}}_{i,j}^l|= \left(\boldsymbol{I}_N+\sum_{l=1}^{\infty} \phi^l \A^l \right)_{i,j},$$
    where ${\textnormal{paths}}_{i,j}^l$ is the set of length-$l$ paths between nodes $i$ and $j$, $\phi$ is a damping factor, conforms to the formulation when $\mA^{(m)} = \A$ for $m \geq 1$, $\beta^{(0)} = 1$, $ \beta^{(l)}=\phi^l$ for $l \geq 1$, $L = \infty$.
\end{lemma}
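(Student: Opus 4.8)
The plan is to mirror the derivation used for the Katz Index in Lemma~\ref{lemma:KI}, with the single modification of incorporating the $l=0$ term. The essential ingredient is the standard combinatorial fact that $(\A^l)_{i,j}$ counts the number of length-$l$ paths between nodes $i$ and $j$ in the self-loop-augmented graph. First I would note that this counting identity extends to $l=0$: since $\A^0 = \boldsymbol{I}_N$, we have $(\A^0)_{i,j} = \delta_{ij} = |{\textnormal{paths}}_{i,j}^0|$, because the only length-$0$ path from $i$ to $j$ exists precisely when $i=j$.

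With this in hand, I would split the defining sum at $l=0$ and write
\begin{equation}
s_{\mathrm{GLHN}}(i, j)=\sum_{l=0}^{\infty} \phi^l |{\textnormal{paths}}_{i,j}^l|=\sum_{l=0}^{\infty} \phi^l (\A^l)_{i,j}=\left(\boldsymbol{I}_N+\sum_{l=1}^{\infty} \phi^l \A^l\right)_{i,j},
\end{equation}
where the last equality isolates the $l=0$ contribution $\phi^0 \A^0 = \boldsymbol{I}_N$. This establishes the claimed matrix form.

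Finally, I would verify alignment with the unified formulation in Equation~\eqref{eq:formulation}. Setting $\mA^{(m)} = \A$ for every $m \geq 1$ and recalling the convention $\mA^{(0)} = \boldsymbol{I}_N$, the order-$l$ product collapses to $\prod_{m=0}^{l} \mA^{(m)} = \boldsymbol{I}_N \A^l = \A^l$. Substituting the weights $\beta^{(0)} = 1$ and $\beta^{(l)} = \phi^l$ for $l \geq 1$ with $L = \infty$ then yields $\boldsymbol{H} = \boldsymbol{I}_N + \sum_{l=1}^{\infty} \phi^l \A^l$, so that $\boldsymbol{H}_{i,j} = h(i,j) = s_{\mathrm{GLHN}}(i,j)$.

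Since every step is an algebraic identity, there is no genuine obstacle; the only point requiring care is the treatment of the $l=0$ term, ensuring that the convention $\mA^{(0)} = \boldsymbol{I}_N$ together with $\beta^{(0)} = 1$ correctly reproduces the identity matrix that the Katz-style derivation omits.
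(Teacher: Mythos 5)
Your proof is correct and follows essentially the same route as the paper: both isolate the $l=0$ term as $\boldsymbol{I}_N$, identify $\phi^l|{\textnormal{paths}}_{i,j}^l|$ with $\phi^l(\A^l)_{i,j}$ for $l\geq 1$, and then read off the configuration of $\mA^{(m)}$ and $\beta^{(l)}$ in Equation~\eqref{eq:formulation}. Your explicit remark that $(\A^0)_{i,j}=\delta_{ij}=|{\textnormal{paths}}_{i,j}^0|$ is a small additional justification the paper leaves implicit, but it does not change the argument.
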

\begin{proof}
    Using the adjacency matrices, we can derive the matrix form of the heuristic GLHN as follows:
    \begin{equation}
    \begin{aligned}
    s_{\mathrm{GLHN}}(i, j)&=\sum_{l=0}^{\infty} \phi^l |{\textnormal{paths}}_{i,j}^l|=(\boldsymbol{I}_N)_{i,j}+ \sum_{l=1}^{\infty} \phi^l(\A^l)_{i,j}\\
    &=\left(\boldsymbol{I}_N+\sum_{l=1}^{\infty} \phi^l \A^l \right)_{i,j}.
    \end{aligned}
    \end{equation}
    Using the settings $\mA^{(m)} = \A$ for $m \geq 1$, $\beta^{(0)} = 1$, $ \beta^{(l)}=\phi^l$ for $l \geq 1$, $L = \infty$ in Equation~\eqref{eq:formulation} ensures that \(\boldsymbol{H}_{i, j}=h(i,j)=s_\mathrm{GLHN}(i, j)\).
\end{proof}

\begin{lemma}
\label{lemma:RWR}
    The Random Walk with Restart (RWR), denoted as $$s_{\mathrm{RWR}}(i,j)=[\boldsymbol{\pi}_i(\infty)]_j =\left(\sum_{l=0}^{\infty}(1-\alpha)\alpha^l \Ars^l\right)_{i,j},$$ 
    conforms to the formulation when $\mA^{(m)} = \Ars$ for $m \geq 1$,  $\beta^{(l)}=(1-\alpha)\alpha^l$ for $l \geq 0$, $L = \infty$.
\end{lemma}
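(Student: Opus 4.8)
The plan is to establish the matrix form of RWR by recalling the definition of the Random Walk with Restart stationary distribution, and then to read off the configuration of propagation operators and weights that makes the unified formulation in Equation~\eqref{eq:formulation} coincide with it. First I would recall that in RWR, a random walker starting at node $i$ moves to a neighbor with probability $\alpha$ according to the row-stochastic transition matrix $\Ars$, and restarts at node $i$ with probability $1-\alpha$ at each step. The stationary distribution $\boldsymbol{\pi}_i(\infty)$ is characterized by the fixed-point equation $\boldsymbol{\pi}_i = \alpha \Ars^\top \boldsymbol{\pi}_i + (1-\alpha)\boldsymbol{e}_i$, but for the purpose of the matrix form it is cleaner to write the distribution as a sum over walk lengths.

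The key step is to unfold this recursion into its Neumann-series form. Iterating the fixed point gives the walker's distribution as a geometric sum over paths of every length: the probability mass that has taken exactly $l$ propagation steps before the current restart contributes a factor $(1-\alpha)\alpha^l \Ars^l$. Summing over all $l \geq 0$ yields
\begin{equation}
s_{\mathrm{RWR}}(i,j)=[\boldsymbol{\pi}_i(\infty)]_j=\left(\sum_{l=0}^{\infty}(1-\alpha)\alpha^l \Ars^l\right)_{i,j}.
\end{equation}
This is exactly the matrix form asserted in the lemma, and it makes the alignment transparent.

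With the matrix form in hand, the final step is simply to match coefficients against Equation~\eqref{eq:formulation}. Setting every propagation operator $\mA^{(m)}=\Ars$ for $m\geq 1$ makes the product $\prod_{m=0}^{l}\mA^{(m)}$ collapse to $\Ars^l$ (recalling $\mA^{(0)}=\boldsymbol{I}_N$), and choosing $\beta^{(l)}=(1-\alpha)\alpha^l$ for all $l\geq 0$ with $L=\infty$ reproduces each term of the series. Hence $\boldsymbol{H}_{i,j}=h(i,j)=s_{\mathrm{RWR}}(i,j)$, completing the proof.

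I do not expect a serious obstacle here, since the result is essentially definitional once the series expansion is invoked. The one point requiring care is the convergence and the precise bookkeeping of the $(1-\alpha)$ restart factor in the series expansion: one should confirm that the standard RWR stationary vector truly equals the geometric series term-by-term (this uses $0<\alpha<1$ and the fact that $\Ars$ is row-stochastic so its spectral radius is $1$, guaranteeing $\sum_l \alpha^l \Ars^l$ converges to $(\boldsymbol{I}_N-\alpha\Ars)^{-1}$). This convergence detail is the only place where rigor beyond routine algebra is needed; everything else is direct substitution into the unified formulation.
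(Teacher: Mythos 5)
Your proposal is correct and follows essentially the same route as the paper: unfold the RWR fixed-point recursion into the geometric (Neumann) series $\sum_{l\ge 0}(1-\alpha)\alpha^l$ times powers of the transition matrix, then match operators and weights against Equation~\eqref{eq:formulation}. The only difference is bookkeeping: the paper works in the column-vector convention with $\Acs$ as the transition matrix and converts $(\Acs^l)_{j,i}$ to $(\Ars^l)_{i,j}$ via a transpose at the end, whereas you read off $(\Ars^l)_{i,j}$ directly as the $l$-step walk probability, which is equivalent since $\Acs^{\top}=\Ars$.
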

\begin{proof}
    Random Walk with Restart (RWR) calculates the stationary distribution of a random walker starting at $i$, who iteratively moves to a random neighbor of its current position with probability $\alpha$ or returns to $i$ with probability $1-\alpha$. 
    Let $\boldsymbol{\pi}_i$ denote the probability vector of reaching any node starting a random walk from node $i$.
    Let $\Acs$ be the transition matrix and $(\tilde{\boldsymbol{A}}_{\mathrm{cs}})_{ij} = (\tilde{\boldsymbol{A}} \tilde{\boldsymbol{D}}^{-1})_{ij}=\frac{1}{\tilde{d}_j}$ if $(i, j) \in \mathcal{E}$ and $(\tilde{\boldsymbol{A}}_{\mathrm{cs}})_{ij}=0$ otherwise.
    Let $\boldsymbol{e}_i$ be an indicator vector with the $i^{\text{th}}$ element being 1 and others being 0.
    The probability of reaching each node can be iteratively approximated by
    \begin{equation}
    \label{iter}
    \boldsymbol{\pi}_i(t)=\alpha \Acs \boldsymbol{\pi}_i(t-1)+(1-\alpha) \boldsymbol{e}_i.
    \end{equation}
    The stationary distribution of this probability vector can be calculated as $t \rightarrow \infty$,
    \begin{equation}
    \label{infinity}
    \boldsymbol{\pi}_i(\infty)=(1-\alpha) (\boldsymbol{I}_N-\alpha\Acs)^{-1} \boldsymbol{e}_i.
    \end{equation}
    
    We calculate the closed-form solution to the iterative function of Equation~\eqref{iter} with $\boldsymbol{\pi}_i(0)=0$:
    \begin{equation}
    \boldsymbol{\pi}_i(t)=\sum_{l=0}^{t-1} (1-\alpha)\alpha^l \Acs^l \boldsymbol{e}_i,
    \end{equation}
    where $t \geq 1$.
    We can set $t \rightarrow \infty$,
    \begin{equation}
    \boldsymbol{\pi}_i(\infty)=\sum_{l=0}^{\infty}(1-\alpha) \alpha^l \Acs^l \boldsymbol{e}_i=(1-\alpha) (\boldsymbol{I}_N-\alpha\Acs)^{-1} \boldsymbol{e}_i,
    \end{equation}
    which conforms to the stationary distribution of Equation~\eqref{infinity}.
    By substituting the indicator vector $\boldsymbol{e}_i$ with the identity matrix $\boldsymbol{I}_N$, we obtain the RWR matrix 
    \begin{equation}
    \boldsymbol{\Pi}(\infty)=\sum_{l=0}^{\infty}(1-\alpha)\alpha^l \Acs^l, 
    \end{equation}
    whose element $(j,i)$ specifies the probability of the random walker starts from node $i$ and locates at node $j$ in the stationary state.
    The heuristic for link $(i,j)$ is given by this random walk probability from $i$ to $j$, denoted as $[\boldsymbol{\pi}_i(\infty)]_j$ (or $[\boldsymbol{\pi}_i(\infty)]_j+[\boldsymbol{\pi}_j(\infty)]_i$ for symmetry):
    \begin{equation}
    \begin{aligned}
        s_{\mathrm{RWR}}(i,j)&=[\boldsymbol{\pi}_i(\infty)]_j\\&=\left[\boldsymbol{\Pi}(\infty)\right]_{j,i}
        \\&=\left(\sum_{l=0}^{\infty}(1-\alpha)\alpha^l \Acs^l\right)_{j,i}\\&=\left(\sum_{l=0}^{\infty}(1-\alpha)\alpha^l \Ars^l\right)_{i,j}.
    \end{aligned}
    \end{equation}
     Using the settings $\mA^{(m)} = \Ars$ for $m \geq 1$,  $\beta^{(l)}=(1-\alpha)\alpha^l$ for $l \geq 0$, $L = \infty$ in Equation~\eqref{eq:formulation} ensures that \(\boldsymbol{H}_{i, j}=h(i,j)=s_\mathrm{RWR}(i, j)\).
\end{proof}

\begin{lemma}
    The Local Path Index (LPI), denoted as 
    $$s_{\mathrm{LPI}}(i, j)=\sum_{l=2}^L \gamma^{l-2} |{\textnormal{paths}}_{i,j}^l|=\left(\sum_{l=2}^L \gamma^{l-2} \A^l\right)_{i,j},$$
    where ${\textnormal{paths}}_{i,j}^l$ is the set of length-$l$ paths between nodes $i$ and $j$, $\gamma$ is a damping factor, conforms to the formulation when $\mA^{(m)} = \A$ for $1 \leq m \leq L$, $\beta^{(0)} = \beta^{(1)} =0$, $ \beta^{(l)}=\gamma^{l-2}$ for $2 \leq l \leq L$.
\end{lemma}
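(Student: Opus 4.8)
The plan is to follow the same two-stage template used for the preceding global heuristics KI and GLHN (Lemma~\ref{lemma:KI}): first establish the matrix form of LPI, then verify that the stated configuration reproduces it inside the unified formulation of Equation~\eqref{eq:formulation}. The only structural difference from those cases is that LPI truncates the series at a finite maximum order $L$ rather than summing to infinity, so the argument is actually a slight simplification.

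First I would invoke the path-counting interpretation of adjacency-matrix powers already used throughout this section: the $(i,j)$ entry of $\A^l$ equals the number of length-$l$ paths between nodes $i$ and $j$, i.e. $|{\textnormal{paths}}_{i,j}^l| = (\A^l)_{i,j}$. Substituting this identity term-by-term into the definition $s_{\mathrm{LPI}}(i,j) = \sum_{l=2}^L \gamma^{l-2}|{\textnormal{paths}}_{i,j}^l|$ and pulling the finite sum inside the $(i,j)$-entry operator yields the matrix form $s_{\mathrm{LPI}}(i,j) = \left(\sum_{l=2}^L \gamma^{l-2}\A^l\right)_{i,j}$.

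Second I would specialize the formulation. Setting $\mA^{(m)} = \A$ for all $1 \leq m \leq L$ and recalling $\mA^{(0)} = \boldsymbol{I}_N$ collapses each product to $\prod_{m=0}^l \mA^{(m)} = \A^l$. Plugging in the weights $\beta^{(0)} = \beta^{(1)} = 0$ and $\beta^{(l)} = \gamma^{l-2}$ for $2 \leq l \leq L$ annihilates the $l=0$ and $l=1$ terms, so Equation~\eqref{eq:formulation} reduces to $\boldsymbol{H} = \sum_{l=2}^L \gamma^{l-2}\A^l$. Reading off the $(i,j)$ entry gives $\boldsymbol{H}_{i,j} = s_{\mathrm{LPI}}(i,j)$, completing the alignment.

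There is no genuine obstacle here; the argument is a direct specialization. The only point deserving care is consistency of the path-counting convention: because $\A = \tilde{\boldsymbol{A}}$ carries self-loops, $|{\textnormal{paths}}_{i,j}^l|$ must be read in the same walk-counting sense fixed in Lemma~\ref{lemma:KI} and the GLHN lemma, rather than as counting simple paths. Once this convention is inherited, both the matrix-form derivation and the configuration check are immediate, with the finite truncation at $L$ being the sole feature distinguishing LPI from the infinite-series heuristics KI and GLHN.
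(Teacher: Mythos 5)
Your proposal is correct and follows essentially the same route as the paper's own proof: it invokes the identity $|{\textnormal{paths}}_{i,j}^l| = (\A^l)_{i,j}$ to obtain the matrix form, then plugs the stated propagation operators and weights into Equation~\eqref{eq:formulation} to recover $\boldsymbol{H}_{i,j}=s_{\mathrm{LPI}}(i,j)$, just as the paper does by "following Lemma~\ref{lemma:KI}." Your added caveat about reading path counts in the walk-counting sense (since $\A$ carries self-loops) is a reasonable consistency remark but does not change the argument.
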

\begin{proof}
    Following Lemma~\ref{lemma:KI}, using the adjacency matrices, we can derive the matrix form of the heuristic LPI as follows:
    \begin{equation}
    s_{\mathrm{LPI}}(i, j)=\sum_{l=2}^L \gamma^{l-2} |{\textnormal{paths}}_{i,j}^l|=\sum_{l=2}^L \gamma^{l-2}(\A^l)_{i,j}=\left(\sum_{l=2}^L \gamma^{l-2} \A^l\right)_{i,j}.
    \end{equation}
    Using the settings $\mA^{(m)} = \A$ for $1 \leq m \leq L$, $\beta^{(0)} = \beta^{(1)} =0$, $ \beta^{(l)}=\gamma^{l-2}$ for $2 \leq l \leq L$ in Equation~\eqref{eq:formulation} ensures that \(\boldsymbol{H}_{i, j}=h(i,j)=s_\mathrm{LPI}(i, j)\).
\end{proof}

\begin{lemma}
    The Local Random Walks (LRW), denoted as $$s_{\mathrm{LRW}}(i,j)=\frac{\tilde{d}_i}{2M}[\boldsymbol{\pi}_i(L)]_j =\left(\sum_{l=0}^{L-1}\frac{\tilde{d}_i}{2M}(1-\alpha)\alpha^l \Ars^l\right)_{i,j},$$ 
    conforms to the formulation when $\mA^{(m)} = \Ars$ for $1 \leq m \leq L-1$,  $\beta^{(l)}=\frac{\tilde{d}_i}{2M} (1-\alpha)\alpha^l$ for $0 \leq l \leq L-1$.
\end{lemma}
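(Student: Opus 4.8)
The plan is to follow the same template used for the RWR lemma (Lemma~\ref{lemma:RWR}), adapting it to the finite-horizon, degree-weighted setting of LRW. The essential observation is that LRW uses exactly the same random-walk transition dynamics as RWR, but truncates the walk at a finite number of steps $L$ and rescales the result by the factor $\frac{\tilde{d}_i}{2M}$. So the bulk of the work — deriving the closed form of the probability vector $\boldsymbol{\pi}_i(t)$ — is already done in Lemma~\ref{lemma:RWR}, and I would reuse it directly rather than rederiving it.

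First I would recall the iterative definition of the random walk with restart, $\boldsymbol{\pi}_i(t)=\alpha \Acs \boldsymbol{\pi}_i(t-1)+(1-\alpha)\boldsymbol{e}_i$, together with the closed-form solution already established in Lemma~\ref{lemma:RWR}, namely $\boldsymbol{\pi}_i(t)=\sum_{l=0}^{t-1}(1-\alpha)\alpha^l \Acs^l \boldsymbol{e}_i$ for $t\geq 1$. Next I would evaluate this at the finite horizon $t=L$ rather than taking $t\to\infty$, giving $\boldsymbol{\pi}_i(L)=\sum_{l=0}^{L-1}(1-\alpha)\alpha^l \Acs^l \boldsymbol{e}_i$. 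I would then multiply by the prefactor $\frac{\tilde{d}_i}{2M}$ and extract the $j$-th component, using the same transpose identity relating $\Acs$ and $\Ars$ that appears at the end of the RWR proof — namely that the $(j,i)$ entry of a polynomial in $\Acs$ equals the $(i,j)$ entry of the corresponding polynomial in $\Ars$ — to convert $[\boldsymbol{\pi}_i(L)]_j=[\boldsymbol{\Pi}(L)]_{j,i}$ into a row-stochastic matrix form. This yields
\begin{equation}
s_{\mathrm{LRW}}(i,j)=\frac{\tilde{d}_i}{2M}[\boldsymbol{\pi}_i(L)]_j=\left(\sum_{l=0}^{L-1}\frac{\tilde{d}_i}{2M}(1-\alpha)\alpha^l \Ars^l\right)_{i,j}.
\end{equation}

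Finally I would read off the alignment with the unified formulation of Equation~\eqref{eq:formulation}: choosing $\mA^{(m)}=\Ars$ for every $1\leq m\leq L-1$ makes each product $\prod_{m=0}^{l}\mA^{(m)}$ equal to $\Ars^{l}$ (recall $\mA^{(0)}=\boldsymbol{I}_N$), and setting $\beta^{(l)}=\frac{\tilde{d}_i}{2M}(1-\alpha)\alpha^l$ for $0\leq l\leq L-1$ reproduces the coefficients term by term, so that $\boldsymbol{H}_{i,j}=h(i,j)=s_{\mathrm{LRW}}(i,j)$.

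The one conceptual subtlety worth flagging is the degree-dependent prefactor $\frac{\tilde{d}_i}{2M}$. Unlike the other heuristics, this weight depends on the source node $i$, so strictly speaking $\beta^{(l)}$ is row-dependent rather than a single scalar; I would note that this factor can be absorbed either as a per-row scaling (a left multiplication by the diagonal matrix $\frac{1}{2M}\tilde{\boldsymbol{D}}$) or simply by interpreting $\beta^{(l)}$ entrywise along rows, consistent with how the table entry records it. This is the only place where the finite-horizon degree-weighted structure departs from the clean scalar-coefficient pattern of the earlier lemmas, and is where I would take the most care in the writeup; the remaining algebra is routine and inherited directly from the RWR derivation.
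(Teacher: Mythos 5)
Your proposal is correct and follows essentially the same route as the paper's proof: reuse the closed-form iterate from the RWR lemma, truncate at $t=L$, apply the $\frac{\tilde{d}_i}{2M}$ prefactor, and use the transpose identity to pass from $\Acs$ indexed at $(j,i)$ to $\Ars$ indexed at $(i,j)$ before reading off the configuration. Your added remark that the coefficient $\beta^{(l)}=\frac{\tilde{d}_i}{2M}(1-\alpha)\alpha^l$ is row-dependent (absorbable as a left multiplication by $\frac{1}{2M}\tilde{\boldsymbol{D}}$) is a valid subtlety the paper passes over silently, but it does not change the argument.
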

\begin{proof}
    Following Lemma~\ref{lemma:RWR},
    \begin{equation}
    \boldsymbol{\pi}_i(L)=\sum_{l=0}^{L-1} (1-\alpha)\alpha^l \Acs^l \boldsymbol{e}_i,
    \end{equation}
    where $L \geq 1$.
    By substituting the indicator vector $\boldsymbol{e}_i$ with the identity matrix $\boldsymbol{I}_N$, we obtain the RWR matrix 
    \begin{equation}
    \boldsymbol{\Pi}(L)=\sum_{l=0}^{L-1}(1-\alpha)\alpha^l \Acs^l.
    \end{equation}
    Using the adjacency matrices, we can derive the matrix form of the heuristic LRW as follows:
    \begin{equation}
    \begin{aligned}
        s_{\mathrm{LRW}}(i,j)&=\frac{\tilde{d}_i}{2M}[\boldsymbol{\pi}_i(L)]_j\\
        &=\frac{\tilde{d}_i}{2M}\left[\boldsymbol{\Pi}(L)\right]_{j,i}\\
        &=\left(\sum_{l=0}^{L-1}\frac{\tilde{d}_i}{2M}(1-\alpha)\alpha^l \Acs^l\right)_{j,i}\\
        &=\left(\sum_{l=0}^{L-1}\frac{\tilde{d}_i}{2M}(1-\alpha)\alpha^l \Ars^l\right)_{i,j}.
    \end{aligned}
    \end{equation}
     Using the settings $\mA^{(m)} = \Ars$ for $1 \leq m \leq L-1$,  $\beta^{(l)}=\frac{\tilde{d}_i}{2M} (1-\alpha)\alpha^l$ for $0 \leq l \leq L-1$ in Equation~\eqref{eq:formulation} ensures that \(\boldsymbol{H}_{i, j}=h(i,j)=s_\mathrm{LRW}(i, j)\).
\end{proof}

\begin{proposition*}
    Our formulation can accommodate a broad spectrum of local and global heuristics with propagation operators $\mathbb{A}^{(m)}$ for $1 \leq m \leq L$, weight parameters $\beta^{(l)}$ for $0 \leq l \leq L$, and maximum order $L$. 
\end{proposition*}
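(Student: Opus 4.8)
The plan is to prove the proposition as a direct corollary of the collection of lemmas established above, each of which accommodates one representative heuristic within Equation~\eqref{eq:formulation}. The unifying engine is the combinatorial reading of matrix products already noted in the main text: the $(i,j)$ entry of the $l$-fold product $\prod_{m=1}^{l}\mathbb{A}^{(m)}$ is a sum over length-$l$ walks from $i$ to $j$, where the normalization chosen on each factor fixes the degree weight attached to each walk. With this observation in hand, accommodating a given heuristic reduces to (i) rewriting its defining score as an $(i,j)$ entry of a possibly infinite weighted sum of operator products, and (ii) reading off the operators $\mathbb{A}^{(m)}$, the weights $\beta^{(l)}$, and the order $L$ that reproduce that sum.

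First I would dispatch the local heuristics (CN, LLHN, RA), all of which are order-$L=2$ single-term instances with $\beta^{(0)}=\beta^{(1)}=0$ and $\beta^{(2)}=1$. The only work is to check that the common-neighbor count $\sum_{k}\tilde a_{ik}\tilde a_{kj}$ carries the correct degree factors: no normalization yields CN, row/column normalization on the two factors yields the $1/(\tilde d_i\tilde d_j)$ of LLHN, and placing a normalization adjacent to the summation index $k$ yields the $1/\tilde d_k$ of RA. Each reduces to an elementary entrywise identity.

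Next I would treat the global heuristics (KI, GLHN, LPI, LRW, RWR), which set every operator to a common matrix and encode the heuristic's decay profile into the sequence $\beta^{(l)}$. For the finite-order cases (LPI, LRW) this is immediate once the path-counting identity is applied term by term. For the genuinely infinite cases (KI, GLHN, RWR) the extra ingredient is a convergence argument: KI and GLHN require the damping factor to lie below the reciprocal spectral radius of $\A$, so the Neumann-type series $\sum_l \gamma^l\A^l$ converges, while for RWR I would use that the stochastic operator $\Acs$ has spectral radius one and $\alpha<1$, whence $\sum_l \alpha^l \Acs^l=(\boldsymbol{I}_N-\alpha\Acs)^{-1}$ reproduces the stationary distribution of Equation~\eqref{infinity}.

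The step I expect to be the main obstacle is RWR, for two reasons. First, one must justify passing from the iterative update to its closed-form resolvent and confirm it agrees with the stationary distribution, which is precisely where the spectral control enters. Second, the random walk is naturally written through the column-stochastic transition matrix $\Acs$, whereas the formulation exposes the row-stochastic operator $\Ars$; reconciling the two requires the transpose relation $\bigl(\Acs^l\bigr)_{j,i}=\bigl(\Ars^l\bigr)_{i,j}$, i.e. reading the $i\to j$ walk off the appropriate entry. A related mild subtlety is LRW, whose weight $\beta^{(l)}=\frac{\tilde d_i}{2M}(1-\alpha)\alpha^l$ carries a source-node factor $\tilde d_i/(2M)$; I would absorb this scalar into the coefficient and note that it rescales each row uniformly, so the entrywise identity is preserved. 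Collecting these cases establishes that the formulation accommodates the stated spectrum of local and global heuristics.
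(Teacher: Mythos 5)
Your proposal follows essentially the same route as the paper: it establishes the proposition as a corollary of per-heuristic lemmas, verifying each local heuristic as an order-two entrywise identity and each global heuristic as a weighted series in a single propagation operator, including the transpose step $(\Acs^l)_{j,i}=(\Ars^l)_{i,j}$ that the paper also uses for RWR and LRW. The only additions beyond the paper's own argument are welcome refinements rather than a different approach --- you make explicit the convergence condition on the damping factor for the infinite series (KI, GLHN, RWR) and flag the source-node-dependent factor $\tilde d_i/(2M)$ in the LRW weights, both of which the paper passes over silently.
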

\begin{proof}
    Building upon above lemmas, it becomes evident that a diverse range of local and global heuristics can be represented in matrix form. Our formulation is capable of accommodating a wide range of local and global heuristics under specific configurations. This serves to prove Proposition~\ref{prop:formulation-heuristic}.
\end{proof}

\subsection{Proof of Proposition~\ref{prop:relationship}} 
\label{para:relationship_proof}
\begin{proposition*}
    The relationship between the learned representations $\boldsymbol{Z}$ in Equation~\eqref{eq:HLGNN} and the heuristic formulation $\boldsymbol{H}$ in Equation~\eqref{eq:formulation} is given by $\boldsymbol{Z} = \boldsymbol{H}\boldsymbol{X}$, where $\boldsymbol{X}$ is the node feature matrix.
\end{proposition*}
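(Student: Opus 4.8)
The plan is to prove the identity $\boldsymbol{Z} = \boldsymbol{H}\boldsymbol{X}$ by unrolling the layerwise recursion of Equation~\eqref{eq:HLGNN} into a closed form and then matching it term-by-term against the summands of Equation~\eqref{eq:formulation}. The mathematical content is elementary linear algebra; the only care needed is bookkeeping of the matrix-product order, since the propagation operators $\mA^{(m)}$ need not commute.

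First I would show, by induction on $l$, that the intermediate representation admits the closed form
\begin{equation}
\boldsymbol{Z}^{(l)} = \left(\prod_{m=0}^{l} \mA^{(m)}\right)\boldsymbol{X}, \qquad 0 \le l \le L.
\end{equation}
The base case $l=0$ is immediate: the product $\prod_{m=0}^{0}\mA^{(m)}$ reduces to the single factor $\mA^{(0)}=\boldsymbol{I}_N$, so its action on $\boldsymbol{X}$ returns $\boldsymbol{Z}^{(0)}=\boldsymbol{I}_N\boldsymbol{X}=\boldsymbol{X}$, matching the initialization.

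For the inductive step I would assume the claim at level $l-1$ and apply the recurrence $\boldsymbol{Z}^{(l)}=\mA^{(l)}\boldsymbol{Z}^{(l-1)}$, which yields $\boldsymbol{Z}^{(l)}=\mA^{(l)}\big(\prod_{m=0}^{l-1}\mA^{(m)}\big)\boldsymbol{X}$ and collapses into $\big(\prod_{m=0}^{l}\mA^{(m)}\big)\boldsymbol{X}$ once the product convention of Equation~\eqref{eq:formulation} is fixed consistently. Having established the closed form, I would substitute it into the output aggregation $\boldsymbol{Z}=\sum_{l=0}^{L}\beta^{(l)}\boldsymbol{Z}^{(l)}$ and factor the common right factor $\boldsymbol{X}$ out of the sum, using distributivity of matrix multiplication over addition together with the fact that each $\beta^{(l)}$ is a scalar:
\begin{equation}
\boldsymbol{Z} = \sum_{l=0}^{L}\beta^{(l)}\left(\prod_{m=0}^{l}\mA^{(m)}\right)\boldsymbol{X} = \left(\sum_{l=0}^{L}\beta^{(l)}\prod_{m=0}^{l}\mA^{(m)}\right)\boldsymbol{X} = \boldsymbol{H}\boldsymbol{X}.
\end{equation}

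The one genuine subtlety — and the step I would treat most carefully — is reconciling the order in which operators accumulate. The recursion left-multiplies by $\mA^{(l)}$ at each step, so the unrolled product applies operators from the highest index inward; this must be identified with the product $\prod_{m=0}^{l}\mA^{(m)}$ appearing in the formulation under a single, fixed ordering convention, namely the same one used to recover the matrix forms in Table~\ref{tab:heursitics}. Once that convention is pinned down at the outset, no commutativity of the $\mA^{(m)}$ is required and the identity follows directly, making the induction airtight.
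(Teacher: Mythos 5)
Your proposal is correct and follows essentially the same route as the paper's proof: unroll the recursion $\boldsymbol{Z}^{(l)}=\mathbb{A}^{(l)}\boldsymbol{Z}^{(l-1)}$ into the closed form $\boldsymbol{Z}^{(l)}=\bigl(\prod_{m=0}^{l}\mathbb{A}^{(m)}\bigr)\boldsymbol{X}$ using $\mathbb{A}^{(0)}=\boldsymbol{I}_N$, substitute into the weighted sum, and factor out $\boldsymbol{X}$. The only difference is cosmetic: you make the unrolling an explicit induction and flag the operator-ordering convention, which the paper leaves implicit.
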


\begin{proof}
    Using Equation~\eqref{eq:HLGNN}, we have:
    \begin{equation}
    \boldsymbol{Z}^{(l)} = \mathbb{A}^{(l)} \boldsymbol{Z}^{(l-1)},
    \end{equation}
    with the initial condition 
    \begin{equation}
    \boldsymbol{Z}^{(0)} = \boldsymbol{X}.
    \end{equation}
    Iteratively using the equation, we get:
    \begin{equation}
    \label{eq:proof_iter}
    \boldsymbol{Z}^{(l)} = \prod_{m=1}^l \mathbb{A}^{(m)} \boldsymbol{X}= \prod_{m=0}^l \mathbb{A}^{(m)} \boldsymbol{X}.
    \end{equation}
    The second equality is due to $\mathbb{A}^{(0)}=\boldsymbol{I}_N$.
    Using Equation~\eqref{eq:HLGNN}, we can express \(\boldsymbol{Z}\) as:
    \begin{equation}
    \boldsymbol{Z} = \sum_{l=0}^L \beta^{(l)} \boldsymbol{Z}^{(l)}.
    \end{equation}
    Substituting Equation~\eqref{eq:proof_iter}, we get:
    \begin{equation}
    \boldsymbol{Z} = \sum_{l=0}^L \left(\beta^{(l)} \prod_{m=0}^l \mathbb{A}^{(m)} \boldsymbol{X}\right)=\left(\sum_{l=0}^L \left(\beta^{(l)} \prod_{m=0}^l \mathbb{A}^{(m)} \right)\right)\boldsymbol{X}.
    \end{equation}
    Now, examining Equation~\eqref{eq:formulation}, it becomes evident that:
    \begin{equation}
    \boldsymbol{Z} = \boldsymbol{H}\boldsymbol{X}.
    \end{equation}
\end{proof}

\section{Additional Experimental Details}
\subsection{Detailed Information about Datasets}
\label{para:dataset_statistics}

\begin{table*}[tbp]
    \caption{Statistics of the Planetoid, Amazon, and OGB datasets used in the experiments.}
    \label{tab:dataset_statistics}
    \centering
    \begin{tabular}{lccccccccc}
    \toprule &  \texttt{Cora} & \texttt{Citeseer} & \texttt{Pubmed}& \texttt{Photo}&\texttt{Computers} & \texttt{collab}  &\texttt{ddi} & \texttt{ppa}&\texttt{citation2}\\
    \midrule \textbf{\# Nodes} & 2,708 & 3,327 & 18,717 &   7,650&13,752&235,868 & 4,267  & 576,289&2,927,963\\
    \textbf{\# Edges} & 5,278 & 4,676 & 44,327 &   238,162&491,722&1,285,465& 1,334,889 & 30,326,273&30,561,187\\
    \textbf{\# Features}& 1,433& 3,703& 500& 745& 767& 128&/ & 58&128\\
    \textbf{Dataset Split}& random & random & random &   random&random&time& time& throughput&protein\\
    \textbf{Average Degree}& 3.90& 2.81& 4.73&   62.26&71.51&5.45 & 312.84  & 52.62&10.44\\
    \bottomrule
    \end{tabular}
\end{table*}

We employ a diverse set of benchmark datasets to comprehensively evaluate the link prediction performance of HL-GNN. Each dataset captures distinct aspects of relationships and presents unique challenges for link prediction.
We utilize nine datasets from three sources: Planetoid~\cite{planetoid}, Amazon~\cite{amazon}, and OGB~\cite{OGB}. 
The Planetoid datasets include \texttt{Cora}, \texttt{Citeseer}, and \texttt{Pubmed}. The Amazon datasets include \texttt{Photo} and \texttt{Computers}. 
The OGB datasets include \texttt{ogbl-collab}, \texttt{ogbl-ddi}, \texttt{ogbl-ppa}, and \texttt{ogbl-citation2}. 
The statistics of each dataset are shown in Table~\ref{tab:dataset_statistics}.

We introduce each dataset briefly: 
\begin{itemize}
    \item \texttt{Cora}: This dataset represents a citation network. Nodes in the graph correspond to documents, and edges denote citations between documents. Each document is associated with a bag-of-words representation.
    \item \texttt{Citeseer}: Similar to \texttt{Cora}, \texttt{Citeseer} is also a citation network. Nodes represent scientific articles, and edges signify citations. The node features include bag-of-words representations and information about the publication.
    \item \texttt{Pubmed}: Another citation network, \texttt{Pubmed} comprises nodes corresponding to scientific publications, and edges represent citations. Node features consist of binary indicators of word occurrences in the documents.
    \item \texttt{Photo}: This dataset captures a social network, where nodes represent users and edges denote friendships. Node features encompass visual and textual information from user profiles.
    \item \texttt{Computers}: Representing a co-purchase network, \texttt{Computers} has nodes representing computers and edges indicating co-purchases. Node features involve TF-IDF representations of computer terms.
    \item \texttt{ogbl-collab}: This dataset is an author collaboration network. Nodes are authors, and edges denote collaborations. The node features contain bag-of-words representations of authors' publications.
    \item \texttt{ogbl-ddi}: \texttt{ogbl-ddi} stands for drug-drug interaction, and this network models interactions between drugs. Edges indicate interactions, and the dataset contains information about drug structures.
    \item \texttt{ogbl-ppa}: This dataset represents a protein-protein association network. Nodes correspond to proteins, and edges indicate associations based on various biological evidence. Node features include protein sequences and structural information.
    \item \texttt{ogbl-citation2}: This dataset is a citation network that focuses on the dynamics of citations. Nodes represent papers, and edges indicate citation relationships. Node features consist of paper metadata.
\end{itemize}

These datasets are widely used for evaluating link prediction methods, each presenting unique challenges based on the nature of the relationships captured in the respective graphs.

\subsection{Detailed Information about Experiments}
\label{para:exp_detail}

\begin{table*}[tbp]
    \caption{The detailed hyperparameter configurations used in the experiments.}
    \label{tab:hyperparameter}
    \centering
    \begin{tabular}{llccccccccc}
    \toprule  &Hyperparameter & \texttt{Cora} & \texttt{CiteSeer} & \texttt{PubMed} & \texttt{Photo} & \texttt{Computers} & \texttt{collab} & \texttt{ddi}  & \texttt{ppa}&\texttt{citation2}\\
    \midrule
     \multirow{5}{*}{\textbf{GNN}} &GNN \# layers& 20 & 20 & 20 & 20 & 20 & 15 & 15  & 15  &15  \\
      & GNN hidden dim.& 1,433 & 3,703 & 500 & 745 & 767 & 256 & 512  & 512  &256\\
      & Node emb. dim. & - & - & - & - & - & 256 & 512  & 256&64\\
      & $\beta$ initialization & RWR & RWR & KI & RWR & RWR & KI & KI  & KI  &KI  \\
      & Init. parameter & $\alpha=0.2$ & $\alpha=0.2$ & $\gamma=0.2$ & $\alpha=0.2$ & $\alpha=0.2$ & $\gamma=0.5$ & $\gamma=0.5$  & $\gamma=0.5$  &$\gamma=0.6$\\
      \midrule
      \multirow{2}{*}{\textbf{MLP}} &MLP \# layers& 3 & 2 & 3 & 3 & 3 & 2 & 2  & 2  &2  \\
      & MLP hidden dim.& 8,192 & 8,192 & 512 & 512 & 512 & 256 & 512  & 512  &256\\
      \midrule
      \multirow{5}{*}{\textbf{Learning}}&Optimizer & Adam & Adam & Adam & Adam & Adam & Adam & Adam & Adam &Adam \\
      & Loss function& BCE& BCE& BCE& BCE& BCE& AUC& AUC& AUC&AUC\\
      & Learning rate& 0.001 & 0.001 & 0.001 & 0.001 & 0.001 & 0.001 & 0.001  & 0.001  &0.001  \\
      &\# Epochs & 100 & 100 & 300 & 200 & 200 & 800 & 500  & 500  &100\\
      &Dropout rate & 0.5 & 0.5 & 0.6 & 0.6 & 0.6 & 0.3 & 0.3  & 0.5&0.3  \\
    \bottomrule
    \end{tabular}
\end{table*}

In our experiments, we employ a set of hyperparameters for training HL-GNN across various datasets. The hyperparameter details are outlined in Table~\ref{tab:hyperparameter}. All experiments can be performed using an A100 (80GB) GPU. Hyperparameters are tuned via random search using Weights and Biases, involving 50 runs. The hyperparameters yielding the highest validation accuracy were selected, and results are reported on the test set.

The number of layers in HL-GNN is set to 20 for the Planetoid and Amazon datasets, and set to 15 for the OGB datasets. We keep the hidden dimensionality of HL-GNN the same as the raw node features on the Planetoid and Amazon datasets. Additionally, we leverage node embeddings on the OGB datasets to enhance the node representations. The dimensions of node embeddings are set to 256, 512, 256, and 64 for \texttt{ogbl-collab}, \texttt{ogbl-ddi}, \texttt{ogbl-ppa}, and \texttt{ogbl-citation2}, respectively.
 
Inspired by heuristics KI and RWR presented in Table~\ref{tab:heursitics}, we explore different initialization strategies for $\beta^{(l)}$. We adopt KI initialization as $\beta^{(l)} = \gamma^l$ and RWR initialization as $\beta^{(l)} = (1-\alpha)\alpha^l$. We consistently adopt KI initialization for the OGB datasets and RWR initialization for the Planetoid and Amazon datasets (except for $\texttt{Pubmed}$, which utilizes KI initialization). We do not impose constraints on $\beta^{(l)}$, allowing them to take positive or negative values.

The MLP utilized in HL-GNN serves as the predictor and comprises 3 layers for most Planetoid and Amazon datasets, and comprises 2 layers for the OGB datasets. 
We employ the Adam optimizer with a fixed learning rate of 0.001 for all datasets. Training is carried out for a specified number of epochs, ranging from 100 to 800, depending on the dataset. Dropout regularization is optimized among \{0.3, 0.4, 0.5, 0.6, 0.7\}.

\section{Additional Experiments}
\subsection{More Ablation Studies}
\subsubsection{Different initialization strategies}
We investigate the impact of various initial values for $\beta^{(l)}$ on the final performance in this section. 
Inspired by heuristics KI and RWR in Table~\ref{tab:heursitics}, we explore different initialization strategies, including:
\begin{itemize}
\item KI initialization: $\beta^{(l)} = \gamma^l$.
\item RWR initialization: $\beta^{(l)} = (1-\alpha)\alpha^l$.
\item Random initialization: $\beta^{(l)} \sim \mathcal{N}(0, 1)$.
\item Uniform initialization: $\beta^{(l)} = 1/L$.
\item Reverse-KI initialization: $\beta^{(l)} = \gamma^{L-l}$.
\item Final-layer initialization: $\beta^{(L)}=1$, with others set to 0.
\end{itemize}

The results of these different initialization strategies are presented in Figure~\ref{fig:initial}, with performance comparisons on four datasets. Notably, the KI and RWR initialization strategies outperform the others, underscoring the importance of emphasizing short-range dependencies, given their direct topological relevance.

It is noteworthy that even without extensive tuning, KI initialization at $\gamma=0.5$ for the OGB datasets and RWR initialization at $\alpha=0.2$ for the Planetoid and Amazon datasets consistently demonstrate strong performance. In practical applications, we consistently adopt KI initialization for the OGB datasets and RWR initialization for the Planetoid and Amazon datasets (except for $\texttt{Pubmed}$, which utilizes KI initialization) in our experiments. The initialization strategies for each dataset are summarized in Table~\ref{tab:hyperparameter}.

\begin{figure}[tbp]
    \centering
    \begin{subfigure}{0.49\linewidth}
        \includegraphics[width=\textwidth]{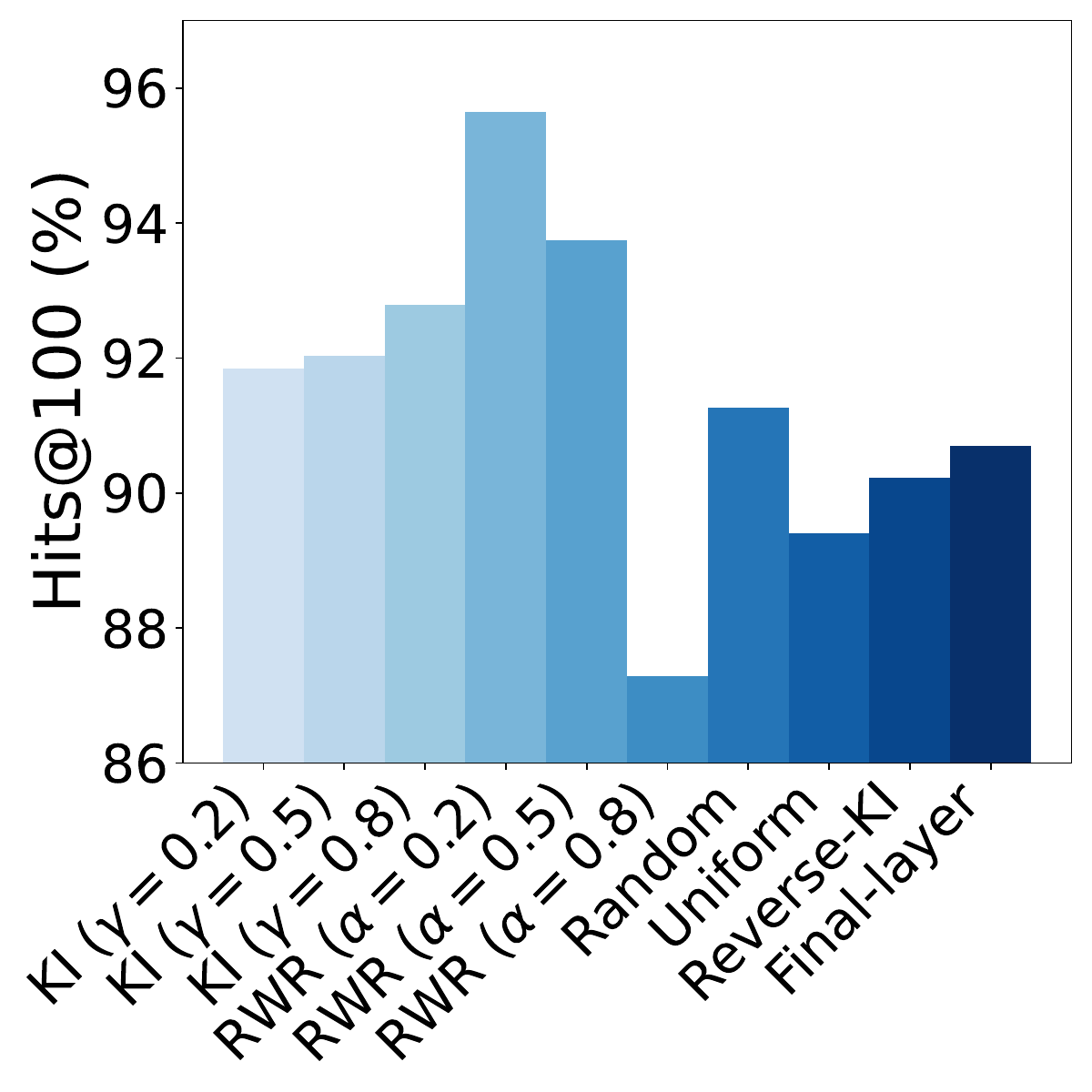}
        \caption{\texttt{Cora}.}
    \end{subfigure}
    \begin{subfigure}{0.49\linewidth}
    \includegraphics[width=\textwidth]{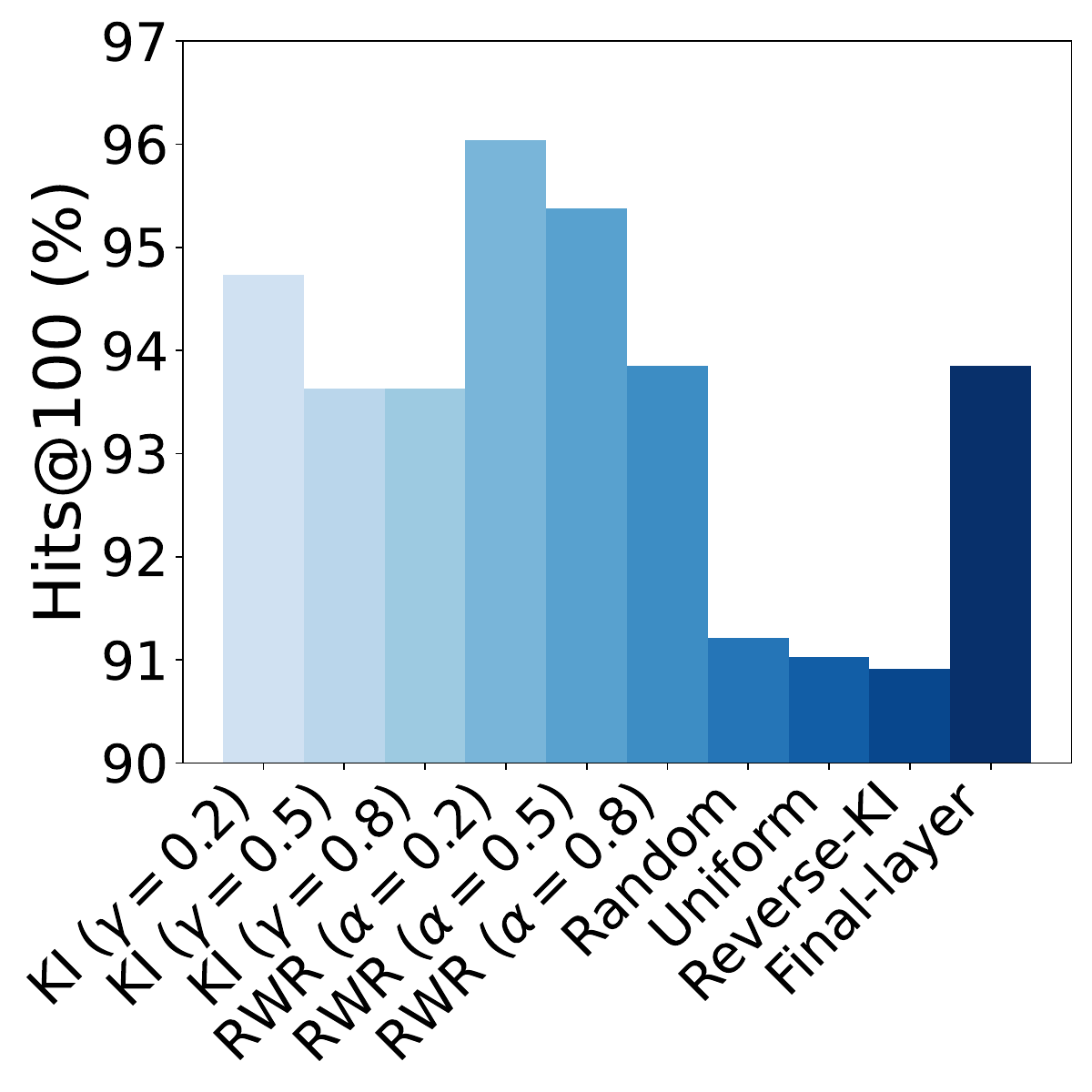}
    \caption{\texttt{Citeseer}.}
    \end{subfigure}
    \begin{subfigure}{0.49\linewidth}
        \includegraphics[width=\textwidth]{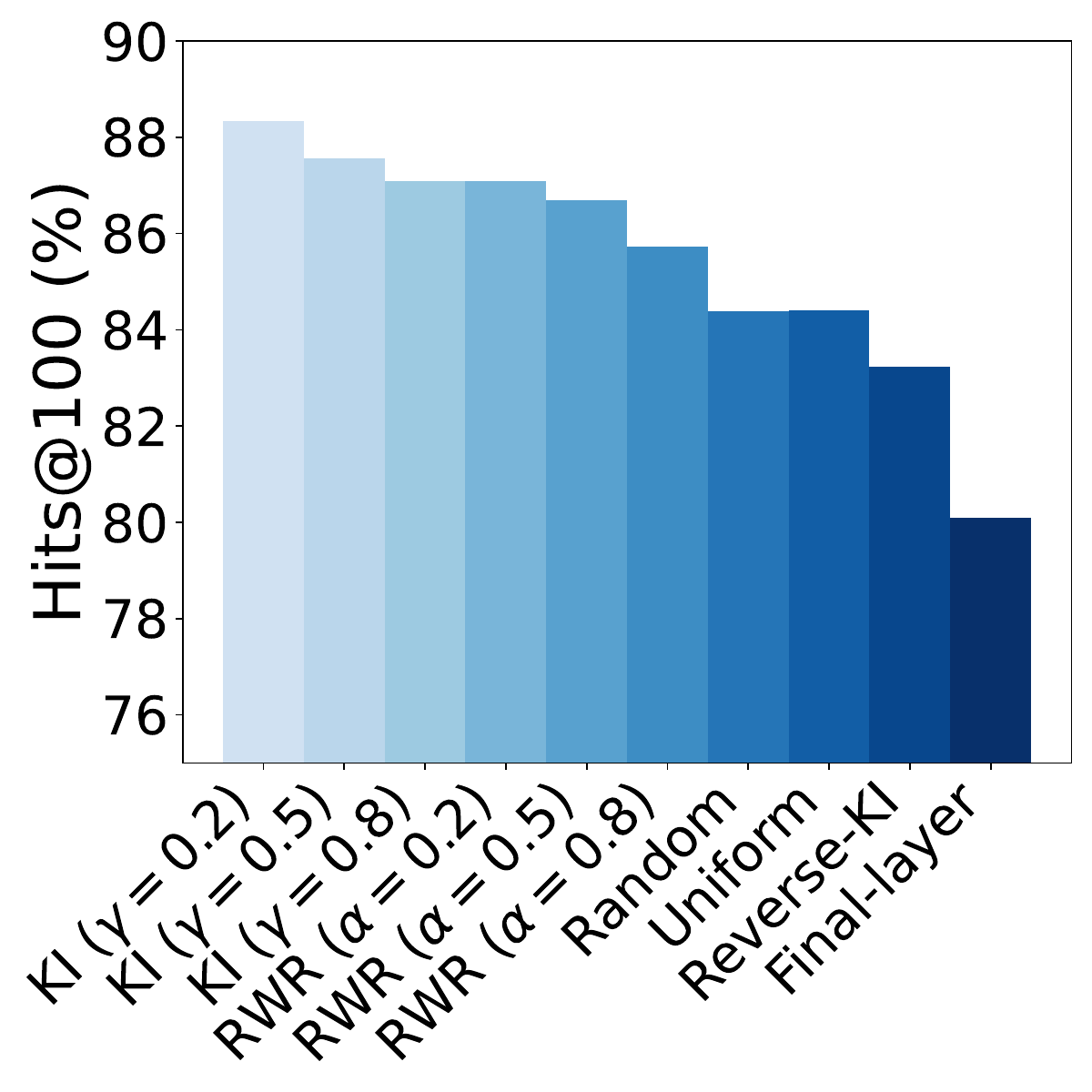}
        \caption{\texttt{Pubmed}.}
    \end{subfigure}
    \begin{subfigure}{0.49\linewidth}
        \includegraphics[width=\textwidth]{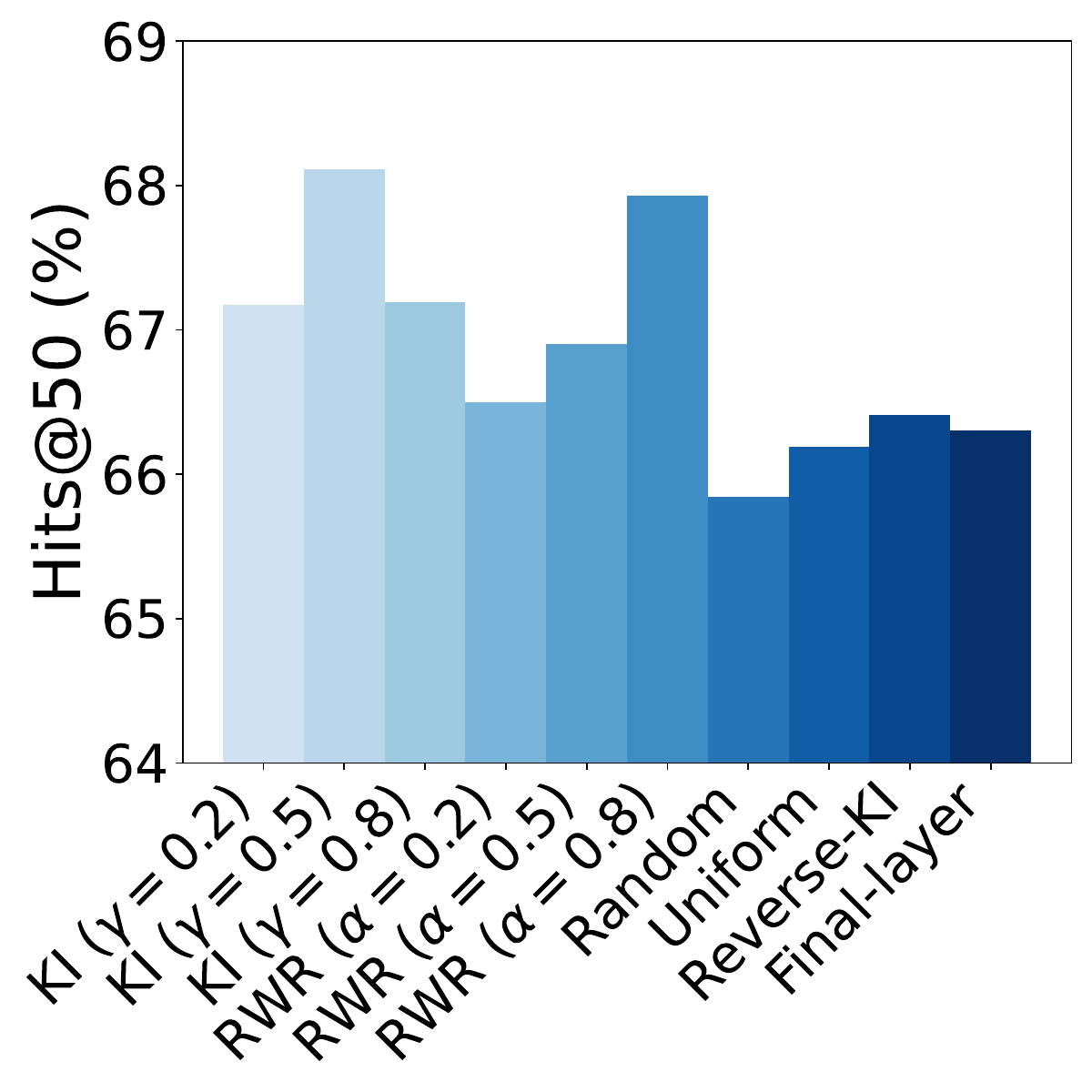}
        \caption{\texttt{ogbl-collab}.}
    \end{subfigure}
    \caption{Ablation study on different initialization strategies across various datasets.}
    \label{fig:initial}
\end{figure}

\subsubsection{Introduction of transformation and activation functions}
To investigate if the introduction of transformation and activation functions could potentially improve the performance of HL-GNN, we conduct additional experiments on these two functions. The results are summarized in Table~\ref{tab:appen_trans}.

\begin{table*}[tbp]
    \caption{Ablation study on the introduction of transformation and activation functions.}
    \label{tab:appen_trans}
    \centering
    \begin{tabular}{lccccccc}
    \toprule
        & \texttt{Cora} & \texttt{Citeseer} & \texttt{Pubmed}& \texttt{Photo}&\texttt{Computers} & \texttt{collab}  &\texttt{ddi}\\
       & Hits@100 & Hits@100 & Hits@100 & AUC &AUC & Hits@50 &Hits@20  \\
    \midrule
     \textbf{HL-GNN (w/ ReLU act.)}& 92.89\footnotesize{$\pm$1.89}& 93.15\footnotesize{$\pm$2.21}& 87.34\footnotesize{$\pm$1.27}& 98.23\footnotesize{$\pm$0.23}&97.79\footnotesize{$\pm$0.17}& 66.29\footnotesize{$\pm$0.71}&78.15\footnotesize{$\pm$2.69}\\
      \textbf{HL-GNN (w/ tran.)}& 87.35\footnotesize{$\pm$2.05}& 88.91\footnotesize{$\pm$2.45}& 83.39\footnotesize{$\pm$1.19}& 95.97\footnotesize{$\pm$0.18}&95.39\footnotesize{$\pm$0.41}& 60.47\footnotesize{$\pm$0.91}&73.04\footnotesize{$\pm$3.17}\\
      \textbf{HL-GNN (w/ ReLU act. \& tran.)}& 86.23\footnotesize{$\pm$1.87}& 86.74\footnotesize{$\pm$2.36}& 82.15\footnotesize{$\pm$1.69}& 95.45\footnotesize{$\pm$0.31}&94.75\footnotesize{$\pm$0.35}& 59.84\footnotesize{$\pm$0.88}&71.53\footnotesize{$\pm$4.46}\\
      \midrule
      \textbf{HL-GNN} & \textbf{94.22\footnotesize{$\pm$1.64}} & \textbf{94.31\footnotesize{$\pm$1.51}} & \textbf{88.15\footnotesize{$\pm$0.38}} & \textbf{99.11\footnotesize{$\pm$0.07}} & \textbf{98.82\footnotesize{$\pm$0.21}}  & \textbf{68.11\footnotesize{$\pm$0.54}} &\textbf{80.27\footnotesize{$\pm$3.98}} \\
    \bottomrule
    \end{tabular}
\end{table*}

The results reveal that the introduction of transformation and activation functions do not lead to an improvement in performance. This outcome can be attributed to the disturbance of learned heuristics by non-linearity, preventing them from retaining the form of matrix multiplications, which is the foundation of our formulation.
Furthermore, the inclusion of transformation matrices significantly increases the number of parameters, posing challenges to the learning process. Consequently, HL-GNN with transformation is unable to maintain the depth of around 20 layers. 
The optimal performance is achieved with a limited depth of less than 5 layers. 
This limitation contributes to the observed inferior performance of HL-GNN (w/ tran.) and HL-GNN (w/ ReLU act. \& tran.).

\subsection{Interpretability of Generalized Heuristics}
\label{para:learned_heuristics}
In this section, we present the learned heuristic formulation $\boldsymbol{H}$ to exhibit the interpretability of generalized heuristics. It's worth noting that each individual heuristic for a link $(i,j)$ can be directly extracted from the $(i,j)$ entry of $\boldsymbol{H}$. For clarity, we simplify Equation~\eqref{eq:relaxation} by omitting $\alpha^{(l)}$ and utilizing $\mathbb{A}^{(l)}=\tilde{\boldsymbol{A}}_{\mathrm{sym}}$. For conciseness, we provide the first ten weights below.

For $\texttt{Cora}$,
    \begin{equation}
    \begin{aligned}
\boldsymbol{H}_{\mathrm{Cora}}&=0.1795\boldsymbol{I}+ 0.1894\tilde{\boldsymbol{A}}_{\mathrm{sym}}+0.1484\tilde{\boldsymbol{A}}_{\mathrm{sym}}^2
+0.1213\tilde{\boldsymbol{A}}_{\mathrm{sym}}^3\\
&+0.0963\tilde{\boldsymbol{A}}_{\mathrm{sym}}^4 +0.0768\tilde{\boldsymbol{A}}_{\mathrm{sym}}^5+0.0606\tilde{\boldsymbol{A}}_{\mathrm{sym}}^6
+0.0477\tilde{\boldsymbol{A}}_{\mathrm{sym}}^7\\
&+0.0371\tilde{\boldsymbol{A}}_{\mathrm{sym}}^8+0.0285\tilde{\boldsymbol{A}}_{\mathrm{sym}}^9+\cdots.
    \end{aligned}
    \end{equation}

For $\texttt{Citeseer}$,    
    \begin{equation}
    \begin{aligned}
    \boldsymbol{H}_{\mathrm{Citeseer}}&=0.1993\boldsymbol{I}+ 0.1759\tilde{\boldsymbol{A}}_{\mathrm{sym}}+0.1380\tilde{\boldsymbol{A}}_{\mathrm{sym}}^2
    +0.1111\tilde{\boldsymbol{A}}_{\mathrm{sym}}^3\\
    &+0.0878\tilde{\boldsymbol{A}}_{\mathrm{sym}}^4+0.0694\tilde{\boldsymbol{A}}_{\mathrm{sym}}^5+ 0.0544\tilde{\boldsymbol{A}}_{\mathrm{sym}}^6+0.0422\tilde{\boldsymbol{A}}_{\mathrm{sym}}^7\\
    &+0.0324\tilde{\boldsymbol{A}}_{\mathrm{sym}}^8+0.0244\tilde{\boldsymbol{A}}_{\mathrm{sym}}^9+\cdots.
    \end{aligned}
    \end{equation}

For $\texttt{Pubmed}$, 
    \begin{equation}
    \begin{aligned}
\boldsymbol{H}_{\mathrm{Pubmed}}&=0.7386\boldsymbol{I}+ 0.5335\tilde{\boldsymbol{A}}_{\mathrm{sym}}+0.3126\tilde{\boldsymbol{A}}_{\mathrm{sym}}^2
+0.2787\tilde{\boldsymbol{A}}_{\mathrm{sym}}^3\\
&+0.2429\tilde{\boldsymbol{A}}_{\mathrm{sym}}^4+0.2252\tilde{\boldsymbol{A}}_{\mathrm{sym}}^5+ 0.2039\tilde{\boldsymbol{A}}_{\mathrm{sym}}^6+0.1884\tilde{\boldsymbol{A}}_{\mathrm{sym}}^7\\
&+0.1722\tilde{\boldsymbol{A}}_{\mathrm{sym}}^8+0.1592\tilde{\boldsymbol{A}}_{\mathrm{sym}}^9+\cdots.
    \end{aligned}
    \end{equation}

For $\texttt{ogbl-collab}$, 
    \begin{equation}
    \begin{aligned}
    \boldsymbol{H}_{\mathrm{collab}}&=0.1599\boldsymbol{I}+ 0.1004\tilde{\boldsymbol{A}}_{\mathrm{sym}}+0.4826\tilde{\boldsymbol{A}}_{\mathrm{sym}}^2
    +0.2545\tilde{\boldsymbol{A}}_{\mathrm{sym}}^3\\
    &+0.2328\tilde{\boldsymbol{A}}_{\mathrm{sym}}^4+0.1663\tilde{\boldsymbol{A}}_{\mathrm{sym}}^5+ 0.1210\tilde{\boldsymbol{A}}_{\mathrm{sym}}^6+0.0721\tilde{\boldsymbol{A}}_{\mathrm{sym}}^7\\
    &+0.0252\tilde{\boldsymbol{A}}_{\mathrm{sym}}^8-0.0224\tilde{\boldsymbol{A}}_{\mathrm{sym}}^9+\cdots.
    \end{aligned}
    \end{equation}

For $\texttt{ogbl-ddi}$, 
    \begin{equation}
    \begin{aligned}
    \boldsymbol{H}_{\mathrm{ddi}}&=0.1532\boldsymbol{I}+ 0.2840\tilde{\boldsymbol{A}}_{\mathrm{sym}}+1.5298\tilde{\boldsymbol{A}}_{\mathrm{sym}}^2
    +0.1154\tilde{\boldsymbol{A}}_{\mathrm{sym}}^3\\
    &+0.1123\tilde{\boldsymbol{A}}_{\mathrm{sym}}^4-0.0507\tilde{\boldsymbol{A}}_{\mathrm{sym}}^5- 0.0739\tilde{\boldsymbol{A}}_{\mathrm{sym}}^6-0.1022\tilde{\boldsymbol{A}}_{\mathrm{sym}}^7\\
    &-0.1102\tilde{\boldsymbol{A}}_{\mathrm{sym}}^8-0.1164\tilde{\boldsymbol{A}}_{\mathrm{sym}}^9+\cdots.
    \end{aligned}
    \end{equation}

\subsection{Leveraging Generalized Heuristics}
\label{para:mlp_only}
\begin{table*}[tbp]
    \caption{Performance comparison between training from scratch and training only the MLP predictor using the generalized heuristics. The format is average score $\pm$ standard deviation. }
    \label{tab:performance_mlp}
    \centering
    \begin{tabular}{lccccccc}
    \toprule
          & \texttt{Cora} & \texttt{Citeseer} & \texttt{Pubmed}& \texttt{Photo}&\texttt{Computers} & \texttt{collab}  &\texttt{ddi}\\
          & Hits@100 & Hits@100 & Hits@100 & AUC &AUC & Hits@50 &Hits@20  \\
    \midrule
         \textbf{From Scratch}&  94.22\footnotesize{$\pm$1.64}&  94.31\footnotesize{$\pm$1.51}&  88.15\footnotesize{$\pm$0.38}&  99.11\footnotesize{$\pm$0.07} & 98.82\footnotesize{$\pm$0.21}  & 68.11\footnotesize{$\pm$0.54} &80.27\footnotesize{$\pm$3.98} \\
         \textbf{Predictor Only}&  92.34\footnotesize{$\pm$2.98}&  92.59\footnotesize{$\pm$2.60}&  85.26\footnotesize{$\pm$1.84}&  97.47\footnotesize{$\pm$0.79}& 96.81\footnotesize{$\pm$1.50}& 62.81\footnotesize{$\pm$2.62}&71.95\footnotesize{$\pm$5.68}\\
    \bottomrule
    \end{tabular}
\end{table*}

With a generalized heuristic for each dataset, there is no need to train a GNN and predictor from scratch. Instead, we can simply follow the generalized heuristic and train an MLP predictor only, which is significantly more efficient than training from scratch. We utilize a pretrained preprocessing block, which is a linear layer that transforms the dimension of node features into hidden channels of the GNN. 
We also train node embeddings on the OGB datasets to enhance the node representations.

The performance for training from scratch and training the MLP only is presented in Table~\ref{tab:performance_mlp}. Training the MLP only yields comparable performance to training from scratch, but with quicker convergence. This highlights the efficacy of the learned generalized heuristics, allowing us to achieve comparable results efficiently by training a simple MLP. This significantly improves efficiency and conserves computational resources.

\section{New Heuristics Derivation}
\label{para:heuristic_advance}
Existing heuristics are primarily handcrafted and may not be optimal for real-world graphs.
Leveraging the propagation operators, weight parameters and maximum order offers the potential to learn generalized, possibly more effective heuristics. We introduce two new heuristics below.

\begin{lemma}
Given $\mA^{(1)}=\Acs$, $\mA^{(2)}=\Ars$, $\beta^{(0)}=\beta^{(1)}=0$, $\beta^{(2)}=1$, $L=2$ within the $h(i,j)$ formulation, we derive a new local heuristic represented as 
\[ s(i, j)=\sum_{k \in \Gamma_i \cap \Gamma_j} \frac{1}{\tilde{d}_k^2}. \]
\end{lemma}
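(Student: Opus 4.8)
The plan is to substitute the prescribed configuration directly into the unified formulation of Equation~\eqref{eq:formulation} and then read off the $(i,j)$ entry of the resulting matrix product, following the same skeleton as the proofs for CN, LLHN, and RA. Since $\beta^{(0)}=\beta^{(1)}=0$ and $\beta^{(2)}=1$ with $L=2$, only the $l=2$ term of the sum survives, so $\boldsymbol{H}=\prod_{m=0}^{2}\mathbb{A}^{(m)}=\mathbb{A}^{(0)}\mathbb{A}^{(1)}\mathbb{A}^{(2)}$. Using $\mathbb{A}^{(0)}=\boldsymbol{I}_N$ together with the assignments $\mathbb{A}^{(1)}=\Acs$ and $\mathbb{A}^{(2)}=\Ars$, this collapses to $\boldsymbol{H}=\Acs\Ars$.

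First I would expand this product entrywise using the closed forms in Table~\ref{tab:adjacency_matrices}. Because $\tilde{\boldsymbol{D}}^{-1}$ is diagonal, $(\Acs)_{ik}=(\tilde{\boldsymbol{A}}\tilde{\boldsymbol{D}}^{-1})_{ik}=\tilde{a}_{ik}/\tilde{d}_k$ and $(\Ars)_{kj}=(\tilde{\boldsymbol{D}}^{-1}\tilde{\boldsymbol{A}})_{kj}=\tilde{a}_{kj}/\tilde{d}_k$. The crucial structural observation, and what distinguishes this heuristic from RA, is that the summation index $k$ plays the role of the intermediate node in \emph{both} factors, so each factor contributes one copy of $1/\tilde{d}_k$, producing $1/\tilde{d}_k^{2}$ rather than the single $1/\tilde{d}_k$ that appears in RA.

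Multiplying and summing over $k$ then gives
\begin{equation}
(\Acs\Ars)_{ij}=\sum_{k\in\mathcal{V}}\frac{\tilde{a}_{ik}}{\tilde{d}_k}\frac{\tilde{a}_{kj}}{\tilde{d}_k}=\sum_{k\in\mathcal{V}}\frac{\tilde{a}_{ik}\tilde{a}_{kj}}{\tilde{d}_k^{2}}.
\end{equation}
The final step is to observe that $\tilde{a}_{ik}\tilde{a}_{kj}=1$ exactly when $k\in\Gamma_i$ and $k\in\Gamma_j$, and vanishes otherwise; restricting the sum to the common neighborhood $\Gamma_i\cap\Gamma_j$ yields $\boldsymbol{H}_{i,j}=\sum_{k\in\Gamma_i\cap\Gamma_j}1/\tilde{d}_k^{2}=s(i,j)$, which is the claimed heuristic.

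There is no substantive obstacle in this argument, since it is a direct entrywise computation with the same structure as the RA lemma. The only point warranting care is the bookkeeping of the degree factors, namely confirming that both the column-stochastic and row-stochastic normalizations divide by the degree of the shared intermediate node $k$ (and not by $\tilde{d}_i$ or $\tilde{d}_j$), as this is precisely what produces the squared-degree weighting of common neighbors.
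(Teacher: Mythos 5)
Your proof is correct and follows essentially the same route as the paper's: it reduces the formulation to $\boldsymbol{H}=\Acs\Ars$ under the given configuration and computes the $(i,j)$ entry as $\sum_{k}\frac{\tilde{a}_{ik}}{\tilde{d}_k}\frac{\tilde{a}_{kj}}{\tilde{d}_k}$, restricted to $\Gamma_i\cap\Gamma_j$. Your added care in verifying that both normalizations divide by the intermediate node's degree $\tilde{d}_k$ is exactly the right point to check, and matches the paper's derivation.
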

\begin{proof}
    From the given configurations and the $h(i,j)$ formulation, we obtain:
    \begin{equation}
    h(i, j) = (\Acs \Ars)_{i, j} = \sum_{k \in \mathcal{V}} \frac{\tilde{a}_{i k}}{\tilde{d}_k} \frac{\tilde{a}_{k j}}{\tilde{d}_k} = \sum_{k \in \Gamma_i \cap \Gamma_j} \frac{1}{\tilde{d}_k^2}.
    \end{equation}
    Consequently, the new local heuristic is $s(i, j) = \sum_{k \in \Gamma_i \cap \Gamma_j} \frac{1}{\tilde{d}_k^2}$.
\end{proof}

Our newly proposed heuristic draws parallels with both the Resource Allocation Index (RA)~\cite{RA} and the Adamic-Adar Index (AA)~\cite{AA}, yet introduces a larger penalty for high-degree nodes.

\begin{lemma}
Given $\mA^{(1)}=\mA^{(2)}=\Asym$, $\beta^{(0)}=\beta^{(1)}=0$, $\beta^{(2)}=1$, $L=2$ within the $h(i,j)$ formulation, we derive a new local heuristic represented as 
\begin{equation}
    s(i, j)=\frac{1}{\sqrt{\tilde{d}_i \tilde{d}_j}} \sum_{k \in \Gamma_i \cap \Gamma_j} \frac{1}{\tilde{d}_k}=\frac{s_{\mathrm{RA}}(i, j)}{\sqrt{\tilde{d}_i \tilde{d}_j}}.
\end{equation}
\end{lemma}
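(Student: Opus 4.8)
The plan is to follow the exact same template used in all the preceding lemmas of this appendix: start from the stated configuration, substitute into the unified formulation of Equation~\eqref{eq:formulation}, and simplify the resulting matrix product entrywise until it matches the claimed closed form. Since $\beta^{(0)}=\beta^{(1)}=0$, $\beta^{(2)}=1$, and $L=2$, the formulation collapses to a single term $\boldsymbol{H} = \mathbb{A}^{(1)}\mathbb{A}^{(2)}$, so the task reduces to computing the $(i,j)$ entry of $\Asym \Asym = \Asym^2$.

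First I would expand the $(i,j)$ entry of the product using the definition of matrix multiplication, namely $(\Asym^2)_{i,j} = \sum_{k \in \mathcal{V}} (\Asym)_{ik}(\Asym)_{kj}$. Then I would substitute the expression $\Asym = \tilde{\boldsymbol{D}}^{-1/2}\tilde{\boldsymbol{A}}\tilde{\boldsymbol{D}}^{-1/2}$ from Table~\ref{tab:adjacency_matrices}, which gives $(\Asym)_{ik} = \tilde{a}_{ik}/\sqrt{\tilde{d}_i \tilde{d}_k}$. Plugging this in yields
\begin{equation}
(\Asym^2)_{i,j} = \sum_{k \in \mathcal{V}} \frac{\tilde{a}_{ik}}{\sqrt{\tilde{d}_i \tilde{d}_k}} \cdot \frac{\tilde{a}_{kj}}{\sqrt{\tilde{d}_k \tilde{d}_j}} = \frac{1}{\sqrt{\tilde{d}_i \tilde{d}_j}} \sum_{k \in \mathcal{V}} \frac{\tilde{a}_{ik}\tilde{a}_{kj}}{\tilde{d}_k}.
\end{equation}
The final step is to observe that the product $\tilde{a}_{ik}\tilde{a}_{kj}$ is nonzero (and equal to $1$) exactly when $k$ is a common neighbor of $i$ and $j$, so the sum over $\mathcal{V}$ restricts to $k \in \Gamma_i \cap \Gamma_j$, producing $\frac{1}{\sqrt{\tilde{d}_i \tilde{d}_j}}\sum_{k \in \Gamma_i \cap \Gamma_j}\frac{1}{\tilde{d}_k}$. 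Recognizing the inner sum as $s_{\mathrm{RA}}(i,j)$ from the RA lemma gives the second equality in the claim.

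There is essentially no hard part here — the argument is a routine entrywise computation identical in structure to the RA and LLHN lemmas already proved, with the only nuance being the symmetric normalization that pulls a common factor $1/\sqrt{\tilde{d}_i \tilde{d}_j}$ outside the summation while leaving a single $1/\tilde{d}_k$ inside. The only thing worth stating carefully is the reduction of the summation index from all of $\mathcal{V}$ to the common-neighbor set $\Gamma_i \cap \Gamma_j$, which relies on the fact that $\tilde{\boldsymbol{A}}$ is a $\{0,1\}$ matrix so that $\tilde{a}_{ik}\tilde{a}_{kj} = \mathbf{1}[k \in \Gamma_i \cap \Gamma_j]$.
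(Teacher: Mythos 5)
Your proposal is correct and follows essentially the same route as the paper's own proof: both reduce the formulation to $(\Asym^2)_{i,j}$, expand the entry as $\sum_{k \in \mathcal{V}} \frac{\tilde{a}_{ik}}{\sqrt{\tilde{d}_i \tilde{d}_k}} \frac{\tilde{a}_{kj}}{\sqrt{\tilde{d}_k \tilde{d}_j}}$, pull out the common factor $1/\sqrt{\tilde{d}_i \tilde{d}_j}$, restrict the sum to $\Gamma_i \cap \Gamma_j$, and identify the remainder with $s_{\mathrm{RA}}(i,j)$. Your explicit justification that $\tilde{a}_{ik}\tilde{a}_{kj}$ acts as the indicator of the common-neighbor set is a slightly more careful statement of a step the paper leaves implicit, but the argument is otherwise identical.
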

\begin{proof}
    From the given configurations and the $h(i,j)$ formulation, we obtain:
    \begin{equation}
    \begin{aligned}
        h(i, j)&=(\Asym^2)_{i, j}=\sum_{k \in \mathcal{V}} \frac{\tilde{a}_{i k}}{\sqrt{\tilde{d}_i \tilde{d}_k}} \frac{\tilde{a}_{k j}}{\sqrt{\tilde{d}_k \tilde{d}_j}}\\
        &=\frac{1}{\sqrt{\tilde{d}_i \tilde{d}_j}} \sum_{k \in \Gamma_i \cap \Gamma_j} \frac{1}{\tilde{d}_k}=\frac{s_{\mathrm{RA}}(i, j)}{\sqrt{\tilde{d}_i \tilde{d}_j}}.
    \end{aligned}
    \end{equation}
    Consequently, the new local heuristic is $s(i, j) = \frac{s_{\mathrm{RA}}(i, j)}{\sqrt{\tilde{d}_i \tilde{d}_j}}$.
\end{proof}

The RA Index, represented by either $ s_{\mathrm{RA}}(i, j) = (\Acs \A)_{i, j}$ or $s_{\mathrm{RA}}(i, j) = (\A \Ars)_{i, j}$, is unnormalized. However, based on our formulation, we present a symmetrically normalized version of the RA Index that potentially offers improved efficacy compared to the original RA Index.

Adjustments to the propagation operators and weight parameters also yield innovations in global heuristics. As one example, replacing the transition matrix $\Acs$ in the Random Walk with Restart (RWR)~\cite{PPR} with a symmetrically normalized matrix $\Asym$ births the Flow Propagation (FP)~\cite{FP} heuristic. A comprehensive list of novel global heuristics is beyond the scope of this section.

\section{Time complexity analysis}
\label{para:time_comp_analysis}
GCN~\cite{GCN}. The propagation cost is $\mathcal{O}(LMF)$ and the transformation cost with weight matrices $\boldsymbol{W}^{(l)}$ is $\mathcal{O}(LNF^2)$ if we do not modify feature dimensionality with $\boldsymbol{W}^{(l)}$. And the total cost is $\mathcal{O}(LF(M+NF))$.

GAT~\cite{GAT}. Denote the number of attention heads as $K$ and the average degree of nodes as $D$. Attention computation cost is $\mathcal{O}(NDF^2)$. The cost of computing one node's representation is $\mathcal{O}(ND^2F^2)$. For $K$ attention heads and $L$ layers, the overall time complexity is $\mathcal{O}(LKND^2F^2)$.

SEAL~\cite{SEAL}. Denote $E$ and $V$ as the average number of edges and vertices in the subgraphs. The complexity in constructing the enclosing subgraph is at most $\mathcal{O}(D^3)$, and the cost of computing shortest path is dependent on the algorithm, and the most efficient Dijkstras algorithm is $\mathcal{O}(V^2)$. The cost of the subgraph GNN is $\mathcal{O}(LEF)$. The algorithm need to be done for each edge, and the overall cost is $\mathcal{O}(M(V^2 + LEF))$.

NBFNet~\cite{NBFNet}. INDICATOR function takes $\mathcal{O}(NF)$ cost, MESSAGE function takes $\mathcal{O}(L(M+N)F)$ cost, and AGGREGATE function takes $\mathcal{O}(LNF^2)$ cost. The algorithm need to be done for each source node, and the total cost is $\mathcal{O}(LNF(M+NF))$.

Neo-GNN~\cite{Neo-GNN}. Denote the high-order matrix power as $L'$ and the average degree of nodes as $D$. Node structural feature computational cost is $\mathcal{O}(NDF^2)$, computing high-order matrix power up to $L'$ is $\mathcal{O}(L'N^2)$, computing output feature is $\mathcal{O}(MF)$. And Neo-GNN also needs a conventional GNN which needs a $\mathcal{O}(L(MF+NF^2))$ cost. The total cost is $\mathcal{O}(LMF+NDF^2)$.

BUDDY~\cite{BUDDY}. Denote the cost of hash operations as $H$. The preprocessing cost of BUDDY is $\mathcal{O}(LM(F+H))$. A link probability is computed by (i) extracting $L(L+2)$ structural features, which costs $\mathcal{O}(L^2 H)$; (ii) An MLP on structural and node features, which costs $\mathcal{O}(L^2 H+LF^2)$. The cost for computing probabilities for all links is $\mathcal{O}(LM(LH+F^2))$. The total cost is $\mathcal{O}(LM(LH+F^2))$.

\section{Limitation}
Our heuristic formulation does not aim to encompass all possible heuristics. A successful formulation generalizes heuristic commonalities and applies them for effective link prediction. Our formulation distills the critical characteristics of heuristics, specifically focusing on extracting common neighbors from local heuristics and global paths from global heuristics.

Upon careful consideration of the comprehensive survey~\cite{survey1,survey2}, our formulation may not contain certain heuristics with normalization operators like union, log, minimum, and maximum in the denominator. For example, heuristics like Jaccard and Adamic-Adar, due to their use of union and log operators in the denominator, cannot be directly represented as matrix multiplications. 
However, our formulation introduces normalization through the three propagation mechanisms and layer-specific learnable weights to adaptively control the degree of normalization, potentially mitigating the need for additional operators.

\end{document}